\def\eqref#1{equation~\ref{#1}}
\def\1{\bm{1}}
\DeclareMathAlphabet{\mathsfit}{\encodingdefault}{\sfdefault}{m}{sl}
\SetMathAlphabet{\mathsfit}{bold}{\encodingdefault}{\sfdefault}{bx}{n}
\newcommand{\R}{\mathbb{R}}
\newcommand{\tr}{\operatorname{tr}}
\newcommand{\loss}{\mathcal{L}}
\newcommand{\gc}{\cellcolor{gray!15}}
\newtheorem{theorem}{Theorem}[section]
\newtheorem{lemma}[theorem]{Lemma}
\definecolor{C0}{HTML}{1f77b4}
\definecolor{C1}{HTML}{ff7f0e}
\definecolor{C2}{HTML}{2ca02c}
\definecolor{C3}{HTML}{d62728}
\definecolor{C4}{HTML}{9467bd}
\definecolor{C5}{HTML}{8c564b}
\definecolor{C6}{HTML}{e377c2}
\definecolor{C7}{HTML}{7f7f7f}
\definecolor{C8}{HTML}{bcbd22}
\definecolor{C9}{HTML}{17becf}
\title{Lifting Architectural Constraints of \\Injective Flows}
\author{Peter Sorrenson\thanks{Equal contribution.},~ Felix Draxler,$\!^*$ Armand Rousselot,\\
\textbf{Sander Hummerich, Lea Zimmermann \& Ullrich Köthe} \\
Computer Vision and Learning Lab\\
Heidelberg University\\
\texttt{firstname.lastname@iwr.uni-heidelberg.de}
}
\begin{document}

\maketitle

\begin{abstract}
    Normalizing Flows explicitly maximize a full-dimensional likelihood on the training data. However, real data is typically only supported on a lower-dimensional manifold leading the model to expend significant compute on modeling noise. Injective Flows fix this by jointly learning a manifold and the distribution on it. So far, they have been limited by restrictive architectures and/or high computational cost. We lift both constraints by a new efficient estimator for the maximum likelihood loss, compatible with free-form bottleneck architectures.
    We further show that naively learning both the data manifold and the distribution on it can lead to divergent solutions, and use this insight to motivate a stable maximum likelihood training objective. We perform extensive experiments on toy, tabular and image data, demonstrating the competitive performance of the resulting model.
\end{abstract}

\section{Introduction}

Generative modeling is one of the most important tasks in machine learning, having numerous applications across vision \citep{rombach2022highresolution}, language modeling \citep{brown2020language}, science \citep{ardizzone2018analyzing,radev2022bayesflow} and beyond.  
One of the best-motivated approaches to generative modeling is maximum likelihood training, due to its favorable statistical properties \citep{hastie2009elements}. In the continuous setting, exact maximum likelihood training is most commonly achieved by normalizing flows \citep{rezende2015variational,dinh2015nice,kobyzev2021normalizing} which parameterize an exactly invertible function with a tractable change of variables (log-determinant term). This generally introduces a trade-off between model expressivity and computational cost, where the cheapest networks to train and sample from, such as coupling block architectures, require very specifically constructed functions which may limit expressivity \citep{draxler2022whitening}. In addition, normalizing flows preserve the dimensionality of the inputs, requiring a latent space of the same dimension as the data space.

Due to the manifold hypothesis \citep{bengio2013representation}, which suggests that realistic data lies on a low-dimensional manifold embedded into a high-dimensional data space, it is more efficient to only model distributions on a low-dimensional manifold and regard deviations from the manifold as uninformative noise. 
Prior works such as \cite{caterini2021rectangular,brehmer2020flows} have restricted normalizing flows to low-dimensional manifolds via specially-constructed bottleneck architectures (known as ``invertible autoencoders'' \citep{teng2019invertible} or ``injective flows'' \citep{kothari2021trumpets}) where encoder and decoder share parameters. These injective flows are optimized by some version of maximum likelihood training.
This is not an ideal design decision, as the restrictive architectures used in such models were originally designed for tractable change of variables calculations in normalizing flows, but such calculations are not possible in the presence of a bottleneck \citep{brehmer2020flows}. As a result, we propose to drop the restrictive constructions (such as coupling blocks), instead use an unconstrained encoder and decoder, and introduce a new technique to get around calculating the change of variables.
This greatly simplifies the design of the model and makes it more expressive.

\begin{figure}
    \centering
    \includegraphics[width=.7\linewidth]{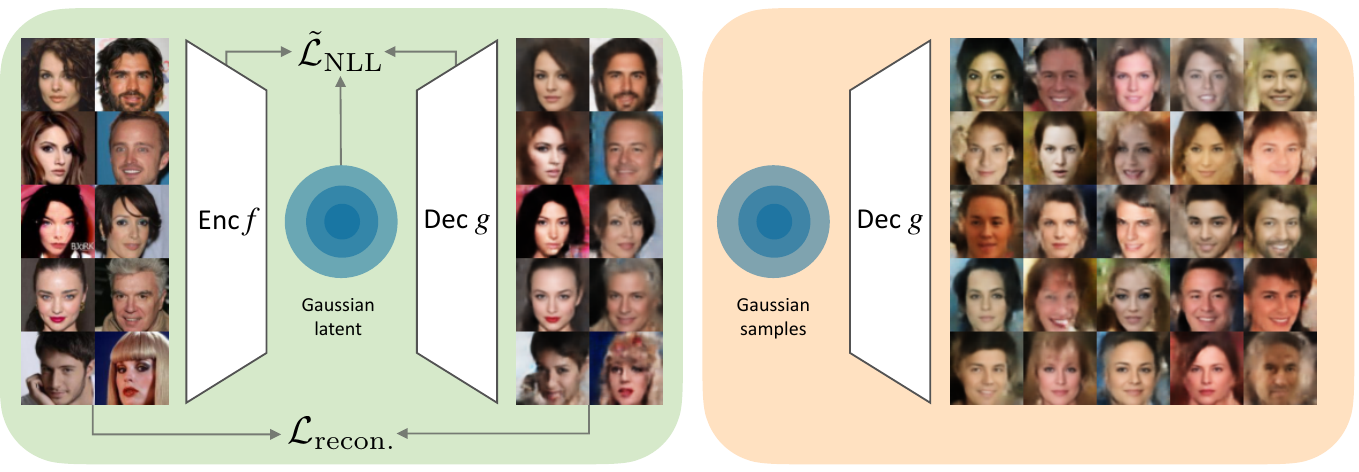}
    \caption{\textbf{Free-form injective flow (FIF) training and inference.} \textit{(Left)} We combine a reconstruction loss ${\loss}_\textrm{recon.}$ with a novel maximum likelihood loss $\tilde{\loss}_\textrm{NLL}$ to obtain an injective flow without architectural constraints. \textit{(Right)} We generate novel samples by decoding standard normal latent samples with our best-performing models on CelebA and MNIST. The reconstructions shown are on CelebA validation data, the samples are uncurated samples from our models.}
    \label{fig:best-samples}
\end{figure}

We build on the unbiased maximum likelihood estimator used by rectangular flows \citep{caterini2021rectangular} to approximate the gradient of the change of variables term. We simplify the estimator considerably by replacing iterative conjugate gradient with an efficient single-step estimator. This is fast: a batch can be processed in about twice the time (or less) as an autoencoder trained on reconstruction loss only.
In addition, we make a novel observation about injective flows: naively training with maximum likelihood is ill-defined due to the possibility of diverging curvature in the decoding function. To fix this problem, we propose a modification to our maximum likelihood estimator which counteracts the possibility of diverging curvature. We call our model the \textit{free-form injective flow} (FIF).

To summarize, we make the following contributions:
\begin{itemize}
    \item We introduce an efficient maximum likelihood estimator for free-form injective flows and use it to train an unconstrained injective flow for the first time (\cref{sec:simplifying-surrogate}).
    \item We identify pathological behavior in the naive application of maximum likelihood training in the presence of a bottleneck, and offer a solution to avoid this behavior while maintaining computational efficiency (\cref{sec:nll-problems,sec:well-behaved-loss}).
    \item We outperform previous injective flows and demonstrate competitive performance to generative autoencoders on toy, tabular and image data (\cref{sec:experiments}). 
    We provide code to implement our model and reproduce our results at \url{https://github.com/vislearn/FFF}.
\end{itemize}

\section{Related work}
\label{sec:related-work}

Injective flows jointly learn a manifold and maximize likelihood on that manifold. The latter requires estimating the Jacobian determinant of the transformation to calculate the change of variables. Efficient computation of this determinant traditionally imposed two major restrictions on normalizing flow architectures: Firstly, the latent space has to match in dimension with the data space, ruling out bottleneck architectures. Secondly, normalizing flows are restricted to certain functional forms, such as coupling and autoregressive blocks.
Below we outline the existing approaches to overcome these problems and how our solution compares.

\paragraph{Lower-dimensional latent spaces}

One set of methods attempts to use full-dimensional normalizing flows, with some additional regularization or architectural constraints such that a subspace of the latent space corresponds to the manifold. One strategy adds noise to the data to make it a full-dimensional distribution then denoises to the manifold \citep{horvat2021denoising,loaiza2023denoising}. Another restricts the non-manifold latent dimensions to have small variance \citep{beitler2021pie,silvestri2023deterministic,zhang2023spread}. 

Other methods sidestep the problem by making training into a two-step procedure. First, an autoencoder is trained on reconstruction loss, then a normalizing flow is trained to learn the resulting latent distribution. In this line of work, \cite{brehmer2020flows} and \cite{kothari2021trumpets} use an injective flow, while \cite{bohm2022probabilistic} use unconstrained networks as autoencoder. \cite{ghosh2020variational} additionally regularize the decoder.

Conformal embedding flows \citep{ross2021tractable} ensure the decomposition of the determinant into the contribution from each block by further restricting the architecture to exclusively conformal transformations. \cite{cramer2022isometric} uses an isometric autoencoder such that the change of variables is trivial. However, the resulting transformations are quite restrictive and cannot represent arbitrary manifolds.

The most similar work to ours is the rectangular flow \citep{caterini2021rectangular} which estimates the gradient of the log-determinant via an iterative, unbiased estimator. The resulting method is quite slow to train, and uses injective flows, which are restrictive.

\paragraph{Unconstrained normalizing flow architectures}

Several works attempt to reduce the constraints imposed by typical normalizing flow architectures, allowing the use of free-form networks. However, all of these methods only apply to full-dimensional architectures. FFJORD \citep{grathwohl2019ffjord} is a type of continuous normalizing flow \citep{chen2018neural} which estimates the change of variables stochastically. Residual flows \citep{behrmann2019invertible,chen2019residual} make residual networks invertible, but require expensive iterative estimators to train via maximum likelihood. Self-normalizing flows \citep{keller2021self} and relative gradient optimization \citep{gresele2020relative} estimate maximum likelihood gradients for the matrices used in neural networks, but restrict the architecture to use exclusively square weight matrices without skip connections. In a parallel work, we present the extension of the present paper to full-dimensional normalizing flows based on free-form neural networks, called free-form flows \citep{draxler2024freeform}.

\paragraph{Approximating maximum likelihood}

Many methods optimize some bound on the full-dimensional maximum likelihood, notably the variational autoencoder \citep{kingma2013auto} and its variants. \cite{cunningham2020normalizing} also optimizes a variational lower bound to the likelihood. Other methods fit into the injective flow framework by jointly optimizing a reconstruction loss and some approximation to maximum likelihood on the manifold: \cite{kumar2020regularized,zhang2020perceptual} approximate the log-determinant of the Jacobian by its Frobenius norm. The entropic AE \citep{ghose2020batch} maximizes the entropy of the latent distribution by a nearest-neighbor estimator while constraining its variance, resulting in a Gaussian latent space. In addition, there are other ways to regularize the latent space of an autoencoder which are not based on maximum likelihood, e.g.~adversarial methods \citep{makhzani2015adversarial}.

In contrast to the above, our approach jointly learns the manifold and maximizes the likelihood on it with an unconstrained architecture, which easily accommodates a lower-dimensional latent space.

\section{Background} \label{sec:autoencoders-rectangular-flows}

\paragraph{Notation}

Let $f: \mathbb{R}^D \rightarrow \mathbb{R}^d$ be an encoder which compresses data to a latent space and a decoder $g: \mathbb{R}^d \rightarrow \mathbb{R}^D$ which decompresses the latent representation. A full-dimensional model has $d=D$ while a bottleneck model has $d<D$. If $f \circ g: \mathbb{R}^d \to \mathbb{R}^d$ is the identity, then we call $f$ and $g$ \textit{consistent}. For example, the forward and inverse function of a normalizing flow are consistent as $f^{-1} = g$.

\paragraph{Injective flows}

Injective flows \citep{brehmer2020flows}, also called invertible autoencoders \citep{teng2019invertible}, adapt invertible normalizing flow architectures to a bottleneck setting. They parameterize $f$ and $g$ as the composition of two invertible functions, $w$ defined in $\R^D$ and $h$ defined in $\R^d$, with a slicing/padding operation in between:
\begin{equation}
    f = h^{-1} \circ \texttt{slice} \circ w^{-1} \quad \textrm{and} \quad g = w \circ \texttt{pad} \circ h ,
\end{equation}
where $\texttt{slice}(x)$ selects the first $d$ elements of $x$ and $\texttt{pad}(z)$ concatenates $D - d$ zeros to the end of $z$. Since $\texttt{slice}$ and $\texttt{pad}$ are consistent, so too are $f$ and $g$.  

Injective flows typically minimize a reconstruction loss (to learn a manifold which spans the data) alongside maximizing the likelihood of the data on that manifold, performing joint manifold and maximum likelihood training (alternatively, they are trained via two-step training, see \cref{sec:related-work})

\paragraph{Change of variables across dimensions}

The maximum likelihood objective resulting from the change of variables theorem, used to train normalizing flow models, is only well-defined when mapping between spaces of equal dimension. 
A result from differential geometry \citep{krantz2008geometric} allows us to generalize the change of variables theorem to non-equal-dimension transformations through the formula:
\begin{equation}\label{eq:change-of-var}
    p_X(x) = p_Z(f(x)) \left(\det \big[ g'(f(x))^\top g'(f(x))\big] \right)^{-\frac{1}{2}},
\end{equation}
where $f$ and $g$ are consistent and primes denote derivatives: $g'(f(x))$ is the Jacobian matrix of $g$ evaluated at $f(x)$. Note that, since $p_X$ is derived as the pushforward of the latent distribution $p_Z$ by $g$, the formula is valid only for $x$ which lie on the decoder manifold (see \cref{app:change-of-var} for more details). Unlike in the full-dimensional case, there is no architecture known which can represent arbitrary manifolds and at the same time allows efficient exact computation of \cref{eq:change-of-var} (see \cref{sec:related-work}).

\paragraph{Rectangular flows}

Minimizing the negative logarithm of \cref{eq:change-of-var} and adding a Lagrange multiplier to restrict the distance of data points from the decoder manifold results in the following per-sample loss term:
\begin{equation}\label{eq:rf-loss}
    \loss_\textrm{RF}(x) = -\log p_Z(z) + \frac{1}{2} \log \det \left[ g'(z)^\top g'(z) \right] + \beta \lVert \hat x - x \rVert^2,
\end{equation}
where $z = f(x)$, $\hat x = g(z)$, and $\beta$ is a hyperparameter.

The log-determinant term is the difficult one to optimize. Fortunately, its gradient with respect to the parameters $\theta$ of the decoder can be estimated tractably by the following construction \citep{caterini2021rectangular}. Note that $g = g_\theta$ but the $\theta$ subscript is dropped to avoid clutter. The relevant quantity is (with $J = g'(z)$):
\begin{equation}\label{eq:log-det-deriv-original}
    \frac{\partial}{\partial \theta_j} \frac{1}{2} \log \det (J^\top J) = \frac{1}{2} \tr \left( (J^\top J)^{-1} \frac{\partial}{\partial \theta_j} (J^\top J) \right).
\end{equation}
The trace can be estimated \citep{hutchinson1989stochastic,girard1989fast} by:
\begin{equation}
    \frac{\partial}{\partial \theta_j} \frac{1}{2} \log \det (J^\top J) \approx \frac{1}{2K} \sum_{k=1}^K \underbrace{\epsilon_k^\top (J^\top J)^{-1}}_{\text{($**$)}} \frac{\partial}{\partial \theta_j} \underbrace{(J^\top J) \epsilon_k}_\text{($*$)},
\end{equation}
where the $\epsilon_k$ are $K$ samples from a distribution where $E[\epsilon \epsilon^\top] = \mathbb{I}$, typically either Rademacher or standard normal. Now, we take steps to compute the above without constructing the entire network Jacobian $J$. First, note that ($*$) can be written as a Jacobian-vector product $v_1 = Jv$ and a vector-Jacobian product $J^T v_1 = (v_1^\top J)^\top$, each readily available via automatic differentiation. For ($**$), \citet{caterini2021rectangular} propose to employ the iterative conjugate gradient method: Write $\epsilon_k^\top (J^\top J)^{-1} = \texttt{CG} ( J^\top J; \epsilon_k )^\top$ where $\texttt{CG}(A;b)$ denotes the conjugate gradient solution to $Ax = b$. Conjugate gradient is a clever choice here, since it again only requires computing terms of the form $J^\top J v$ using autodiff.
The parameter derivative can be made to act only on the rightmost Jacobian terms by applying the \texttt{stop\_gradient} operation to the output of the conjugate gradient method, which returns its input, but has zero gradient. The final surrogate for the log-determinant term is therefore:
\begin{equation}
    \frac{1}{2K} \sum_{k=1}^K \texttt{stop\_gradient} \left( \texttt{CG} \left( J^\top J; \epsilon_k \right)^\top \right) J^\top J \epsilon_k,
\end{equation}
which replaces the log-determinant term in the loss. This computation yields the same gradient as the original loss in \cref{eq:rf-loss}, even though it has a different value. In the following, we present a significantly improved version of this estimator based on the insight that the encoder is an (approximate) inverse of the decoder.

\section{Free-form injective flow (FIF)}

Our modification to rectangular flows is threefold: first, we use an unconstrained autoencoder architecture (no restrictively parameterized invertible functions); second, we introduce a more computationally efficient surrogate estimator; third, we modify the surrogate to avoid pathological behavior related to manifolds with high curvature. Our per-sample loss function is:
\begin{equation}\label{eq:our-loss}
    \loss(x) = -\log p_Z(z) - \frac{1}{K} \sum_{k=1}^K \epsilon_k^\top f'(x)\, \texttt{stop\_gradient} \left( g'(z) \epsilon_k \right) + \beta \lVert \hat x - x \rVert^2,
\end{equation}
with $z = f(x)$.
Note the negative sign before the surrogate term, which comes from sending the log-determinant gradient to the encoder rather than the decoder.
We will derive and motivate this formulation of the loss in \cref{sec:simplifying-surrogate,sec:nll-problems}.

\subsection{Simplifying the surrogate} \label{sec:simplifying-surrogate}

We considerably simplify the optimization of rectangular flows by a new surrogate for the log-determinant term in \cref{eq:change-of-var}, which uses the Jacobian of the encoder as an approximation for the inverse Jacobian of the decoder. This allows the surrogate to be computed in a single pass, avoiding costly conjugate gradient iterations.

We do this by expanding the derivative in \cref{eq:log-det-deriv-original}:
\begin{equation} \label{eq:sur-pseudo-inv-simpl}
    \frac{1}{2} \tr \left( (J^\top J)^{-1} \frac{\partial}{\partial \theta_j} (J^\top J) \right) = \tr \left( J^\dagger \frac{\partial}{\partial \theta_j} J \right),
\end{equation}
where $J^\dagger = (J^\top J)^{-1} J^\top$ is the Moore-Penrose inverse of $J$. The full derivation is in \cref{app:gradient-est}. To see the advantage of this formulation consider that for encoder $f$ and decoder $g$ optimal with respect to the reconstruction loss, $f'(\hat x) = g'(f(x))^\dagger$ (see \cref{app:optimality-recon-loss}). Using the $\texttt{stop\_gradient}$ operation, this leads to the following surrogate loss term:
\begin{equation} \label{eq:log-det-grad-decoder}
    \frac{1}{K} \sum_{k=1}^K \texttt{stop\_gradient} \left( \epsilon_k^\top f'(\hat x) \right) g'(z) \epsilon_k,
\end{equation}
or equivalently, using the encoder Jacobian in place of the decoder Jacobian (see \cref{app:gradient-est}):
\begin{equation} \label{eq:log-det-grad-encoder}
    -\frac{1}{K} \sum_{k=1}^K \epsilon_k^\top f'(\hat x) \texttt{stop\_gradient} \left( g'(z) \epsilon_k \right).
\end{equation}
Each term of the sum can be computed from just two vector-Jacobian/Jacobian-vector products obtained from automatic differentiation. This is a significant improvement on the iterative conjugate gradient method needed in the original formulation of rectangular flows which requires up to $2(d + 1)$ such products to ensure convergence \citep{caterini2021rectangular}. Compared to reconstruction loss only, we measure $\sim 1.5 \times$ to $2 \times$ the wall clock time, independent of the latent dimension. Note that the surrogate is only accurate if $f$ and $g$ are (at least approximately) optimal with respect to the reconstruction loss. We observe stable training in practice, validating this assumption.

\subsection{Problems with maximum likelihood in the presence of a bottleneck} 
\label{sec:nll-problems}

Rectangular flows are trained with a combination of a reconstruction and a likelihood term. We might ask what happens if we only train with the likelihood term, making an analogy to normalizing flows. In this case our loss would be:
\begin{equation}
    \label{eq:naive-nll}
    \loss_\textrm{NLL}(x) = -\log p_Z(z) + \frac{1}{2} \log \det \left[ g'(z)^\top g'(z) \right].
\end{equation}
Unfortunately, optimizing this loss can lead us to learn a degenerate decoder manifold, an issue raised in \cite{brehmer2020flows}. Here we expand on their argument. First consider that if $f$ and $g$ are consistent, then $f(\hat x) = f(g(f(x))) = f(x)$ and the per-sample loss is invariant to projections: 
$\loss_\textrm{NLL}(\hat x) = \loss_\textrm{NLL} (x)$,
since $\loss_\textrm{NLL}(x)$ is a function only of $f(x)$ . This means that we can write our loss as:
\begin{equation}
    \loss_\textrm{NLL} = E_{p_{\textrm{data}}(x)}[ \loss_\textrm{NLL}(x) ] = E_{\hat p_{\textrm{data}}(\hat x)}[ \loss_\textrm{NLL}(\hat x) ],
\end{equation}
where $\hat p_{\textrm{data}}(\hat x)$ is the probability density of the projection of the training data onto the decoder manifold.
Now consider that the negative log-likelihood loss is one part of a KL divergence, and KL divergences are always non-negative:
\begin{equation}
    \operatorname{KL}(\hat p_{\textrm{data}}(\hat x) \| p_\theta(\hat x)) = -H(\hat p_{\textrm{data}}(\hat x)) - E_{\hat p_{\textrm{data}}(\hat x)} [ \log p_\theta(\hat x) ] \geq 0.
\end{equation}
As a result, the loss is lower bounded by the entropy of the data projected onto the manifold:
\begin{equation}
    \loss_\textrm{NLL} = -E_{\hat p_{\textrm{data}}(\hat x)} [ \log p_\theta(\hat x) ] \geq H(\hat p_{\textrm{data}}(\hat x)).
\end{equation}
Unlike in standard normalizing flow optimization, where the right hand side would be fixed, here the entropy depends on the projection learned by the model. Thus, the model could modify the projection such that entropy is as low as possible. We break this pathology down into two cases:
\begin{enumerate}
    \item \textit{A model manifold which does not align with the data manifold but instead intersects it}. For example, \citet{brehmer2020flows} discuss a case where a linear model learns to project a data distribution to a single point on the manifold, thus reducing its entropy to $-\infty$, the lowest possible value. To the best of our knowledge, this can be fixed by adding noise and a reconstruction loss with sufficiently high weight. In \cref{app:linear-model} we prove as much for linear models and characterize the solutions, which are the same as PCA if $\beta \geq 1/2\sigma^2$ where $\sigma^2$ is the smallest eigenvalue of the data covariance matrix.
    \item \textit{A model manifold which concentrates the data by use of high curvature,} see \cref{fig:projection-curved-manifold} \textit{(left)}.
    This newly identified pathological case only occurs in nonlinear models, hence \citet{brehmer2020flows} did not notice this effect in their linear example. Importantly, this is \textbf{not} fixed by adding a reconstruction loss.
\end{enumerate}
Most existing injective flows avoid this by a two-stage training, which first learns a projection and then the distribution of the projected data in the latent space. To enable jointly learning a manifold and a maximum-likelihood density on it, we need to find a fix for the pathology.

\begin{figure}
    \centering
    \includegraphics[width=\linewidth]{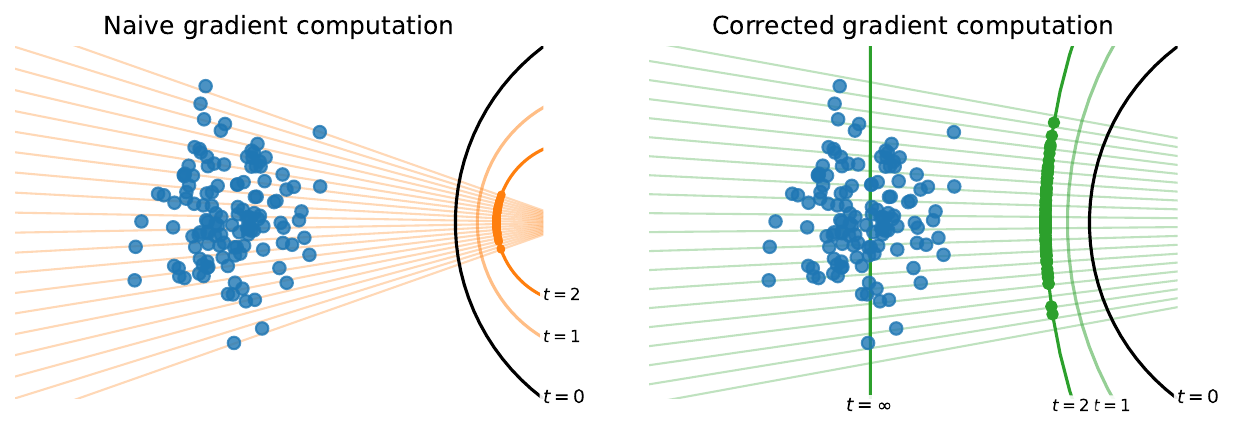}
    \caption{Naive training of autoencoders with negative log-likelihood (NLL, see \cref{sec:nll-problems}) leads to pathological solutions \textit{(left)}. Starting with the initialization ($t=0$, black), gradient steps increase the curvature of the learnt manifold ($t=1, 2$, {\color{C1} orange}). This reduces NLL because the entropy of the projected data is reduced, by moving the points closer to one another. This effect is stronger than the reconstruction loss.
    We fix this problem by evaluating the volume change off-manifold \textit{(right)}. This moves the manifold closer to the data and reduces the curvature ($t=1, 2$, {\color{C2} green}), until it eventually centers the manifold on the data with zero curvature ($t = \infty$, {\color{C2} green}). Light lines show the set of points which map to the same latent point. Data is projected onto the $t = 2$ manifold.}
    \label{fig:projection-curved-manifold}
\end{figure}

\paragraph{Towards a well-behaved loss} \label{sec:well-behaved-loss}

The term which leads to pathological behavior in the likelihood loss is the log-determinant. When using the change of variables with $f'$ evaluated at $\hat x$, all that matters is the change of volume from the projected data to the latent space, so the model can decrease the loss by choosing a manifold which concentrates the projected data more tightly (the more possibility it has to expand the data, the lower the loss will be).
We can counteract this effect by introducing a factor inversely proportional to the concentration.
This can be achieved by the fairly simple modification of evaluating $f'$ in our estimator at $x$ rather than $\hat x$. Namely, we modify \cref{eq:log-det-grad-encoder} to estimate the gradient of the log-determinant term by:
\begin{equation} \label{eq:well-behaved-nll}
    -\frac{1}{K} \sum_{k=1}^K \epsilon_k^\top f'(x) \texttt{stop\_gradient} \left( g'(z) \epsilon_k \right).
\end{equation}
See \cref{app:modified-estimator} for a detailed explanation.

In this way, we discourage pathological solutions involving high curvature. In \cref{fig:projection-curved-manifold} (\textit{right}) we can see the effect of the modified estimator: the manifold now moves towards the data since the optimization is not dominated by diverging curvature. We note that the modified estimator is also computationally cheaper, since the vector-Jacobian product $\epsilon_k^\top f'(x)$ can reuse the computational graph generated when computing $z$.

Along with the results of \cref{sec:simplifying-surrogate}, this leads to the following loss (same as \cref{eq:our-loss}):
\begin{align}
    \loss(x) &
    = \tilde{\loss}_{\textrm{NLL}}(x) + \beta \loss_{\mathrm{recon.}}(x) \\&
    = -\log p_Z(z) - \frac{1}{K} \sum_{k=1}^K \epsilon_k^\top f'(x)\, \texttt{stop\_gradient} \left( g'(z) \epsilon_k \right) + \beta \lVert \hat x - x \rVert^2.
\end{align}

\paragraph{Phase transition}
\Cref{fig:transition_point_sine} shows that when using this loss, if $\beta$ is large enough, the dominant manifold direction is identified. In \cref{app:conditional-mnist}, we show a similar experiment on MNIST.

\section{Experiments}
\label{sec:experiments}

In this section, we test the empirical performance of the proposed model. First, we show that our model is much faster than rectangular flows on tabular data.
Second, we show that it outperforms previous SOTA injective flows on generating images.
Finally, we compare against other generative autoencoders on the Pythae image generation benchmark \cite{chadebec2022pythae}, achieving the best FID score in some categories.

\paragraph{Implementation details}
In implementing the trace estimator, we have to make a number of choices. Briefly, i) we chose to formulate the log-determinant gradient in terms of the encoder rather than decoder as it was more stable in practice, ii) we performed traces in the order $f'(x)g'(z)$ as this reduces variance (both orderings are valid due to the cyclic property of the trace but since $f'(x)g'(z)$ is a $d \times d$ matrix whereas $g'(z)f'(x)$ is $D \times D$, the former is typically easier to estimate), iii) we used a mixture of forward- and backward-mode automatic differentiation as this was compatible with our estimator, and iv) we used orthogonalized Gaussian noise in the trace estimator, to reduce variance. Full justification for these choices is given in \cref{app:implementation}.

\begin{figure}[t]
    \centering
    \includegraphics[width=\textwidth]{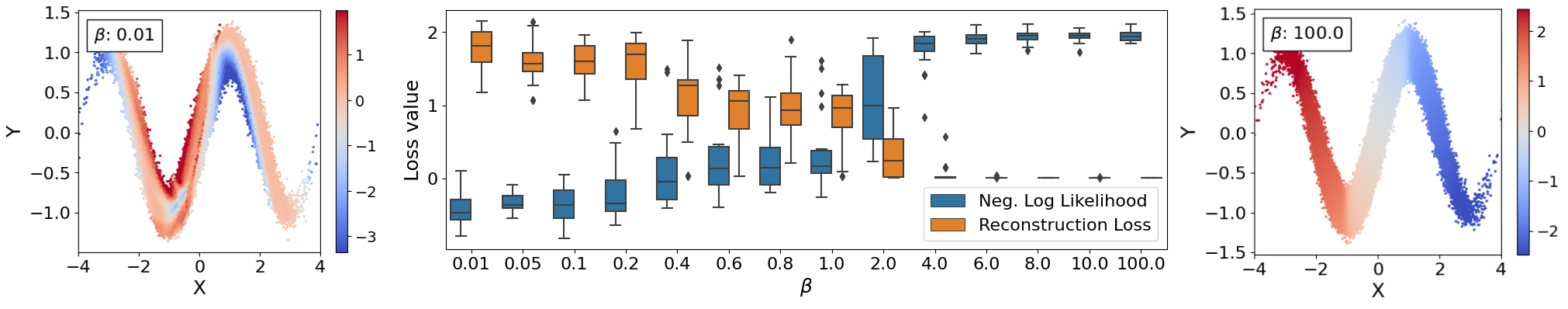}
    \caption{Learning a noisy 2-D sinusoid with a 1-D latent space for different reconstruction weights $\beta$. Color codes denote the value of the latent variable at each location. When the reconstruction term has low weight (\textit{left}), the autoencoder learns to throw away information about the position along the sinusoid and only retains the orthogonal noise. Only sufficiently high weights (\textit{right}) result in the desired solution, where the decoder spans the sinusoid manifold. The middle plot shows the tradeoff between reconstruction error and NLL as we transition between these regimes (box plots indicate variability across runs).}
    \label{fig:transition_point_sine}
\end{figure}

\label{sec:model-comparison}

\subsection{Tabular Data}

\begin{table}[t]
\caption{Free-form injective flows (FIF) are significantly faster than rectangular flows (RF) with superior performance in FID-like metric on 3 out of 4 tabular datasets \citep{papamakarios2017masked}. Both methods use $K=1$. The results for RF are taken directly from \citep{caterini2021rectangular}.}
\centering
\begin{tabular}{@{}lllll@{}}
\toprule
 Method & POWER & GAS & HEPMASS & MINIBOONE \\ 
\midrule
RF \citep{caterini2021rectangular} & 0.083 $\pm$ 0.015 & \textbf{0.110 $\pm$ 0.021} & 0.779 $\pm$ 0.191 & 1.001 $\pm$ 0.051 \\
FIF (\textit{ours}) & \textbf{0.041 $\pm$ 0.007} & 0.281 $\pm$ 0.031 & \textbf{0.541 $\pm$ 0.034} & \textbf{0.598 $\pm$ 0.024} \\
\midrule
Training Time Speedup & \textbf{3.9 $\times$} & \textbf{2.2 $\times$} & \textbf{6.1 $\times$} & \textbf{1.5 $\times$} \\ 
\bottomrule
\end{tabular}
\label{tab:tabular-data}
\end{table}

We evaluate our method on four of the tabular datasets used by \cite{papamakarios2017masked}, using the same data splits, and make a comparison to the published rectangular flow results \citep{caterini2021rectangular}, see \cref{tab:tabular-data}. We adopt the ``FID-like metric'' from that work, which computes the Wasserstein-2 distance between the Gaussian distributions with equal mean and covariance as the test data and the data generated by the model. This is a measure of the difference of the means and covariance matrices of the generated and test datasets. We outperform rectangular flows on all datasets except GAS. In addition, we see a speedup in training time of between 1.5 and 6 times between FIF and a rerun of rectangular flows (using the published code) on the same hardware. Full experimental details are in \cref{app:experiments-tabular}.

We also show an ablation study in \cref{tab:tabular-ablation} and \cref{tab:tabular-ablation-reconstruction-errors} in the appendix disentangling the effect of the three individual components of our method:
The surrogate of \cref{eq:log-det-grad-encoder}, the fix for high curvature solutions in \cref{eq:well-behaved-nll} and the use of free-form architectures. Comparing to \cref{tab:tabular-data}, we find that the fix to estimate the negative log-likelihood with an off-manifold encoder Jacobian is crucial for good performance of free-from architectures, as the on-manifold variant diverges (see \cref{sec:well-behaved-loss}). This is not the case for the injective architecture based on coupling flows, indicating a stabilizing regularization via the architecture.

\subsection{Comparison to injective flows}

We compare FIF against previous injective flows on CelebA images \citep{liu2015deep} in \cref{tab:injective-comparison}. Our models significantly improve the quality of the generated images in terms of the Fréchet inception distance (FID) \cite{heusel2017gans} and Inception Score (IS) \cite{salimans2016improved}. The former compares generated samples to a set of reference samples, by computing the Wasserstein-2 distance between two Gaussian distributions fit to some embedding of the respective sets of samples. The later measures diversity by the entropy of the distribution of class labels in the generated samples, where the class labels are provided by some pre-trained classifier. Samples from this model are depicted in \cref{fig:best-samples}.

For a fair comparison, we train each model on the same hardware for equal wall clock time with the code provided by the authors. The architectures of previous works were dominated by the need that most layers are invertible and have a tractable Jacobian determinant. Our loss in \cref{eq:our-loss} does not impose these constraints on the architecture, and we can use an off-the-shelf convolutional auto-encoder with additional fully-connected layers in the latent space. Details can be found in \cref{app:experiments-injective}. 

\begin{table}[b]
    \centering
    \caption{\textbf{Comparison of injective flows on CelebA} under equal computational budget. Free-form Injective Flows (FIF) outperform previous work significantly in terms of FID.}
    \label{tab:injective-comparison}
        \begin{tabular}{lccccc}
            \toprule
            \multirow{2}{*}{Model} & \multirow{2}{*}{\# parameters} & \multicolumn{2}{c}{$\mathcal{N}$ sampler} & \multicolumn{2}{c}{GMM sampler}  \\
            && FID $\downarrow$  & IS $\uparrow$  & FID $\downarrow$ & IS $\uparrow$ \\
            \midrule
            DNF \citep{horvat2021denoising} & 39.4M & 55.6 $\pm$ 0.59 & 1.9 & 52.7 $\pm$ 0.33 & 2.0 \\
            Trumpet \citep{kothari2021trumpets} & 19.1M & 56.2 $\pm$ 1.39 & 1.8 & 47.7 $\pm$ 2.24 & 1.9 \\
            FIF \textit{(ours)} & 34.3M & \textbf{47.3 $\pm$ 1.39} & 1.7 & \textbf{37.4 $\pm$ 1.35} & 2.0 \\
            \bottomrule
        \end{tabular}
\end{table}

\subsection{Comparison to generative autoencoders}

As free-form injective flows (FIF) do not require any specific architecture, we expand our comparison to the much broader range of \textit{generative autoencoders}.
This is a general class of bottleneck architectures that encode the training data to a standard normal distribution, so that the decoder can be used as a generator after training. 

Recently, \citet{chadebec2022pythae} proposed the Pythae benchmark for comparing generative autoencoders on image generation. They evaluate different training methods using two different architectures on MNIST \citep{lecun2010mnist} (data $D=784$, latent $d=16$), CIFAR10 \citep{krizhevsky2009learning} ($D=3072, d=256$), and CelebA \citep{liu2015deep} ($D=12288, d=64$). All models are trained with the same limited computational budget. The goal of the benchmark is to provide a fair comparison of different models, not to achieve SOTA image generation results, as this would require significantly more compute.

As shown in \cref{tab:benchmark_results_short}, our model performs strongly on the benchmark, achieving SOTA on CelebA in Fréchet Inception Distance (FID) \citep{heusel2017gans} on the ResNet architecture with latent codes sampled from a standard normal, and on both architectures when sampling from a Gaussian Mixture Model fit using training data. At the same time, the Inception Scores (IS) \citep{salimans2016improved} are high, indicating a high diversity. On the one combination where our model does not outperform the competitors, FIF still achieves a comparable FID and high Inception Score. FIF also performs strongly on the other datasets, see \cref{app:benchmark}.

For each method in the benchmark, ten hyperparameter configurations are trained and the best model according to FID is reported. For our method, we choose to vary $\beta = 5, 10, 15, 20, 25$ and the number of Hutchinson samples $K=1, 2$. We find the performance to be robust against these choices, and give all details on the training procedure in \cref{app:benchmark}. %

\begin{table}[t]
    \centering
    \tiny
    \caption{\textbf{Pythae benchmark results on CelebA}, following \cite{chadebec2022pythae}. We train their architectures (ConvNet and ResNet) with our new training objective, achieving SOTA FID on ResNet. We draw latent samples from standard normal ``$\mathcal N$'' or a GMM fit using training data ``GMM''. Models with multiple variants (indicated in brackets) have been merged to indicate only the best result across variants. We mark the best FID in each column in bold and underline the second best. %
    }
    \label{tab:benchmark_results_short}
    \resizebox{\textwidth}{!}{%
    \begin{tabular}{l|cccc|cccc}
    \toprule
        \multirow{2}{*}{Model} & \multicolumn{2}{c}{ConvNet + $\mathcal{N}$} & \multicolumn{2}{c|}{ResNet +  $\mathcal{N}$} & \multicolumn{2}{c}{ConvNet + GMM} & \multicolumn{2}{c}{ResNet + GMM}  \\
        & FID $\downarrow$  & IS $\uparrow$  & FID & IS  & FID  & IS & FID & IS \\
        \gc VAE \citep{kingma2013auto}	& \gc \underline{54.8} & \gc  1.9 & \gc  66.6 & \gc  1.6 & \gc  52.4 & \gc  1.9 & \gc  63.0 & \gc  1.7 \\
        IWAE \citep{burda2015importance} & 55.7 & 1.9 & 67.6 & 1.6 & 52.7 & 1.9 & 64.1 & 1.7 \\
        \gc VAE-lin NF \citep{rezende2015variational} & \gc  56.5 & \gc  1.9 & \gc  67.1 & \gc  1.6 & \gc  53.3 & \gc  1.9 & \gc  62.8 & \gc  1.7 \\
        VAE-IAF \citep{kingma2016improved} & 55.4 & 1.9 & 66.2 & 1.6 & 53.6 & 1.9 & 62.7 & 1.7 \\
        \gc $\beta$-(TC) VAE \citep{higgins2017beta,chen2018isolating} & \gc  55.7 & \gc  1.8 & \gc  65.9 & \gc  1.6 & \gc  51.7 & \gc  1.9 & \gc  59.3 & \gc  1.7 \\
        FactorVAE \citep{kim2018disentangling} & \textbf{53.8} & 1.9 & 66.4 & 1.7 & 52.4 & 2.0 & 63.3 & 1.7 \\
        \gc InfoVAE - (RBF/IMQ) \citep{zhao2017infovae} & \gc  55.5 & \gc  1.9 & \gc  66.4 & \gc  1.6 & \gc  52.7 & \gc  1.9 & \gc  62.3 & \gc  1.7 \\
        AAE \citep{makhzani2015adversarial} & 59.9 & 1.8 & \underline{64.8} & 1.7 & 53.9 & 2.0 & 58.7 & 1.8 \\
        \gc MSSSIM-VAE \citep{snell2017learning} & \gc  124.3 & \gc  1.3 & \gc  119.0 & \gc  1.3 & \gc  124.3 & \gc  1.3 & \gc  119.2 & \gc  1.3 \\
        Vanilla AE & 327.7 & 1.0 & 275.0 & 2.9 & 55.4 & 2.0 & \underline{57.4} & 1.8 \\
        \gc WAE - (RBF/IMQ) \citep{tolstikhin2018wasserstein} & \gc  64.6 & \gc  1.7 & \gc  67.1 & \gc  1.6 & \gc  51.7 & \gc  2.0 & \gc  57.7 & \gc  1.8 \\
        VQVAE \citep{vandenoord2017neural} & 306.9 & 1.0 & 140.3 & 2.2 & \underline{51.6} &2.0 & 57.9 & 1.8 \\
        \gc RAE - (L2/GP) \cite{ghosh2020variational} & \gc  86.1 & \gc  2.8 & \gc  168.7 & \gc  3.1 & \gc  52.5 & \gc  1.9 & \gc  58.3 & \gc  1.8 \\
        FIF \textit{(ours)} & 56.9 & 2.1 & \textbf{62.3} & 1.7 & \textbf{47.3} & 1.9 & \textbf{55.0} & 1.8 \\
    \bottomrule
    \end{tabular}
    }
\end{table}

\section{Conclusion}

This paper offers a computationally efficient solution to jointly learning a manifold and a distribution on it, which we call the free-form injective flow (FIF). We i) significantly improve an existing estimator for the gradient of the change of variables across dimensions, ii) note that it can be applied to unconstrained architectures, iii) analyze problems with joint manifold and maximum-likelihood training and offer a solution, and iv) implement and test our model on toy, tabular and image datasets. We find that the model is practical and scalable, outperforming comparable injective flows, and showing similar or better performance to other autoencoder generative models on the Pythae benchmark. 

Several theoretical and practical questions remain for future work:
We identified a previously overlooked problem with jointly learning a manifold and maximum likelihood. We propose a fix in \cref{sec:well-behaved-loss} that provides high-quality models, but further investigation is needed for a thorough understanding.
Fitting a GMM to the latent space after training improves performance on image data, suggesting that our latent distributions are not perfectly Gaussian. We generally find that architectures with more fully-connected layers in the latent space have a more Gaussian latent distribution, suggesting that larger models suffer less from this problem. %
We leave potential theoretical or practical improvements to future work.

\subsubsection*{Acknowledgements}

This work is supported by Deutsche Forschungsgemeinschaft (DFG, German Research Foundation) under Germany's Excellence Strategy EXC-2181/1 - 390900948 (the Heidelberg STRUCTURES Cluster of Excellence). It is also supported by the Vector Stiftung in the project TRINN (P2019-0092). AR acknowledges funding from the Carl-Zeiss-Stiftung. LZ ackknowledges support by the German Federal Ministery of Education and Research (BMBF) (project EMUNE/031L0293A).
We thank the Center for Information Services and High Performance Computing (ZIH) at TU Dresden for its facilities for high throughput calculations.
The authors acknowledge support by the state of Baden-Württemberg through bwHPC and the German Research Foundation (DFG) through grant INST 35/1597-1 FUGG.

\bibliography{references}

\begin{thebibliography}{64}
\providecommand{\natexlab}[1]{#1}
\providecommand{\url}[1]{\texttt{#1}}
\expandafter\ifx\csname urlstyle\endcsname\relax
  \providecommand{\doi}[1]{doi: #1}\else
  \providecommand{\doi}{doi: \begingroup \urlstyle{rm}\Url}\fi

\bibitem[Abadi et~al.(2015)Abadi, Agarwal, Barham, Brevdo, Chen, Citro,
  Corrado, Davis, Dean, Devin, Ghemawat, Goodfellow, Harp, Irving, Isard, Jia,
  Jozefowicz, Kaiser, Kudlur, Levenberg, Man{\'e}, Monga, Moore, Murray, Olah,
  Schuster, Shlens, Steiner, Sutskever, Talwar, Tucker, Vanhoucke, Vasudevan,
  Vi{\'e}gas, Vinyals, Warden, Wattenberg, Wicke, Yu, and
  Zheng]{abadi2015tensorflow}
Mart{\'i}n Abadi, Ashish Agarwal, Paul Barham, Eugene Brevdo, Zhifeng Chen,
  Craig Citro, Greg~S. Corrado, Andy Davis, Jeffrey Dean, Matthieu Devin,
  Sanjay Ghemawat, Ian Goodfellow, Andrew Harp, Geoffrey Irving, Michael Isard,
  Yangqing Jia, Rafal Jozefowicz, Lukasz Kaiser, Manjunath Kudlur, Josh
  Levenberg, Dandelion Man{\'e}, Rajat Monga, Sherry Moore, Derek Murray, Chris
  Olah, Mike Schuster, Jonathon Shlens, Benoit Steiner, Ilya Sutskever, Kunal
  Talwar, Paul Tucker, Vincent Vanhoucke, Vijay Vasudevan, Fernanda Vi{\'e}gas,
  Oriol Vinyals, Pete Warden, Martin Wattenberg, Martin Wicke, Yuan Yu, and
  Xiaoqiang Zheng.
\newblock {{TensorFlow}}: {{Large-scale}} machine learning on heterogeneous
  systems, 2015.

\bibitem[Ardizzone et~al.(2018)Ardizzone, Kruse, Rother, and
  K{\"o}the]{ardizzone2018analyzing}
Lynton Ardizzone, Jakob Kruse, Carsten Rother, and Ullrich K{\"o}the.
\newblock Analyzing {{Inverse Problems}} with {{Invertible Neural Networks}}.
\newblock In \emph{International {{Conference}} on {{Learning
  Representations}}}, 2018.

\bibitem[Behrmann et~al.(2019)Behrmann, Grathwohl, Chen, Duvenaud, and
  Jacobsen]{behrmann2019invertible}
Jens Behrmann, Will Grathwohl, Ricky~TQ Chen, David Duvenaud, and
  J{\"o}rn-Henrik Jacobsen.
\newblock Invertible residual networks.
\newblock In \emph{International Conference on Machine Learning}, 2019.

\bibitem[Beitler et~al.(2021)Beitler, Sosnovik, and Smeulders]{beitler2021pie}
Jan~Jetze Beitler, Ivan Sosnovik, and Arnold Smeulders.
\newblock {{PIE}}: {{Pseudo-invertible}} encoder.
\newblock \emph{arXiv:2111.00619}, 2021.

\bibitem[Bengio et~al.(2013)Bengio, Courville, and
  Vincent]{bengio2013representation}
Yoshua Bengio, Aaron Courville, and Pascal Vincent.
\newblock Representation learning: {{A}} review and new perspectives.
\newblock \emph{IEEE transactions on pattern analysis and machine
  intelligence}, 35\penalty0 (8):\penalty0 1798--1828, 2013.

\bibitem[B{\"o}hm \& Seljak(2022)B{\"o}hm and Seljak]{bohm2022probabilistic}
Vanessa B{\"o}hm and Uro{\v s} Seljak.
\newblock Probabilistic auto-encoder.
\newblock \emph{Transactions on Machine Learning Research}, 2022.

\bibitem[Brehmer \& Cranmer(2020)Brehmer and Cranmer]{brehmer2020flows}
Johann Brehmer and Kyle Cranmer.
\newblock Flows for simultaneous manifold learning and density estimation.
\newblock \emph{Advances in Neural Information Processing Systems},
  33:\penalty0 442--453, 2020.

\bibitem[Brown et~al.(2020)Brown, Mann, Ryder, Subbiah, Kaplan, Dhariwal,
  Neelakantan, Shyam, Sastry, Askell, et~al.]{brown2020language}
Tom Brown, Benjamin Mann, Nick Ryder, Melanie Subbiah, Jared~D Kaplan, Prafulla
  Dhariwal, Arvind Neelakantan, Pranav Shyam, Girish Sastry, Amanda Askell,
  et~al.
\newblock Language models are few-shot learners.
\newblock \emph{Advances in neural information processing systems},
  33:\penalty0 1877--1901, 2020.

\bibitem[Burda et~al.(2015)Burda, Grosse, and
  Salakhutdinov]{burda2015importance}
Yuri Burda, Roger Grosse, and Ruslan Salakhutdinov.
\newblock Importance weighted autoencoders.
\newblock \emph{arXiv preprint arXiv:1509.00519}, 2015.

\bibitem[Caterini et~al.(2021)Caterini, {Loaiza-Ganem}, Pleiss, and
  Cunningham]{caterini2021rectangular}
Anthony~L Caterini, Gabriel {Loaiza-Ganem}, Geoff Pleiss, and John~P
  Cunningham.
\newblock Rectangular flows for manifold learning.
\newblock \emph{Advances in Neural Information Processing Systems}, 2021.

\bibitem[Chadebec et~al.(2022)Chadebec, Vincent, and
  Allassonni{\`e}re]{chadebec2022pythae}
Cl{\'e}ment Chadebec, Louis~J. Vincent, and St{\'e}phanie Allassonni{\`e}re.
\newblock Pythae: {{Unifying}} generative autoencoders in python -- a
  benchmarking use case.
\newblock In S.~Koyejo, S.~Mohamed, A.~Agarwal, D.~Belgrave, K.~Cho, and A.~Oh
  (eds.), \emph{Advances in {{Neural Information Processing Systems}}}, 2022.

\bibitem[Chen et~al.(2018{\natexlab{a}})Chen, Li, Grosse, and
  Duvenaud]{chen2018isolating}
Ricky~TQ Chen, Xuechen Li, Roger~B Grosse, and David~K Duvenaud.
\newblock Isolating sources of disentanglement in variational autoencoders.
\newblock \emph{Advances in Neural Information Processing Systems}, 31,
  2018{\natexlab{a}}.

\bibitem[Chen et~al.(2018{\natexlab{b}})Chen, Rubanova, Bettencourt, and
  Duvenaud]{chen2018neural}
Ricky~TQ Chen, Yulia Rubanova, Jesse Bettencourt, and David~K Duvenaud.
\newblock Neural ordinary differential equations.
\newblock \emph{Advances in Neural Information Processing Systems},
  2018{\natexlab{b}}.

\bibitem[Chen et~al.(2019)Chen, Behrmann, Duvenaud, and
  Jacobsen]{chen2019residual}
Ricky~TQ Chen, Jens Behrmann, David~K Duvenaud, and J{\"o}rn-Henrik Jacobsen.
\newblock Residual flows for invertible generative modeling.
\newblock \emph{Advances in Neural Information Processing Systems}, 2019.

\bibitem[Cramer et~al.(2022)Cramer, Rauh, Mitsos, Tempone, and
  Dahmen]{cramer2022isometric}
Eike Cramer, Felix Rauh, Alexander Mitsos, Ra{\'u}l Tempone, and Manuel Dahmen.
\newblock Isometric manifold learning for injective normalizing flows.
\newblock \emph{arXiv preprint arXiv:2203.03934}, 2022.

\bibitem[Cunningham et~al.(2020)Cunningham, Zabounidis, Agrawal, Fiterau, and
  Sheldon]{cunningham2020normalizing}
Edmond Cunningham, Renos Zabounidis, Abhinav Agrawal, Madalina Fiterau, and
  Daniel Sheldon.
\newblock Normalizing flows across dimensions.
\newblock \emph{International Conference on Machine Learning, Workshop Track},
  2020.

\bibitem[Dinh et~al.(2015)Dinh, Krueger, and Bengio]{dinh2015nice}
Laurent Dinh, David Krueger, and Yoshua Bengio.
\newblock {{NICE}}: {{Non-linear Independent Components Estimation}}.
\newblock In \emph{International {{Conference}} on {{Learning
  Representations}}, {{Workshop Track}}}, 2015.

\bibitem[Draxler et~al.(2022)Draxler, Schn{\"o}rr, and
  K{\"o}the]{draxler2022whitening}
Felix Draxler, Christoph Schn{\"o}rr, and Ullrich K{\"o}the.
\newblock Whitening {{Convergence Rate}} of {{Coupling-based Normalizing
  Flows}}.
\newblock In \emph{Advances in {{Neural Information Processing Systems}}},
  2022.

\bibitem[Draxler et~al.(2024)Draxler, Sorrenson, Zimmermann, Rousselot, and
  K{\"o}the]{draxler2024freeform}
Felix Draxler, Peter Sorrenson, Lea Zimmermann, Armand Rousselot, and Ullrich
  K{\"o}the.
\newblock Free-form {{Flows}}: {{Make Any Architecture}} a {{Normalizing
  Flow}}.
\newblock In \emph{Artificial {{Intelligence}} and {{Statistics}}}, 2024.

\bibitem[Falcon \& {The PyTorch Lightning team}(2019)Falcon and {The PyTorch
  Lightning team}]{falcon2019pytorch}
William Falcon and {The PyTorch Lightning team}.
\newblock {{PyTorch}} lightning, March 2019.

\bibitem[Ghose et~al.(2020)Ghose, Rashwan, and Poupart]{ghose2020batch}
Amur Ghose, Abdullah Rashwan, and Pascal Poupart.
\newblock Batch norm with entropic regularization turns deterministic
  autoencoders into generative models.
\newblock In \emph{Conference on Uncertainty in Artificial Intelligence}, 2020.

\bibitem[Ghosh et~al.(2020)Ghosh, Sajjadi, Vergari, Black, and
  Sch{\"o}lkopf]{ghosh2020variational}
Partha Ghosh, Mehdi~SM Sajjadi, Antonio Vergari, Michael Black, and Bernhard
  Sch{\"o}lkopf.
\newblock From variational to deterministic autoencoders.
\newblock \emph{International Conference on Learning Representations}, 2020.

\bibitem[Girard(1989)]{girard1989fast}
A~Girard.
\newblock A fast `{{Monte-Carlo}} cross-validation' procedure for large least
  squares problems with noisy data.
\newblock \emph{Numerische Mathematik}, 56:\penalty0 1--23, 1989.

\bibitem[Golub \& Pereyra(1973)Golub and Pereyra]{golub1973differentiation}
Gene~H Golub and Victor Pereyra.
\newblock The differentiation of pseudo-inverses and nonlinear least squares
  problems whose variables separate.
\newblock \emph{SIAM Journal on numerical analysis}, 10\penalty0 (2):\penalty0
  413--432, 1973.

\bibitem[Grathwohl et~al.(2019)Grathwohl, Chen, Bettencourt, Sutskever, and
  Duvenaud]{grathwohl2019ffjord}
Will Grathwohl, Ricky~TQ Chen, Jesse Bettencourt, Ilya Sutskever, and David
  Duvenaud.
\newblock Ffjord: {{Free-form}} continuous dynamics for scalable reversible
  generative models.
\newblock In \emph{International {{Conference}} on {{Learning
  Representations}}}, 2019.

\bibitem[Gresele et~al.(2020)Gresele, Fissore, Javaloy, Sch{\"o}lkopf, and
  Hyvarinen]{gresele2020relative}
Luigi Gresele, Giancarlo Fissore, Adri{\'a}n Javaloy, Bernhard Sch{\"o}lkopf,
  and Aapo Hyvarinen.
\newblock Relative gradient optimization of the jacobian term in unsupervised
  deep learning.
\newblock In \emph{Advances in {{Neural Information Processing Systems}}},
  2020.

\bibitem[Harris et~al.(2020)Harris, Millman, {van der Walt}, Gommers, Virtanen,
  Cournapeau, Wieser, Taylor, Berg, Smith, Kern, Picus, Hoyer, {van Kerkwijk},
  Brett, Haldane, {del R{\'i}o}, Wiebe, Peterson, {G{\'e}rard-Marchant},
  Sheppard, Reddy, Weckesser, Abbasi, Gohlke, and Oliphant]{harris2020array}
Charles~R. Harris, K.~Jarrod Millman, St{\'e}fan~J. {van der Walt}, Ralf
  Gommers, Pauli Virtanen, David Cournapeau, Eric Wieser, Julian Taylor,
  Sebastian Berg, Nathaniel~J. Smith, Robert Kern, Matti Picus, Stephan Hoyer,
  Marten~H. {van Kerkwijk}, Matthew Brett, Allan Haldane, Jaime~Fern{\'a}ndez
  {del R{\'i}o}, Mark Wiebe, Pearu Peterson, Pierre {G{\'e}rard-Marchant},
  Kevin Sheppard, Tyler Reddy, Warren Weckesser, Hameer Abbasi, Christoph
  Gohlke, and Travis~E. Oliphant.
\newblock Array programming with {{NumPy}}.
\newblock \emph{Nature}, 585\penalty0 (7825):\penalty0 357--362, 2020.

\bibitem[Hastie et~al.(2009)Hastie, Tibshirani, and
  Friedman]{hastie2009elements}
Trevor Hastie, Robert Tibshirani, and Jerome~H Friedman.
\newblock \emph{The Elements of Statistical Learning: Data Mining, Inference,
  and Prediction}, volume~2.
\newblock Springer, 2009.

\bibitem[Heusel et~al.(2017)Heusel, Ramsauer, Unterthiner, Nessler, and
  Hochreiter]{heusel2017gans}
Martin Heusel, Hubert Ramsauer, Thomas Unterthiner, Bernhard Nessler, and Sepp
  Hochreiter.
\newblock Gans trained by a two time-scale update rule converge to a local nash
  equilibrium.
\newblock \emph{Advances in Neural Information Processing Systems}, 30, 2017.

\bibitem[Higgins et~al.(2017)Higgins, Matthey, Pal, Burgess, Glorot, Botvinick,
  Mohamed, and Lerchner]{higgins2017beta}
Irina Higgins, Loic Matthey, Arka Pal, Christopher Burgess, Xavier Glorot,
  Matthew Botvinick, Shakir Mohamed, and Alexander Lerchner.
\newblock Beta-vae: {{Learning}} basic visual concepts with a constrained
  variational framework.
\newblock In \emph{International Conference on Learning Representations}, 2017.

\bibitem[Horvat \& Pfister(2021)Horvat and Pfister]{horvat2021denoising}
Christian Horvat and Jean-Pascal Pfister.
\newblock Denoising normalizing flow.
\newblock \emph{Advances in Neural Information Processing Systems},
  34:\penalty0 9099--9111, 2021.

\bibitem[Hunter(2007)]{hunter2007matplotlib}
J.~D. Hunter.
\newblock Matplotlib: {{A 2D}} graphics environment.
\newblock \emph{Computing in Science \& Engineering}, 9\penalty0 (3):\penalty0
  90--95, 2007.

\bibitem[Hutchinson(1989)]{hutchinson1989stochastic}
Michael~F Hutchinson.
\newblock A stochastic estimator of the trace of the influence matrix for
  {{Laplacian}} smoothing splines.
\newblock \emph{Communications in Statistics-Simulation and Computation},
  18\penalty0 (3):\penalty0 1059--1076, 1989.

\bibitem[Keller et~al.(2021)Keller, Peters, Jaini, Hoogeboom, Forr{\'e}, and
  Welling]{keller2021self}
Thomas~A Keller, Jorn~WT Peters, Priyank Jaini, Emiel Hoogeboom, Patrick
  Forr{\'e}, and Max Welling.
\newblock Self normalizing flows.
\newblock In \emph{International {{Conference}} on {{Machine Learning}}}, 2021.

\bibitem[Kim \& Mnih(2018)Kim and Mnih]{kim2018disentangling}
Hyunjik Kim and Andriy Mnih.
\newblock Disentangling by factorising.
\newblock In \emph{International Conference on Machine Learning}, 2018.

\bibitem[Kingma \& Welling(2014)Kingma and Welling]{kingma2013auto}
Diederik~P Kingma and Max Welling.
\newblock Auto-encoding variational bayes.
\newblock In \emph{International {{Conference}} on {{Learning
  Representations}}}, 2014.

\bibitem[Kingma et~al.(2016)Kingma, Salimans, Jozefowicz, Chen, Sutskever, and
  Welling]{kingma2016improved}
Durk~P Kingma, Tim Salimans, Rafal Jozefowicz, Xi~Chen, Ilya Sutskever, and Max
  Welling.
\newblock Improved variational inference with inverse autoregressive flow.
\newblock \emph{Advances in Neural Information Processing Systems}, 29, 2016.

\bibitem[Kobyzev et~al.(2021)Kobyzev, Prince, and
  Brubaker]{kobyzev2021normalizing}
Ivan Kobyzev, Simon~J.D. Prince, and Marcus~A. Brubaker.
\newblock Normalizing {{Flows}}: {{An Introduction}} and {{Review}} of
  {{Current Methods}}.
\newblock \emph{IEEE Transactions on Pattern Analysis and Machine
  Intelligence}, 43\penalty0 (11):\penalty0 3964--3979, 2021.

\bibitem[Kothari et~al.(2021)Kothari, Khorashadizadeh, de~Hoop, and
  Dokmanic]{kothari2021trumpets}
Konik Kothari, AmirEhsan Khorashadizadeh, Maarten~V. de~Hoop, and Ivan
  Dokmanic.
\newblock Trumpets: {{Injective}} flows for inference and inverse problems.
\newblock In \emph{Conference on {{Uncertainty}} in {{Artificial
  Intelligence}}}, 2021.

\bibitem[Krantz \& Parks(2008)Krantz and Parks]{krantz2008geometric}
Steven~G Krantz and Harold~R Parks.
\newblock \emph{Geometric Integration Theory}.
\newblock Springer Science \& Business Media, 2008.

\bibitem[Krizhevsky(2009)]{krizhevsky2009learning}
Alex Krizhevsky.
\newblock Learning multiple layers of features from tiny images.
\newblock Technical report, University of Toronto, Toronto, Ontario, 2009.

\bibitem[Kumar et~al.(2020)Kumar, Poole, and Murphy]{kumar2020regularized}
Abhishek Kumar, Ben Poole, and Kevin Murphy.
\newblock Regularized autoencoders via relaxed injective probability flow.
\newblock In \emph{International Conference on Artificial Intelligence and
  Statistics}, 2020.

\bibitem[LeCun et~al.(2010)LeCun, Cortes, and Burges]{lecun2010mnist}
Yann LeCun, Corinna Cortes, and {\relax CJ}~Burges.
\newblock {{MNIST}} handwritten digit database.
\newblock \emph{ATT Labs [Online]. Available:
  http://yann.lecun.com/exdb/mnist}, 2, 2010.

\bibitem[Liu et~al.(2015)Liu, Luo, Wang, and Tang]{liu2015deep}
Ziwei Liu, Ping Luo, Xiaogang Wang, and Xiaoou Tang.
\newblock Deep learning face attributes in the wild.
\newblock In \emph{Proceedings of International Conference on Computer Vision
  ({{ICCV}})}, December 2015.

\bibitem[{Loaiza-Ganem} et~al.(2022){Loaiza-Ganem}, Ross, Wu, Cunningham,
  Cresswell, and Caterini]{loaiza2023denoising}
Gabriel {Loaiza-Ganem}, Brendan~Leigh Ross, Luhuan Wu, John~Patrick Cunningham,
  Jesse~C Cresswell, and Anthony~L Caterini.
\newblock Denoising deep generative models.
\newblock In \emph{Advances in {{Neural Information Processing Systems}},
  {{Workshop Track}}}, 2022.

\bibitem[Makhzani et~al.(2015)Makhzani, Shlens, Jaitly, Goodfellow, and
  Frey]{makhzani2015adversarial}
Alireza Makhzani, Jonathon Shlens, Navdeep Jaitly, Ian Goodfellow, and Brendan
  Frey.
\newblock Adversarial autoencoders.
\newblock In \emph{International {{Conference}} on {{Learning
  Representations}}}, 2015.

\bibitem[McKinney(2010)]{mckinney2010data}
Wes McKinney.
\newblock Data {{Structures}} for {{Statistical Computing}} in {{Python}}.
\newblock In St{\'e}fan {van der Walt} and {Jarrod Millman} (eds.), \emph{9th
  {{Python}} in {{Science Conference}}}, 2010.

\bibitem[Mezzadri(2007)]{mezzadri2007how}
Francesco Mezzadri.
\newblock How to generate random matrices from the classical compact groups.
\newblock \emph{Notices of the American Mathematical Society}, 54\penalty0
  (5):\penalty0 592--604, May 2007.
\newblock ISSN 0002-9920.

\bibitem[Papamakarios et~al.(2017)Papamakarios, Pavlakou, and
  Murray]{papamakarios2017masked}
George Papamakarios, Theo Pavlakou, and Iain Murray.
\newblock Masked autoregressive flow for density estimation.
\newblock \emph{Advances in Neural Information Processing Systems}, 2017.

\bibitem[Paszke et~al.(2019)Paszke, Gross, Massa, Lerer, Bradbury, Chanan,
  Killeen, Lin, Gimelshein, Antiga, et~al.]{paszke2019pytorch}
Adam Paszke, Sam Gross, Francisco Massa, Adam Lerer, James Bradbury, Gregory
  Chanan, Trevor Killeen, Zeming Lin, Natalia Gimelshein, Luca Antiga, et~al.
\newblock Pytorch: {{An}} imperative style, high-performance deep learning
  library.
\newblock In \emph{Advances in {{Neural Information Processing Systems}}},
  2019.

\bibitem[Radev et~al.(2022)Radev, Mertens, Voss, Ardizzone, and
  K{\"o}the]{radev2022bayesflow}
Stefan~T. Radev, Ulf~K. Mertens, Andreas Voss, Lynton Ardizzone, and Ullrich
  K{\"o}the.
\newblock {{BayesFlow}}: {{Learning Complex Stochastic Models With Invertible
  Neural Networks}}.
\newblock \emph{IEEE Transactions on Neural Networks and Learning Systems},
  33\penalty0 (4):\penalty0 1452--1466, April 2022.
\newblock ISSN 2162-2388.
\newblock \doi{10.1109/TNNLS.2020.3042395}.

\bibitem[Rezende \& Mohamed(2015)Rezende and Mohamed]{rezende2015variational}
Danilo Rezende and Shakir Mohamed.
\newblock Variational inference with normalizing flows.
\newblock In \emph{International {{Conference}} on {{Machine Learning}}}, July
  2015.

\bibitem[Rombach et~al.(2022)Rombach, Blattmann, Lorenz, Esser, and
  Ommer]{rombach2022highresolution}
Robin Rombach, Andreas Blattmann, Dominik Lorenz, Patrick Esser, and Bj{\"o}rn
  Ommer.
\newblock High-resolution image synthesis with latent diffusion models.
\newblock In \emph{{{IEEE}}/{{CVF}} Conference on Computer Vision and Pattern
  Recognition}, 2022.

\bibitem[Ross \& Cresswell(2021)Ross and Cresswell]{ross2021tractable}
Brendan Ross and Jesse Cresswell.
\newblock Tractable density estimation on learned manifolds with conformal
  embedding flows.
\newblock \emph{Advances in Neural Information Processing Systems},
  34:\penalty0 26635--26648, 2021.

\bibitem[Salimans et~al.(2016)Salimans, Goodfellow, Zaremba, Cheung, Radford,
  and Chen]{salimans2016improved}
Tim Salimans, Ian Goodfellow, Wojciech Zaremba, Vicki Cheung, Alec Radford, and
  Xi~Chen.
\newblock Improved techniques for training gans.
\newblock \emph{Advances in neural information processing systems}, 29, 2016.

\bibitem[Silvestri et~al.(2023)Silvestri, Roos, and
  Ambrogioni]{silvestri2023deterministic}
Gianluigi Silvestri, Daan Roos, and Luca Ambrogioni.
\newblock Deterministic training of generative autoencoders using invertible
  layers.
\newblock In \emph{The Eleventh International Conference on Learning
  Representations}, 2023.

\bibitem[Snell et~al.(2017)Snell, Ridgeway, Liao, Roads, Mozer, and
  Zemel]{snell2017learning}
Jake Snell, Karl Ridgeway, Renjie Liao, Brett~D Roads, Michael~C Mozer, and
  Richard~S Zemel.
\newblock Learning to generate images with perceptual similarity metrics.
\newblock In \emph{{{IEEE}} International Conference on Image Processing}, pp.\
   4277--4281. IEEE, 2017.

\bibitem[Teng \& Choromanska(2019)Teng and Choromanska]{teng2019invertible}
Yunfei Teng and Anna Choromanska.
\newblock Invertible autoencoder for domain adaptation.
\newblock \emph{Computation}, 7\penalty0 (2):\penalty0 20, 2019.

\bibitem[{The pandas development team}(2020)]{reback2020pandas}
{The pandas development team}.
\newblock Pandas-dev/pandas: {{Pandas}}, February 2020.

\bibitem[Tolstikhin et~al.(2018)Tolstikhin, Bousquet, Gelly, and
  Schoelkopf]{tolstikhin2018wasserstein}
Ilya Tolstikhin, Olivier Bousquet, Sylvain Gelly, and Bernhard Schoelkopf.
\newblock Wasserstein auto-encoders.
\newblock In \emph{International {{Conference}} on {{Learning
  Representations}}}, 2018.

\bibitem[{van den Oord} et~al.(2017){van den Oord}, Vinyals,
  et~al.]{vandenoord2017neural}
Aaron {van den Oord}, Oriol Vinyals, et~al.
\newblock Neural discrete representation learning.
\newblock \emph{Advances in Neural Information Processing Systems}, 30, 2017.

\bibitem[Zhang et~al.(2023)Zhang, Sun, Zhang, and Mcdonagh]{zhang2023spread}
Mingtian Zhang, Yitong Sun, Chen Zhang, and Steven Mcdonagh.
\newblock Spread flows for manifold modelling.
\newblock In \emph{International Conference on Artificial Intelligence and
  Statistics}, 2023.

\bibitem[Zhang et~al.(2020)Zhang, Zhang, Li, Bengio, and
  Paull]{zhang2020perceptual}
Zijun Zhang, Ruixiang Zhang, Zongpeng Li, Yoshua Bengio, and Liam Paull.
\newblock Perceptual generative autoencoders.
\newblock In \emph{International {{Conference}} on {{Machine Learning}}}, 2020.

\bibitem[Zhao et~al.(2017)Zhao, Song, and Ermon]{zhao2017infovae}
Shengjia Zhao, Jiaming Song, and Stefano Ermon.
\newblock Infovae: {{Information}} maximizing variational autoencoders.
\newblock \emph{arXiv:1706.02262}, 2017.

\end{thebibliography}
\bibliographystyle{iclr2024_conference}

\clearpage
\appendix

\section{Change of variables formula across dimensions} \label{app:change-of-var}
The change of variables formula describes how probability densities change as they are mapped through an injective ``pushforward'' function $g$. It is instructive to derive this formula when $g$ is an invertible function. Let $p_Z$ be a base density and $p_X$ the pushforward density obtained by mapping samples from $p_Z$ through $g$. Then we can write
\begin{align}
    p_X(x) 
        &= \int p(x \mid z) p_Z(z) \mathrm{d}z \\
        &= \int \delta(x - g(z)) p_Z(z) \mathrm{d}z \\
        &= \int \delta(x - \hat x) p_Z(f(\hat x)) \left| \det(g'(f(\hat x))) \right|^{-1} \mathrm{d} \hat x \\
        &= p_Z(f(x)) \left| \det(g'(f(x))) \right|^{-1}
\end{align}
using the change of variables $\hat x = g(z)$, meaning that $z = f(\hat x)$ and $|\det(g'(z))| \mathrm{d}z = \mathrm{d} \hat x$ with $f$ the inverse of $g$.

Now suppose that $g$ maps from $\R^d$ to $\R^D$ with $d < D$. We can generalize the change of variables $\hat x = g(z)$ using $z = f(\hat x)$ and $\det(g'(z)^\top g'(z))^{\frac{1}{2}} \mathrm{d}z = \mathrm{d} \hat x$ where $f$ and $g$ are consistent ($f \circ g$ is the identity) (see chapter 5 of \cite{krantz2008geometric}). This gives us
\begin{align}
    p_X(x) 
        &= \int \delta(x - g(z)) p_Z(z) \mathrm{d}z \\
        &= \int \delta(x - \hat x) p_Z(f(\hat x)) \det(g'(f(\hat x))^\top g'(f(\hat x)))^{-\frac{1}{2}} \mathrm{d} \hat x
\end{align}
This expression defines a probability density in the full ambient space $\R^D$ (albeit a degenerate distribution) but we cannot easily remove the integral. However, we can convert it into an expression resembling the full-dimensional case, but defined only on the image of $g$:
\begin{equation}
    p_X(x) = p_Z(f(x)) \det(g'(f(x))^\top g'(f(x)))^{-\frac{1}{2}}
\end{equation}
Note that this expression only integrates to 1 if we restrict integration to the image of $g$. As such, it should only be regarded as defining a probability distribution on this manifold, not in the ambient space $\R^D$.

\section{Derivation of gradient estimator} \label{app:gradient-est}

We expand the derivative in \cref{eq:log-det-deriv-original}:
\begin{align}
    \frac{\partial}{\partial \theta_j} \frac{1}{2} \log \det J^\top J 
        &= \frac{1}{2} \tr \left( (J^\top J)^{-1} \frac{\partial}{\partial \theta_j} (J^\top J) \right) \\
        &= \frac{1}{2} \tr \left( (J^\top J)^{-1} \left( \frac{\partial}{\partial \theta_j} J^\top \right) J \right) + \frac{1}{2} \tr \left( (J^\top J)^{-1} J^\top \left( \frac{\partial}{\partial \theta_j} J \right) \right) \\
        &= \frac{1}{2} \tr \left( \left( J (J^\top J)^{-1} \right)^\top \frac{\partial}{\partial \theta_j} J \right) + \frac{1}{2} \tr \left( (J^\top J)^{-1} J^\top \frac{\partial}{\partial \theta_j} J \right) \\
        &= \tr \left( (J^\top J)^{-1} J^\top \frac{\partial}{\partial \theta_j} J \right) \\
        &= \tr \left( J^\dagger \frac{\partial}{\partial \theta_j} J \right)  \label{eq:trace-decoder}
\end{align}
where we used the cyclic property of the trace and that $\tr(AB) = \tr(A^\top B^\top)$. $J^\dagger = (J^\top J)^{-1} J^\top$ is the Moore-Penrose inverse of $J$.

Now we will do an equivalent derivation for the encoder. Observe that, since
\begin{equation}
    (J^\dagger J^{\dagger \top})^{-1} = ((J^\top J)^{-1} J^\top J (J^\top J)^{-1})^{-1} = J^\top J
\end{equation}
we can rewrite the log-determinant term using the encoder Jacobian:
\begin{equation}
    \frac{1}{2} \log \det (J^\top J) = -\frac{1}{2} \log \det (J^\dagger J^{\dagger \top})
\end{equation}
Note the negative sign on the right-hand side.

The derivation for the derivative is very similar to that for the decoder, where we now take a derivative with respect to encoder parameters $\phi$:
\begin{align}
    \frac{\partial}{\partial \phi_j} \frac{1}{2} \log \det J^\dagger J^{\dagger \top} 
        &= \frac{1}{2} \tr \left( (J^\dagger J^{\dagger \top})^{-1} \frac{\partial}{\partial \phi_j} (J^\dagger J^{\dagger \top}) \right) \\
        &= \frac{1}{2} \tr \left( (J^\dagger J^{\dagger \top})^{-1} \left( \frac{\partial}{\partial \phi_j} J^\dagger \right) J^{\dagger \top} \right) + \frac{1}{2} \tr \left( (J^\dagger J^{\dagger \top})^{-1} J^\dagger \left( \frac{\partial}{\partial \phi_j} J^{\dagger \top} \right) \right) \\
        &= \frac{1}{2} \tr \left( J^{\dagger \top} (J^\dagger J^{\dagger \top})^{-1} \frac{\partial}{\partial \phi_j} J^\dagger \right) + \frac{1}{2} \tr \left( \left( (J^\dagger J^{\dagger \top})^{-1} J^\dagger \right)^\top \frac{\partial}{\partial \phi_j} J^\dagger \right) \\
        &= \tr \left( J^{\dagger \top} (J^\dagger J^{\dagger \top})^{-1} \frac{\partial}{\partial \phi_j} J^\dagger \right) \\
        &= \tr \left( J \frac{\partial}{\partial \phi_j} J^\dagger \right) \label{eq:trace-encoder}
\end{align}
where we used the cyclic property of the trace, that $\tr(AB) = \tr(A^\top B^\top)$ and that $J = \left( J^\dagger \right)^\dagger = J^{\dagger \top} (J^\dagger J^{\dagger \top})^{-1}$.

Recall \cref{eq:log-det-grad-decoder} in \cref{sec:simplifying-surrogate}, which gives the surrogate for the log-determinant term:
\begin{equation} 
    \frac{1}{K} \sum_{k=1}^K \texttt{stop\_gradient} \left( \epsilon_k^\top f'(\hat x) \right) g'(z) \epsilon_k,
\end{equation}
This is formulated in terms of the Jacobian of the decoder, in other words it is derived from $\det(J^\top J)$. The equivalent term, formulated in terms of the Jacobian of the encoder should be derived from $\det(J^\dagger J^{\dagger \top})$ and is therefore (note the negative sign):
\begin{equation} \label{eq:log-det-grad-encoder-appendix}
    -\frac{1}{K} \sum_{k=1}^K \epsilon_k^\top f'(\hat x) \texttt{stop\_gradient} \left( g'(z) \epsilon_k \right)
\end{equation}
We write it in the order $f'(\hat x) g'(z)$ rather than $g'(z) f'(\hat x)$ since this reduces the variance of the estimate. See \cref{app:trace-variance} for further details on this point.

\subsection{Note on optimality with respect to reconstruction loss} \label{app:optimality-recon-loss}

Our estimator relies on the approximation $f'(\hat x) \approx g'(f(x))^\dagger$. If $f$ and $g$ are consistent ($f \circ g$ is the identity) this guarantees that $g'(f(x)) f'(\hat x) = I$, but not that $f'(\hat x)$ is the Moore-Penrose inverse of $g'(f(x))$. A sufficient requirement is that $f$ and $g$ are optimal with respect to the reconstruction loss, that is, any variation in the functions would lead to a higher reconstruction. With calculus of variations, it is possible to show that such $f$ and $g$ are consistent, and 
\begin{equation}
    (\hat x - x)^\top g'(f(x)) = 0
\end{equation}
for all $x$. By taking the derivative with respect to $x$ and evaluating at some $x$ in the image of $g$ (so $\hat x = x$) we have that 
\begin{equation}
    f'(\hat x)^\top g'(f(x))^\top g'(f(x)) - g'(f(x)) = 0
\end{equation}
and hence
\begin{equation}
    f'(\hat x) = g'(f(x))^\top (g'(f(x))^\top g'(f(x)))^{-1} = g'(f(x))^\dagger.
\end{equation}

In the remainder of the appendix, given an encoder-decoder pair $f$ and $g$ which are optimal with respect to the reconstruction loss, we refer to $g$ as the pseudoinverse of $f$, and $f$ as the pseudoinverse of $g$.

\subsection{Modified estimator} \label{app:modified-estimator}

\begin{figure}
    \centering
    \begin{minipage}[c]{0.3\textwidth}
        \includegraphics[width=\linewidth]{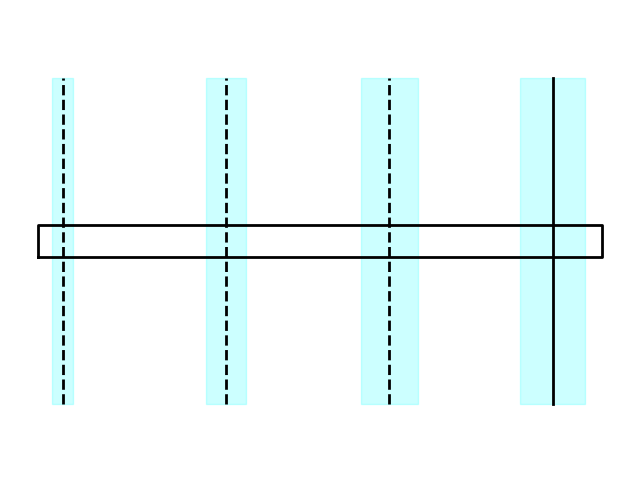}
    \end{minipage}
    \begin{minipage}[c]{0.3\textwidth}
        \includegraphics[width=\linewidth]{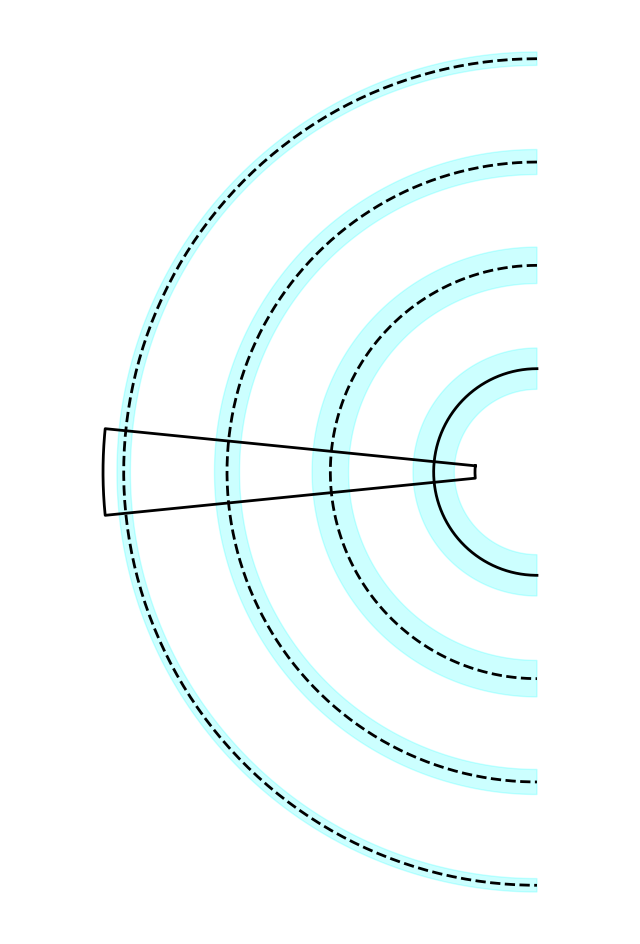}
    \end{minipage}
    \begin{minipage}[c]{0.3\textwidth}
        \includegraphics[width=\linewidth]{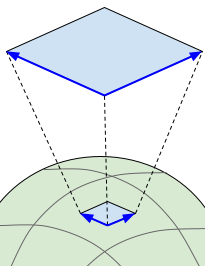}
    \end{minipage}
    \caption{Representation of ill-defined probability density $\tilde p(x) \propto p(\hat x) e^{-\beta \lVert \hat x - x \rVert^2}$ (\textit{left} and \textit{center}). Solid black lines denote the manifold, dashed lines are a constant distance from the manifold. The probability density is constant along the manifold. The width of the cyan bands is proportional to $e^{-\beta \lVert \hat x - x \rVert^2}$ and represents the probability density along the on- and off-manifold contours. While the density is reasonable for a flat manifold (\textit{left}), note that the amount of probability mass associated with a region of the manifold (bounded by solid lines) is \textit{larger} at some points off the manifold than on it when the manifold has curvature (\textit{center}). This behavior can lead to divergent solutions when optimizing for likelihood and should be compensated for. The appropriate compensation factor is the ratio of the volume of a small region on the manifold (small blue square embedded in green manifold, \textit{right}) to the equivalent region off the manifold (large blue square, \textit{right}). The blue arrows represent an orthonormal frame on the manifold, and the equivalent frame in the off-manifold region.}
    \label{fig:off-manifold-densities}
\end{figure}

As stated in \ref{sec:well-behaved-loss}, we modify the log-determinant estimator (\cref{eq:log-det-grad-encoder-appendix}) by replacing $f'(\hat x)$ by $f'(x)$. 
The loss we are trying to optimize (sending gradient from the log-determinant to the encoder) is:
\begin{equation}
    \loss(x) = -\log p_Z(f(x)) - \log \operatorname{vol} \left( f'(\hat x) \right) + \beta \lVert \hat x - x \rVert^2
\end{equation}
with $\operatorname{vol}(f'(\hat x)) = \sqrt{\det(f'(\hat x) f'(\hat x)^\top)}$. Consider the probability density $\tilde p$ implied by interpreting this loss as a negative log-likelihood:
\begin{equation}
    \tilde p(x) \propto p(\hat x) e^{-\beta \lVert \hat x - x \rVert^2} = p_Z(z) \operatorname{vol} \left( f'(\hat x) \right) e^{-\beta \lVert \hat x - x \rVert^2}
\end{equation}
where $p(\hat x)$ is the on-manifold density. Unfortunately, this density is ill-defined and leads to pathological behavior (see \cref{fig:off-manifold-densities}). In order to provide a correction to this density, we need a term which compensates for the volume increase or decrease of off-manifold regions in comparison to the on-manifold region they are projected to. This is depicted in \cref{fig:off-manifold-densities} (\textit{right}). The blue arrows in the on-manifold region will span the same latent-space volume as the blue arrows in the off-manifold region. The change in volume between the depicted on-manifold region and the latent space is $\operatorname{vol}(f'(\hat x))$ and $\operatorname{vol}(f'(x))$ between the off-manifold region and the latent space. Combining these facts means
\begin{equation}
    \operatorname{vol}(f'(\hat x)) \times \text{(volume of on-manifold region)} = \operatorname{vol}(f'(x)) \times \text{(volume of off-manifold region)}
\end{equation}
and hence the ratio of the volume of the on-manifold region to the off-manifold region is $\operatorname{vol}(f'(x))/\operatorname{vol}(f'(\hat x))$. Multiplying $\tilde p(x)$ by this factor leads to
\begin{equation}
    \tilde p(x) \frac{\operatorname{vol}(f'(x))}{\operatorname{vol}(f'(\hat x))} = p_Z(z) \operatorname{vol} \left( f'(x) \right) e^{-\beta \lVert \hat x - x \rVert^2}
\end{equation}
and the corresponding negative log-likelihood loss is
\begin{equation}
    \loss(x) = -\log p_Z(f(x)) - \log \operatorname{vol} \left( f'(x) \right) + \beta \lVert \hat x - x \rVert^2
\end{equation}
The surrogate for the log-determinant term is therefore
\begin{equation}
    -\tr(f'(x) \texttt{stop\_gradient}(f'(x)^\dagger))
\end{equation}
In order to maintain computational efficiency, we approximate $f'(x)^\dagger$ by $g'(f(x))$:
\begin{equation}
    -\tr(f'(x) \texttt{stop\_gradient}(g'(f(x))))
\end{equation}
giving the stated correction.

\section{Linear model trained on maximum likelihood alone} \label{app:linear-model}

Consider a linear model, trained on data with zero mean and covariance $\Sigma$. Let the encoder function be $f(x) = A x$ and suppose that $A$ has positive singular values, meaning that $A A^\top$ is positive definite. Let the decoder function be $g(z) = A^\dagger z$, where $A^\dagger = A^\top (A A^\top)^{-1}$. We want to minimize a combination of negative log-likelihood and a reconstruction loss (here we use $1/2\sigma^2$ instead of $\beta$ as prefactor):
\begin{align}
    \mathcal{L} 
        &= \mathcal{L}_\text{NLL} + \mathcal{L}_\text{recon.} \\
        &= E_x \left[ \frac{1}{2} \lVert Ax \rVert^2 - \frac{1}{2} \log \det(A A^\top) + \frac{1}{2\sigma^2} \lVert A^\dagger A x - x \rVert^2 \right] \\
        &= \frac{1}{2} E_x [x^\top A^\top A x] - \frac{1}{2} \log \det(A A^\top) + \frac{1}{2\sigma^2} E_x [x^\top (A^\dagger A - \mathbb{I})^2 x] \\
        &= \frac{1}{2} \tr(A  E_x [x x^\top] A^\top) - \frac{1}{2} \log \det(A A^\top) + \frac{1}{2\sigma^2} \tr( E_x [x x^\top] (\mathbb{I} - A^\dagger A) ) \\
        &= \frac{1}{2} \tr(A \Sigma A^\top) - \frac{1}{2} \log \det(A A^\top) + \frac{1}{2\sigma^2} \tr(\Sigma (\mathbb{I} - A^\dagger A))
\end{align}
where $A$ is a full rank $d \times D$ matrix with $d \leq D$. 

Before solving for the minimum, let's review some matrix calculus identities. It is often convenient to consider $A$ as a function of a single variable $x$, differentiate with respect to $x$, and then choose $x$ to be $A_{ij}$. Then the derivative is
\begin{equation}
    \frac{d}{dA_{ij}} A = E^{(ij)}
\end{equation}
where $E^{(ij)}$ is a matrix of zeros, except for the $ij$ entry which is a one. We can write this as $E^{(ij)}_{kl} = \delta_{ik} \delta_{jl}$ where $\delta$ is the Kronecker delta. When evaluating $E^{(ij)}$ inside a trace we get the simple expression:
\begin{equation}
    \tr(E^{(ij)} B) = E^{(ij)}_{kl} B_{lk} = \delta_{ik} \delta_{jk} B_{lk} = B_{ji}
\end{equation}
using Einstein notation. The additional matrix identities we will need are Jacobi's formula for a square invertible matrix $B$:
\begin{equation}
    \frac{d}{dx} \det(B) = \det(B) \tr \left( B^{-1} \left( \frac{d}{dx} B \right) \right)
\end{equation}
and hence 
\begin{equation}
    \frac{d}{dx} \log \det(B) = \tr \left( B^{-1} \left( \frac{d}{dx} B \right) \right)
\end{equation}
and we will prove the following lemma.

\begin{lemma} \label{lem:projection-deriv}
    Suppose the matrix $A$ depends on a variable $x$. Then we have the following expression for the derivative of the projection operator $A^\dagger A$:
    \begin{equation}
        \frac{d}{dx} (A^\dagger A) = A^\dagger \left( \frac{d}{dx} A \right) (\mathbb{I} - A^\dagger A) + \left( A^\dagger \left( \frac{d}{dx} A \right) (\mathbb{I} - A^\dagger A) \right)^\top
    \end{equation}
\end{lemma}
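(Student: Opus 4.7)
The plan is to exploit the structural properties of $P := A^\dagger A$, which is the orthogonal projection onto the row space of $A$: it satisfies $P^2 = P$ and $P^\top = P$. Together with differentiation of the identity $AP = A$, these are enough to yield the result without ever explicitly computing $(A^\dagger)'$, avoiding the messier route through differentiating $(AA^\top)^{-1}$.

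First I would differentiate the idempotency relation $P^2 = P$ to obtain $P' = P'P + PP'$. Taking transposes and using $P^\top = P$ gives $(P'P)^\top = P^\top (P')^\top = PP'$, so that $P'$ decomposes as the sum of $P'P$ and its transpose. The remaining task is therefore to express $P'P$ in closed form in terms of $A$ and $A'$.

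For this, I would use that $A$ has full row rank (so $AA^\dagger = \mathbb{I}$), which gives $AP = A$ and hence $PA^\top = A^\top$. Differentiating the latter yields $P'A^\top = (\mathbb{I} - P)(A')^\top$. Right-multiplying by $A^{\dagger\top}$ and using the identity $A^\top A^{\dagger\top} = (A^\dagger A)^\top = P$ produces
\begin{equation*}
    P'P = (\mathbb{I} - P)(A')^\top A^{\dagger\top} = \bigl( A^\dagger A' (\mathbb{I} - P) \bigr)^\top.
\end{equation*}
Substituting into $P' = P'P + (P'P)^\top$ and recognizing $P = A^\dagger A$ yields the claimed identity.

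I do not anticipate any real obstacle: the only bookkeeping to watch is the cancellation that makes $P'A^\top = (\mathbb{I} - P)(A')^\top$ (one must verify the $P(A')^\top$ term survives with the right sign after moving to the other side). An alternative, more brute-force route would differentiate $A^\dagger = A^\top (AA^\top)^{-1}$ via Jacobi's rule $(B^{-1})' = -B^{-1} B' B^{-1}$ and expand $(A^\dagger)' A + A^\dagger A'$; this also works but produces several terms that must be simplified using $A^\dagger A A^\dagger = A^\dagger$, whereas the projector-based argument above only uses the abstract identities $P^2 = P$, $P^\top = P$, and $AP = A$.
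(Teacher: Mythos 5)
Your argument is correct and follows essentially the same route as the paper's proof (which is itself based on Golub's lemma on differentiating pseudoinverses): differentiate the idempotency relation $P^2=P$, then pin down $P'P$ by differentiating $PA^\top = A^\top$ and right-multiplying by $A^{\dagger\top}$. The only cosmetic difference is that you obtain the second term $PP'$ as the transpose of $P'P$ via symmetry of $P$ and $P'$, whereas the paper derives it separately from $AP=A$; both steps are equivalent.
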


\begin{proof}
    The following is based on the proof to lemma 4.1 in \cite{golub1973differentiation}. Define the projection operator $P_A = A^\dagger A$ and its complement $P_A^\perp = \mathbb{I} - P_A$. Then, since $P_A P_A = P_A$,
    \begin{equation}
        \left( \frac{d}{dx} P_A \right) = \left( \frac{d}{dx} P_A P_A \right) = \left( \frac{d}{dx} P_A \right) P_A + P_A \left( \frac{d}{dx} P_A \right)
    \end{equation}
    In addition, since $P_A A^\top = A^\top$
    \begin{equation}
        \left( \frac{d}{dx} P_A A^\top \right) = \left( \frac{d}{dx} P_A \right) A^\top + P_A \left( \frac{d}{dx} A \right)^\top = \left( \frac{d}{dx} A \right)^\top 
    \end{equation}
    and therefore
    \begin{equation}
        \left( \frac{d}{dx} P_A \right) P_A = \left( \frac{d}{dx} P_A \right) A^\top A^{\dagger\top} = \left( \frac{d}{dx} A \right)^\top A^{\dagger\top} - P_A \left( \frac{d}{dx} A \right)^\top A^{\dagger\top} = P_A^\perp \left( \frac{d}{dx} A \right)^\top A^{\dagger\top}
    \end{equation}
    By similar steps but using $A P_A = A$, we can derive
    \begin{equation}
        P_A \left( \frac{d}{dx} P_A \right) = A^\dagger \left( \frac{d}{dx} A \right) P_A^\perp
    \end{equation}
    Putting it all together gives
    \begin{equation}
        \left( \frac{d}{dx} P_A \right) = A^\dagger \left( \frac{d}{dx} A \right) P_A^\perp + \left( A^\dagger \left( \frac{d}{dx} A \right) P_A^\perp \right)^\top
    \end{equation}
    Note that the second term is just the transpose of the first.
\end{proof}

Now we are ready to find the derivative of the loss and set it to zero.
\begin{lemma} \label{lem:loss-deriv}
    The derivative of the loss with respect to $A$ takes the form:
    \begin{equation}
        \frac{d}{dA} \mathcal{L} = \left( \Sigma A^\top - A^\dagger - \frac{1}{\sigma^2} (\mathbb{I} - A^\dagger A) \Sigma A^\dagger \right)^\top
    \end{equation}
\end{lemma}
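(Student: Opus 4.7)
The plan is to differentiate $\loss = \frac{1}{2}\tr(A\Sigma A^\top) - \frac{1}{2}\log\det(AA^\top) + \frac{1}{2\sigma^2}\tr(\Sigma(\mathbb{I} - A^\dagger A))$ term-by-term with respect to the entry $A_{ij}$, using the tools already laid out just above the lemma: the rule $\tfrac{d}{dA_{ij}}A = E^{(ij)}$, the trace identity $\tr(E^{(ij)}B) = B_{ji}$, Jacobi's formula for $\log\det$, and Lemma~\ref{lem:projection-deriv} for the derivative of the projector $A^\dagger A$. At the end, I will read off the matrix $(\tfrac{d\loss}{dA})_{ij}$ entrywise and recognize that all three contributions naturally appear transposed, producing the single outer transpose in the claim.

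For the quadratic term, applying the product rule gives $\tfrac{d}{dA_{ij}}\tfrac{1}{2}\tr(A\Sigma A^\top) = \tfrac{1}{2}\tr(E^{(ij)}\Sigma A^\top) + \tfrac{1}{2}\tr(A\Sigma E^{(ij)\top})$; using $\Sigma = \Sigma^\top$ and the trace identity both pieces equal $(A\Sigma)_{ij} = ((\Sigma A^\top)^\top)_{ij}$. For the log-det term, Jacobi's formula together with $\tfrac{d}{dA_{ij}}(AA^\top) = E^{(ij)}A^\top + AE^{(ij)\top}$ yields, after one application of cyclicity in each of the two summands, $\tfrac{d}{dA_{ij}}\log\det(AA^\top) = 2((AA^\top)^{-1}A)_{ij} = 2(A^{\dagger\top})_{ij}$, so this term contributes $-(A^\dagger)^\top$ entrywise, i.e.\ $((-A^\dagger)^\top)_{ij}$.

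The main work is the reconstruction term. Here I substitute Lemma~\ref{lem:projection-deriv} with $\tfrac{d}{dA_{ij}}A = E^{(ij)}$ to get two summands, one of which is the transpose of the other. Inserting into $-\tfrac{1}{2\sigma^2}\tr(\Sigma\,\tfrac{d}{dA_{ij}}(A^\dagger A))$ and using symmetry of $\Sigma$ together with $\tr(\Sigma M^\top) = \tr(M\Sigma)$ and cyclicity, both summands collapse to the same expression $\tr((\mathbb{I}-A^\dagger A)\Sigma A^\dagger E^{(ij)})$. The trace identity then produces $-\tfrac{1}{\sigma^2}((\mathbb{I}-A^\dagger A)\Sigma A^\dagger)_{ji}$, i.e.\ the $(i,j)$ entry of $-\tfrac{1}{\sigma^2}((\mathbb{I}-A^\dagger A)\Sigma A^\dagger)^\top$. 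This collapse is the step most easily miscounted (there is a factor of $2$ hiding in the sum of the two symmetric pieces), and it is the crucial bookkeeping obstacle.

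Summing the three contributions and factoring the common outer transpose gives
\[
\frac{d}{dA}\loss = \left(\Sigma A^\top - A^\dagger - \frac{1}{\sigma^2}(\mathbb{I}-A^\dagger A)\Sigma A^\dagger\right)^\top,
\]
which is the claim. The only ingredients used beyond standard matrix calculus are Lemma~\ref{lem:projection-deriv} and the symmetry of $\Sigma$, both already established; no assumption on $A$ beyond $AA^\top$ being invertible (stated at the start of the appendix) is needed.
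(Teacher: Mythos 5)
Your proposal is correct and follows essentially the same route as the paper's proof: entrywise differentiation via $E^{(ij)}$ and the identity $\tr(E^{(ij)}B)=B_{ji}$, Jacobi's formula for the $\log\det(AA^\top)$ term, and Lemma~\ref{lem:projection-deriv} for the projector term, with the symmetry of $\Sigma$ and cyclicity collapsing the two symmetric summands (absorbing the factor $\tfrac12$) exactly as in the paper. The only cosmetic difference is that you keep both product-rule summands and observe they coincide, whereas the paper merges them up front using transpose-invariance of the trace.
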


\begin{proof}
    Let's apply the above identities to the first term in the loss:
    \begin{align}
        \frac{d}{dx} \frac{1}{2} \tr(A \Sigma A^\top) 
            &= \frac{1}{2} \tr \left( \left( \frac{d}{dx} A \right) \Sigma A^\top + A \Sigma \left( \frac{d}{dx} A \right)^\top \right) \\
            &= \tr \left( \left( \frac{d}{dx} A \right) \Sigma A^\top \right)
    \end{align}
    since the trace is invariant under transposition and hence
    \begin{equation}
        \frac{d}{dA_{ij}} \frac{1}{2} \tr(A \Sigma A^\top) = \tr( E^{(ij)} \Sigma A^\top ) = (\Sigma A^\top)_{ji}
    \end{equation}
    
    Applying Jacobi's formula to the second term in the loss gives:
    \begin{align}
        \frac{d}{dx} \frac{1}{2} \log \det(A A^\top) 
            &= \frac{1}{2} \tr \left( (A A^\top)^{-1} \left( \frac{d}{dx} (A A^\top) \right) \right) \\
            &= \frac{1}{2} \tr \left( (A A^\top)^{-1} \left( \left( \frac{d}{dx} A \right) A^\top + A \left( \frac{d}{dx} A \right)^\top \right) \right)
    \end{align}
    and therefore
    \begin{align}
        \frac{d}{dA_{ij}} \frac{1}{2} \log \det(A A^\top) 
            &= \frac{1}{2} \tr \left( (A A^\top)^{-1} \left( E^{(ij)} A^\top + A E^{(ji)} \right) \right) \\
            &= \frac{1}{2} \tr \left( E^{(ij)} A^\top (A A^\top)^{-1} + E^{(ij)} A^\top (A A^\top)^{-1} \right) \\
            &= \left( A^\top (A A^\top)^{-1} \right)_{ji} \\
            &= A^\dagger_{ji}
    \end{align}
    where we used the cyclic and transpose properties of the trace and that $E^{(ji)\top} = E^{(ij)}$.
    
    The final term requires a derivative of $\tr(\Sigma (\mathbb{I} - A^\dagger A))$, which is equal to a derivative of $-\tr(\Sigma A^\dagger A)$. We use the formula for the derivative of the projection operator to get
    \begin{align}
        \frac{d}{dx} \frac{1}{2} \tr(\Sigma A^\dagger A) 
            &= \frac{1}{2} \tr \left( \Sigma \left( \frac{d}{dx} (A^\dagger A) \right) \right) \\
            &= \tr \left( \Sigma A^\dagger \left( \frac{d}{dx} A \right) (\mathbb{I} - A^\dagger A) \right)
    \end{align}
    again using the transpose property of the trace, and therefore
    \begin{equation}
        \frac{d}{dA_{ij}} \frac{1}{2} \tr(\Sigma A^\dagger A) = \tr( E^{(ij)} (\mathbb{I} - A^\dagger A) \Sigma A^\dagger ) = ((\mathbb{I} - A^\dagger A) \Sigma A^\dagger)_{ji}
    \end{equation}
    
    Putting the three expressions together, we have that
    \begin{equation}
        \frac{d}{dA} \mathcal{L} = \left( \Sigma A^\top - A^\dagger - \frac{1}{\sigma^2} (\mathbb{I} - A^\dagger A) \Sigma A^\dagger \right)^\top
    \end{equation}
\end{proof}

\begin{lemma} \label{lem:critical-points}
    The critical points of $\loss$ satisfy the following properties:
    \begin{enumerate}
        \item $A = U \Sigma^{-\frac{1}{2}}$ with $U U^\top = \mathbb{I}$
        \item $U^\top U$ commutes with $\Sigma$
    \end{enumerate}
\end{lemma}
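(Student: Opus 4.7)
The plan is to set the gradient from Lemma A.2 to zero and reduce the resulting matrix equation in two stages: first extracting claim (1) by a left-multiplication that collapses the reconstruction term, then feeding the resulting form of $A$ back in and factoring to obtain claim (2).

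Concretely, I would first write the critical-point equation as
\begin{equation*}
\Sigma A^\top = A^\dagger + \tfrac{1}{\sigma^2}(\mathbb{I} - A^\dagger A)\Sigma A^\dagger,
\end{equation*}
and left-multiply by $A$. Because $A$ has full row rank, $A A^\dagger = \mathbb{I}_d$; moreover $A(\mathbb{I} - A^\dagger A) = A - A A^\dagger A = 0$. So the right-hand side collapses and we obtain $A\Sigma A^\top = \mathbb{I}_d$. Setting $U := A\Sigma^{1/2}$ immediately gives $A = U\Sigma^{-1/2}$ with $UU^\top = \mathbb{I}_d$, which is claim (1).

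Next I would substitute this form back into the critical-point equation. With $M := AA^\top = U\Sigma^{-1}U^\top$, short computations give $A^\dagger = \Sigma^{-1/2}U^\top M^{-1}$, $\Sigma A^\top = \Sigma^{1/2}U^\top$, $A^\dagger A \Sigma A^\dagger = \Sigma^{-1/2} U^\top M^{-2}$, and hence $(\mathbb{I} - A^\dagger A)\Sigma A^\dagger = \Sigma^{1/2}U^\top M^{-1} - \Sigma^{-1/2}U^\top M^{-2}$. Pre-multiplying the equation by $\Sigma^{1/2}$ and collecting terms, the equation factors as
\begin{equation*}
\bigl(\Sigma U^\top - U^\top M^{-1}\bigr)\bigl(\mathbb{I}_d - \sigma^{-2} M^{-1}\bigr) = 0.
\end{equation*}
In the generic case where $\sigma^2$ is not an eigenvalue of $M^{-1}$, the right factor is invertible, so $\Sigma U^\top = U^\top M^{-1}$. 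Left-multiplying by $U$ and using $UU^\top = \mathbb{I}_d$ identifies $M^{-1} = U\Sigma U^\top$; thus $\Sigma U^\top = U^\top U\,\Sigma U^\top$, i.e.\ $(\mathbb{I}_D - U^\top U)\Sigma U^\top = 0$. This says that $\Sigma$ preserves the image of $U^\top$, which equals the image of the projection $U^\top U$. Symmetry of $\Sigma$ then forces its orthogonal complement to be invariant too, so $\Sigma$ commutes with $U^\top U$, yielding claim (2).

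The main obstacle is the non-generic case where $M^{-1}$ has $\sigma^2$ as an eigenvalue: the right factor is then singular and we cannot cancel it directly. I would handle this by diagonalising $M^{-1} = Q\Lambda Q^\top$ and reading the factored identity column by column along the frame $U^\top Q$: the columns indexed by $\lambda_i \neq \sigma^2$ must be eigenvectors of $\Sigma$, while for the remaining columns the constraints $UU^\top = \mathbb{I}_d$ and $U\Sigma^{-1}U^\top = M$ still pin down enough structure to force the image of $U^\top$ to split as a sum of $\Sigma$-eigenspaces. Modulo this technical case analysis, the rest of the argument is essentially bookkeeping with pseudoinverse identities.
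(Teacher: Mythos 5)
Your treatment of claim (1) coincides with the paper's: left-multiply the stationarity equation by $A$, use $AA^\dagger=\mathbb{I}_d$ and $A(\mathbb{I}-A^\dagger A)=0$, and set $U=A\Sigma^{1/2}$. For claim (2) you take a genuinely different route: the paper right-multiplies by $A$, drops the two symmetric terms from the resulting symmetry condition, and argues that $\Sigma$ must share an eigenbasis with $A^\top A$ because it commutes with $B=A^\top A-\sigma^{-2}A^\dagger A$; you instead substitute $A=U\Sigma^{-1/2}$ and factor the equation as $(\Sigma U^\top-U^\top M^{-1})(\mathbb{I}_d-\sigma^{-2}M^{-1})=0$ with $M=AA^\top=U\Sigma^{-1}U^\top$. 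Your algebra checks out, and in the generic case (second factor invertible) the invariant-subspace argument correctly yields that $\Sigma$ commutes with $U^\top U$; this factorization is, if anything, more transparent than the paper's commuting-basis step.

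The deferred singular case, however, is a genuine gap, and the case analysis you sketch cannot be completed, because claim (2) is actually false there. Take $D=2$, $d=1$, $\Sigma=\operatorname{diag}(2,\tfrac12)$, $\sigma^2=1$, and $A=a^\top$ with $a=(1/\sqrt{3},\sqrt{2/3})^\top$. Then $\lVert a\rVert^2=1$ and $a^\top\Sigma a=1$, so $A^\dagger=a$, $M=AA^\top=1=\sigma^{-2}$ (your singular case), and the critical-point equation reads $\Sigma a-a-(\mathbb{I}-aa^\top)\Sigma a=\Sigma a-a-(\Sigma a-a)=0$: $A$ is a critical point, yet $U^\top=\Sigma^{1/2}a$ is not an eigenvector of $\Sigma$, so $U^\top U$ does not commute with $\Sigma$. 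Such critical points form a continuum whenever $\sigma^2$ lies strictly between the smallest and largest eigenvalues of $\Sigma$, so on the kernel of the second factor the first factor really is unconstrained, contrary to your hope that the constraints ``pin down enough structure.'' Note that the paper's own proof has exactly the same blind spot: the step ``$\Sigma$ commutes with $B$, hence shares a basis with $A^\top A$'' fails when $B$ has repeated eigenvalues, and in the counterexample $B=0$. These offending critical points are saddles (along the constraint $A\Sigma A^\top=\mathbb{I}$ the loss is locally maximized there), so the subsequent theorem about the minimizer is unaffected; but your write-up, like the lemma itself, would need either a genericity assumption ($\sigma^{-2}$ not an eigenvalue of $AA^\top$) or a restriction to local minima rather than all critical points.
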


\begin{proof}
    Using lemma \ref{lem:loss-deriv}, the critical points satisfy
    \begin{equation}
        \Sigma A^\top - A^\dagger - \frac{1}{\sigma^2} (\mathbb{I} - A^\dagger A) \Sigma A^\dagger = 0
    \end{equation}
    
    By multiplying by $A$ from the left we have
    \begin{equation}
        A \Sigma A^\top = \mathbb{\mathbb{I}}_d
    \end{equation}
    meaning that $U = A \Sigma^{\frac{1}{2}}$ must have orthonormal rows (since $U U^\top = \mathbb{I}$). With this definition, we can write $A = U \Sigma^{-\frac{1}{2}}$. 
    
    If we now multiply by $A$ from the right, we get
    \begin{equation}
        \Sigma A^\top A - A^\dagger A - \frac{1}{\sigma^2} \Sigma A^\dagger A + \frac{1}{\sigma^2} A^\dagger A \Sigma A^\dagger A = 0
    \end{equation}
    Noting that the second and fourth terms are symmetric (since $A^\dagger A$ is symmetric), this means that the remaining terms must be symmetric:
    \begin{equation}
        \Sigma A^\top A - \frac{1}{\sigma^2} \Sigma A^\dagger A = A^\top A \Sigma - \frac{1}{\sigma^2} A^\dagger A \Sigma
    \end{equation}
    Since $A^\top A$ commutes with $A^\dagger A$, they are simultaneously diagonalizable, and since they are both symmetric, they share an orthonormal basis of eigenvectors. Clearly $A^\top A - A^\dagger A / \sigma^2$ has the same basis. Since this matrix commutes with $\Sigma$, it must share a basis with $\Sigma$ and hence $\Sigma$ has the same basis as $A^\top A$. This means that $\Sigma$ commutes with $A^\top A$. 
    
    Expanding $A$ in terms of $U$, this means that
    \begin{equation}
        \Sigma A^\top A = \Sigma^{\frac{1}{2}} U^\top U \Sigma^{-\frac{1}{2}} = \Sigma^{-\frac{1}{2}} U^\top U \Sigma^{\frac{1}{2}} = A^\top A \Sigma
    \end{equation}
    and therefore $\Sigma U^\top U = U^\top U \Sigma$, meaning that $\Sigma$ and $U^\top U$ commute. 
\end{proof}

Consider as an example the case where $\Sigma$ is diagonal. $U^\top U$ is a projection matrix and in this case must be diagonal due to commuting with $\Sigma$. As a result, it must have exactly $d$ ones and $D-d$ zeros along the diagonal. This means that the rows of $U$ are a basis for the $d$ dimensional axis-aligned subspace corresponding to the $d$ nonzero entries. In the case of a non-diagonal $\Sigma$, this generalizes to the rows of $U$ spanning the same subspace as some subset of $d$ eigenvectors of $\Sigma$. This leads to the expression of the loss function in the next theorem.

\begin{theorem}
    Let $\Sigma$ have the eigen-decomposition $V \Lambda V^\top$ with $\Lambda = \operatorname{diag}(\lambda)$. Let $U^\top U$ have the eigen-decomposition $V E V^\top$, with 
    $E = \operatorname{diag}(\alpha)$. Then the minimum of the loss is satisfied by $\alpha$ such that
    \begin{equation}
        \mathcal{L}_\alpha = \sum_{i=1}^D \frac{1}{2} \alpha_i \left( \log \lambda_i - \frac{1}{\sigma^2} \lambda_i \right)
    \end{equation}
    is minimal, subject to the constraint $\alpha_i \in \{0, 1\}$ with $\sum_{i=1}^D \alpha_i = d$.
\end{theorem}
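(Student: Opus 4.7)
The plan is to invoke Lemma \ref{lem:critical-points} to parametrize the critical points, simultaneously diagonalize $\Sigma$ and $U^\top U$, and then evaluate each of the three terms of $\loss$ in the common eigenbasis. Because $UU^\top = \mathbb{I}_d$, the matrix $U^\top U$ satisfies $(U^\top U)^2 = U^\top(UU^\top)U = U^\top U$, so it is a symmetric orthogonal projection of rank $d$; its eigenvalues must therefore lie in $\{0,1\}$ with exactly $d$ ones. Combined with the commutation $\Sigma\, U^\top U = U^\top U\, \Sigma$ provided by Lemma \ref{lem:critical-points}, there is an orthogonal $V$ achieving $\Sigma = V\Lambda V^\top$ and $U^\top U = VEV^\top$ simultaneously, with $E = \operatorname{diag}(\alpha)$ and $\alpha$ already satisfying the stated constraints $\alpha_i \in \{0,1\}$, $\sum_i \alpha_i = d$.

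Next I would introduce $\tilde U = UV$, which satisfies $\tilde U \tilde U^\top = \mathbb{I}_d$ and $\tilde U^\top \tilde U = E$. After permuting coordinates so that the active indices $\{i : \alpha_i = 1\}$ come first, these two conditions force $\tilde U = [Q,\, 0]$ for some $d \times d$ orthogonal matrix $Q$, and hence $A = \tilde U \Lambda^{-1/2} V^\top$. In this parametrization each of the three terms of $\loss$ becomes straightforward to evaluate: the quadratic term $\tfrac12 \tr(A\Sigma A^\top)$ equals $\tfrac{d}{2}$; the log-determinant term yields $-\tfrac12 \log \det(AA^\top) = \tfrac12 \sum_i \alpha_i \log \lambda_i$, since $AA^\top = Q\, \Lambda_{\mathrm{sel}}^{-1} Q^\top$ where $\Lambda_{\mathrm{sel}}$ collects the active eigenvalues; and the reconstruction term reduces, using the identity $A^\dagger A = U^\top U$, to $\tfrac{1}{2\sigma^2}\sum_i (1-\alpha_i)\lambda_i$.

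Combining these gives $\loss = \tfrac{d}{2} + \tfrac{1}{2\sigma^2}\sum_i \lambda_i + \tfrac12 \sum_i \alpha_i(\log\lambda_i - \tfrac{1}{\sigma^2}\lambda_i)$. Since the first two summands do not depend on $\alpha$, minimizing $\loss$ over the set of critical points is equivalent to minimizing $\loss_\alpha$ subject to $\alpha_i \in \{0,1\}$ and $\sum_i \alpha_i = d$, exactly as claimed. The main obstacle I anticipate is establishing the identity $A^\dagger A = U^\top U$: a priori $A^\dagger A$ is only the orthogonal projection onto the row space of $A$, which need not coincide with $U^\top U$. The commutation of $\Sigma$ with $U^\top U$ is what saves the day: it forces $\ker U$ to be $\Sigma^{-1/2}$-invariant, so that $\ker A = \ker(U\Sigma^{-1/2}) = \ker U$, and hence $A^\dagger A$ agrees with the projection $U^\top U$.
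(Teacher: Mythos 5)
Your proposal is correct and follows essentially the same route as the paper: restrict to the critical points of Lemma~\ref{lem:critical-points}, pass to the common eigenbasis $V$ of $\Sigma$ and $U^\top U$, and evaluate the three loss terms to obtain a constant plus $\mathcal{L}_\alpha$ with $\alpha_i \in \{0,1\}$, $\sum_i \alpha_i = d$. The only differences are cosmetic but welcome: you compute the log-determinant via the explicit block form $\tilde U = UV = [Q,\,0]$ instead of the paper's $\log\det = \tr\log$ and matrix-function identity, and you actually justify $A^\dagger A = U^\top U$ (via $\ker A = \ker U$, using that $\Sigma^{\pm 1/2}$ commutes with $U^\top U$), which the paper merely asserts.
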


\begin{proof}
    Let's note a couple of properties. First, we have $(U \Sigma U^\top)^k = U \Sigma^k U^\top$ due to $\Sigma$ commuting with $U^\top U$, so we can say that $f(U \Sigma U^T) = U f(\Sigma) U^\top$ for any matrix function $f$ with a Taylor series. Next, $U^\top U$ is an orthogonal projection matrix, so $E$ is a diagonal matrix with ones or zeros on the diagonal. We know that the rank of $U$ is $d$, hence $E$ has exactly $d$ ones and $D-d$ zeros along the diagonal. Therefore we have the constraint $\alpha_i \in \{0, 1\}$ with $\sum_{i=1}^D \alpha_i = d$. Next, note that $A^\dagger A = U^\top U$.

    Now we substitute back into the loss in terms of $U$:
    \begin{align} \label{eq:loss-in-terms-of-U}
        \mathcal{L} 
            &= \frac{1}{2} \tr(U U^\top) - \frac{1}{2} \log \det(U \Sigma^{-1} U^\top) + \frac{1}{2\sigma^2} \tr(\Sigma (\mathbb{I} - U^\top U)) \\
            &= \frac{1}{2} \tr(U \log(\Sigma) U^\top) - \frac{1}{2\sigma^2} \tr(U \Sigma U^\top) + \text{const.}
    \end{align}
    where we used that 
    \begin{equation}
        \log \det(U \Sigma^{-1} U^\top) = \tr \log(U \Sigma^{-1} U^\top) = -\tr(U \log(\Sigma) U^\top)
    \end{equation}
    Note that $\tr(UU^\top)$ is constant.
    Consider that 
    \begin{equation}
        \tr(U \Sigma U^\top) = \tr(U^\top U \Sigma U^\top U) = \tr(V E D E V^\top) = \tr(EDE) = \alpha \cdot \lambda
    \end{equation}
    The same logic holds for the term with $\log(\Sigma)$.
    Therefore, dropping constant terms, the loss can be written in terms of $\alpha$ and $\lambda$:
    \begin{equation}
        \mathcal{L}_\alpha = \sum_{i=1}^D \frac{1}{2} \alpha_i \left( \log \lambda_i - \frac{1}{\sigma^2} \lambda_i \right)
    \end{equation}
\end{proof}

\begin{figure}[t]
    \centering
    \includegraphics[width=\textwidth]{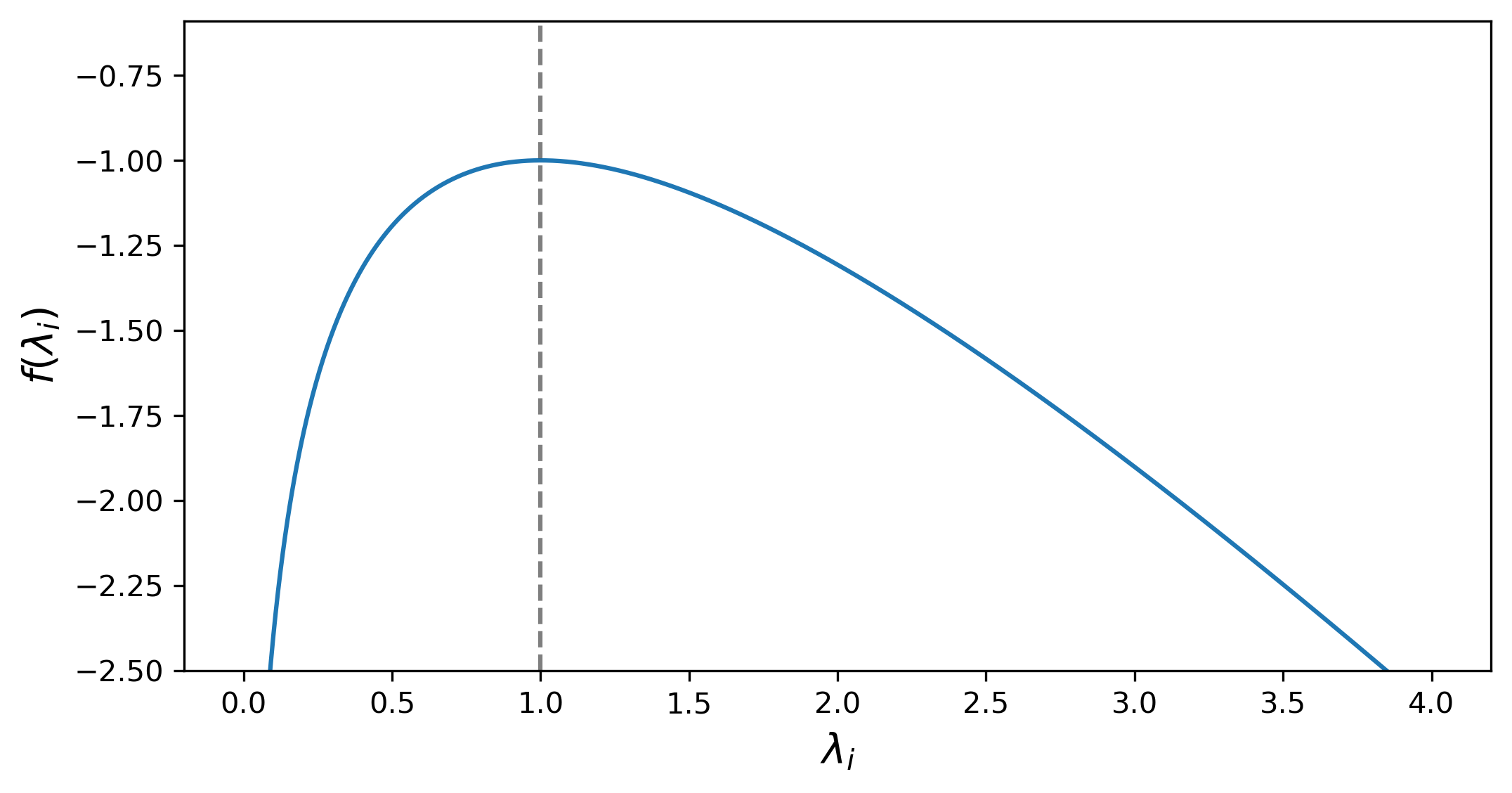}
    \caption{Plot of $f(\lambda_i) = \log \lambda_i - \lambda_i/\sigma^2$ with $\sigma = 1$, showing maximum value at $\lambda_i = \sigma^2$ and unbounded behavior on either side.}
    \label{fig:linear-model-loss-fn}
\end{figure}

The loss will take different values depending on which elements of $\alpha$ are nonzero. Define $f(\lambda_i) = \log \lambda_i - \lambda_i / \sigma^2$. The loss will be minimized when the nonzero $\alpha_i$ correspond to those values of $f(\lambda_i)$ which are minimal. Clearly $f''(\lambda_i) < 0$, so $f$ has only one maximum at $\lambda_i = \sigma^2$ and is unbounded below on either side of this maximum (see \cref{fig:linear-model-loss-fn}). Consider the two extreme cases:

\begin{enumerate}
    \item All eigenvalues $\lambda$ are smaller than $\sigma^2$. The minimal values of $f(\lambda_i)$ will occur for the smallest values of $\lambda_i$. Hence the $d$ smallest eigenvalues of $\Sigma$ will be selected.
    \item All eigenvalues $\lambda$ are larger than $\sigma^2$. The minimal values of $f(\lambda_i)$ will occur for the largest values of $\lambda_i$. Hence the $d$ largest eigenvalues of $\Sigma$ will be selected.
\end{enumerate}

In the intermediate regime, there will be a phase transition between these two extremes.

In the first case, the reconstructed manifold will be a projection onto the $d$-dimensional subspace with the lowest variance, exactly the opposite result to PCA. In the second case, the reconstructed manifold will be a projection onto the $d$-dimensional subspace with the highest variance, exactly the same result as PCA. If we maximize the likelihood on the manifold without any reconstruction loss, corresponding to the $\sigma^2 \rightarrow \infty$ limit, we actually learn the lowest entropy manifold. It makes more sense to learn the highest entropy manifold as in PCA. We can ensure this is the case by adding Gaussian noise of variance $\sigma^2$ to the data, ensuring that the minimum eigenvalue of the covariance matrix is at least $\sigma^2$, even if the original data is degenerate.

\section{Implementation Details} \label{app:implementation}

In implementing the trace estimator, we have to make a number of choices, each elaborated below. The main reasons for each choice are given first, with more technical details deferred to later in the appendix.

\paragraph{Gradient to encoder or decoder}

The log-determinant term can be formulated either in terms of the Jacobian of the encoder (see \cref{eq:log-det-grad-encoder}) or the decoder (see \cref{eq:log-det-grad-decoder} ). As discussed in \cref{sec:well-behaved-loss} we find that formulating it in terms of the encoder Jacobian leads to more stable training. Since the training also minimizes the squared norm of $f(x)$, we speculate that having gradient from this term and the log-determinant term both being sent to the encoder allows the encoder to more efficiently shape the latent space distribution. We note the similarity of this formulation to the standard change-of-variables loss used to train normalizing flows. If we instead send the gradient of the log-determinant to the decoder, the information about how the encoder can change can only reach it via the reconstruction term, which doesn't allow the encoder to deviate significantly from being the pseudoinverse of the decoder. A change in the decoder will therefore lead to a corresponding change in the encoder, but this is a less direct process than sending gradient to the encoder directly. In addition, this formulation means the decoder is optimized only to minimize reconstruction loss, meaning that it will likely be an approximate pseudoinverse for the encoder, a condition we require for the accuracy of the surrogate estimator.

\paragraph{Space in which trace is performed}

Considering \cref{eq:trace-decoder,eq:trace-encoder}, the central component of the surrogate is a trace (estimator). Making use of the cyclic property of the trace, i.e.~$\tr(A^\top B) = \tr(B A^\top)$ for any $A,B \in \R^{D \times d}$, we can choose which expansion of the trace to estimate:
\begin{equation}
    \sum_{i=1}^d (A^\top B)_{ii} = \tr(A^\top B) = \tr(B A^\top) = \sum_{i=1}^D (B A^\top)_{ii}.
\end{equation}
The variance of a stochastic trace estimator depends on the noise used but in general is roughly proportional to the squared Frobenius norm of the matrix (see \cref{app:trace-estimators}). Given two matrices $A,B \in \R^{D \times d}$ with $d < D$, it is likely that $\lVert A^\top B \rVert_F^2 < \lVert B A^\top \rVert_F^2$. This statement is not true for all $A$ and $B$, but is almost always fulfilled when $d \ll D$. 

Transferred to our context: In general the matrices $f'(x) \in \R^{d \times D}$ and $g'(z) \in \R^{D \times d}$ are rectangular and can be multiplied together in either the $f'(x) g'(z) \in \R^{d \times d}$ order or $g'(z) f'(x) \in \R^{D \times D}$ order. This matters for applying the trace since generally $d < D$. 

A more precise statement (proven in \cref{app:trace-variance}) is that if the entries of $A$ and $B$ are sampled from standard normal distributions, then $E[\lVert A^\top B \rVert_F^2] = D d^2$ versus $E[\lVert B A^\top \rVert_F^2] = D^2 d$. For $d \ll D$ the difference becomes significant. The difference between the two estimators may not be large if the two matrices have special structure, in particular if they share a basis. However, since the terms being multiplied in our case are a Jacobian matrix and the derivative of another Jacobian matrix with respect to a parameter $\theta_j$ or $\phi_j$, it is unlikely that any such particular structure is present.

As a result, when the latent space is smaller than the data space, the preferable estimator is the one that performs the trace in the latent space, meaning that products in the estimator have the order $f'(x) g'(z)$ (see \cref{tab:log-det-estimators}). In \cref{app:trace-estimator-data-latent-comparison}, we experimentally test the convergence of trace estimators with increasing Hutchinson samples, performed in both data and latent space, confirming that convergence is much faster when performing the trace in latent space.

\begin{table}[]
\caption{Different possible estimators for the gradient of the log-determinant term. \\ \quad}
\centering
\begin{tabular}{@{}ccc@{}}
\toprule
                      & gradient to encoder                                                & gradient to decoder                                                 \\ \midrule
trace in data space   & $-\tr \left( g'(z) \left( \frac{\partial}{\partial \phi_j} f'(x) \right) \right)$ & $\tr \left( \left( \frac{\partial}{\partial \theta_j} g'(x) \right) f'(x) \right)$ \\
trace in latent space & $-\tr \left( \left( \frac{\partial}{\partial \phi_j} f'(x) \right) g'(z) \right)$ & $\tr \left( f'(x) \left( \frac{\partial}{\partial \theta_j} g'(z) \right) \right)$ \\ \bottomrule
\end{tabular}
\label{tab:log-det-estimators}
\end{table}

\paragraph{Type of gradient}
Consider the estimator:
\begin{equation}
    \tr \left( \left( \frac{\partial}{\partial \phi_j} f'(x) \right) g'(z) \right) = \frac{\partial}{\partial \phi_j} E_\epsilon \left[ \epsilon^\top f'(x) \texttt{stop\_gradient}(g'(z)) \epsilon \right]
\end{equation}
Ignoring the stop gradient operation for now, this requires computing terms of the form $\epsilon^\top f'(x) g'(z) \epsilon$.
In order to avoid calculating full Jacobian matrices, we can implement the calculation using some combination of vector-Jacobian (\texttt{vjp}) or Jacobian-vector (\texttt{jvp}) products, which are efficient to compute with backward-mode respectively forward-mode automatic differentiation. Note that we can use the result from one product as the vector for another \texttt{vjp} or \texttt{jvp}. For example, $v_1 := (\epsilon^\top f'(x))^\top \in \R^D$ yields a vector, so we can compute $\epsilon^\top f'(x) g'(z) \epsilon = v_1^\top g'(z) \epsilon$  via two vector-Jacobian products.

This gives us three choices: i) backward mode only (two \texttt{vjp}), ii) forward mode only (two \texttt{jvp}) or iii) a mix of both (one \texttt{jvp} and one \texttt{jvp}). We opt to use mixed mode (see \cref{app:autodiff-mode} for further details).

\paragraph{Trace estimator noise}
Trace estimators rely on the identity $E_\epsilon[\epsilon^\top A \epsilon] = \tr(A E_\epsilon[\epsilon \epsilon^\top]) = \tr(A)$, meaning that we require only $E_\epsilon[\epsilon \epsilon^\top] = \mathbb{I}$ for the noise variable. The choice comes down mainly to the variance of the estimator. Among all noise vectors whose entries are sampled independently, Rademacher noise has the lowest variance \citep{hutchinson1989stochastic}. However, if the entries are sampled from a standard normal distribution and then scaled to have length $\sqrt{d}$ where $d$ is the dimension of $\epsilon$, the entries are no longer independent and the variance of the estimator is comparable to Rademacher noise \citep{girard1989fast}. When using a single Hutchinson sample, we choose to use scaled Gaussian noise for its low variance, and since it covers more directions than Rademacher noise (covering the hypersphere uniformly, rather than at a fixed $2^d$ points). When we have more than one Hutchinson sample, we additionally orthogonalize the vectors as this further reduces variance. More details are in \cref{app:trace-estimators}.

\paragraph{Number of noise samples}
We can choose to use between 1 and $d$ noise samples in the trace estimator (with $d$ samples we already can calculate the exact trace, so more samples are not necessary). Denote the number of samples by $K$. We find that in general $K = 1$ is enough for good performance, especially if the batch size is sufficiently high.

\subsection{Variance of trace estimator} \label{app:trace-variance}

\begin{theorem}
    Let $A, B \in \R^{D \times d}$ where the entries of both matrices are sampled from a standard normal distribution. Then 
    \begin{equation}
        E \left[ \lVert A^\top B \rVert_F^2 \right] = d^2 D \qquad \textrm{and} \qquad E \left[ \lVert B A^\top \rVert_F^2 \right] = d D^2
    \end{equation}
\end{theorem}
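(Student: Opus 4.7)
The plan is to compute each expectation directly by expanding the Frobenius norm entry-by-entry and exploiting independence of the Gaussian entries. Recall that for any matrix $M$, $\|M\|_F^2 = \sum_{i,j} M_{ij}^2$, so it suffices to determine the expected squared value of a single entry of $A^\top B$ and of $B A^\top$, then multiply by the number of entries.

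For the first claim, I would write $(A^\top B)_{ij} = \sum_{k=1}^{D} A_{ki} B_{kj}$, so that
\begin{equation}
    (A^\top B)_{ij}^2 = \sum_{k,\ell=1}^{D} A_{ki} A_{\ell i} B_{kj} B_{\ell j}.
\end{equation}
Since the entries of $A$ and $B$ are independent and zero-mean with unit variance,
\begin{equation}
    \E[A_{ki} A_{\ell i} B_{kj} B_{\ell j}] = \E[A_{ki} A_{\ell i}] \, \E[B_{kj} B_{\ell j}] = \delta_{k\ell} \cdot \delta_{k\ell} = \delta_{k\ell},
\end{equation}
so $\E[(A^\top B)_{ij}^2] = D$. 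Summing over the $d^2$ entries of the $d \times d$ matrix $A^\top B$ gives $\E[\|A^\top B\|_F^2] = d^2 D$.

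For the second claim, I would do the same computation for $(B A^\top)_{ij} = \sum_{k=1}^{d} B_{ik} A_{jk}$, which is now a $D \times D$ matrix. The analogous expansion yields $\E[(B A^\top)_{ij}^2] = d$, and summing over the $D^2$ entries gives $\E[\|B A^\top\|_F^2] = d D^2$.

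There is no real obstacle here — the argument is a direct second-moment calculation using independence of the Gaussian entries. The only subtlety is to be careful that $A$ and $B$ are independent (so the cross-terms factor) and to correctly count: $A^\top B$ has $d^2$ entries each with $D$ contributing index pairs, whereas $B A^\top$ has $D^2$ entries each with only $d$ contributing index pairs. This asymmetry is exactly what produces the factor-of-$D/d$ gap motivating the preference for the latent-space trace estimator in Appendix~\ref{app:implementation}.
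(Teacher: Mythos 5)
Your proof is correct and follows essentially the same route as the paper's: expand the Frobenius norm entrywise, use independence and zero mean of the Gaussian entries so that only the $k=\ell$ (quadratic) terms survive, and count $d^2$ entries with $D$ surviving terms each versus $D^2$ entries with $d$ each. The only cosmetic difference is that you organize the calculation per entry of the product matrix, whereas the paper sums over all indices at once; the content is identical.
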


\begin{proof}
    Consider first $\lVert A^\top B \rVert_F^2$. We can write this as
    \begin{equation}
        \lVert A^\top B \rVert_F^2 = \sum_{i=1}^d \sum_{j=1}^d \left( \sum_{k=1}^D A_{ki} B_{kj} \right)^2 = \sum_{i,j}^d \sum_{k,l}^D A_{ki} A_{li} B_{kj} B_{lj}
    \end{equation}
    Taking an expectation over this expression, the only nonzero contributions will be from terms where the $A$ and $B$ terms are both quadratic, since if not, the term will be multiplied by $E[X] = 0$ where $X$ is standard normal. This requires $k = l$, giving
    \begin{align}
        E \left[ \lVert A^\top B \rVert_F^2 \right] 
            &= \sum_{i,j}^d \sum_{k}^D E \left[ A_{ki}^2 B_{kj}^2 \right] \\
            &= \sum_{i,j}^d \sum_{k}^D E \left[ A_{ki}^2 \right] E \left[ B_{kj}^2 \right] \\
            &= d^2 D
    \end{align}
    since $A_{ki}$ and $B_{kj}$ are independent and the expectation of the square of a standard normal variable is its variance, i.e.~1. 
    The equivalent expressions for $\lVert B A^\top \rVert_F^2$ can be obtained by swapping $d$ and $D$ in these expressions. 
\end{proof}

\subsubsection{Experimental confirmation} \label{app:trace-estimator-data-latent-comparison}

To evaluate the convergence behavior of the trace estimation, which is exact for $d$ and $D$ Hutchinson samples in latent and data space respectively, we compute the relative gradient distance of the resulting surrogate gradient with respect to the exact solution as a function of Hutchinson samples $K$:
\begin{equation}\label{eq:rel-grad-dist}
    \text{relative gradient distance}(K)
    = \frac{\| \nabla \text{surrogate}(K) - \nabla \text{exact} \|_2}{\|\nabla \text{exact}\|_2}.
\end{equation}
Here, $\nabla \text{surrogate}(K)$ denotes the gradient of the surrogate loss term after $K$ Hutchinson samples and $\nabla \text{exact}$ the gradient of the exact surrogate loss term, i.e.~after $d$ or $D$ samples.

\begin{figure}[t]
    \centering
    \includegraphics[width=\textwidth]{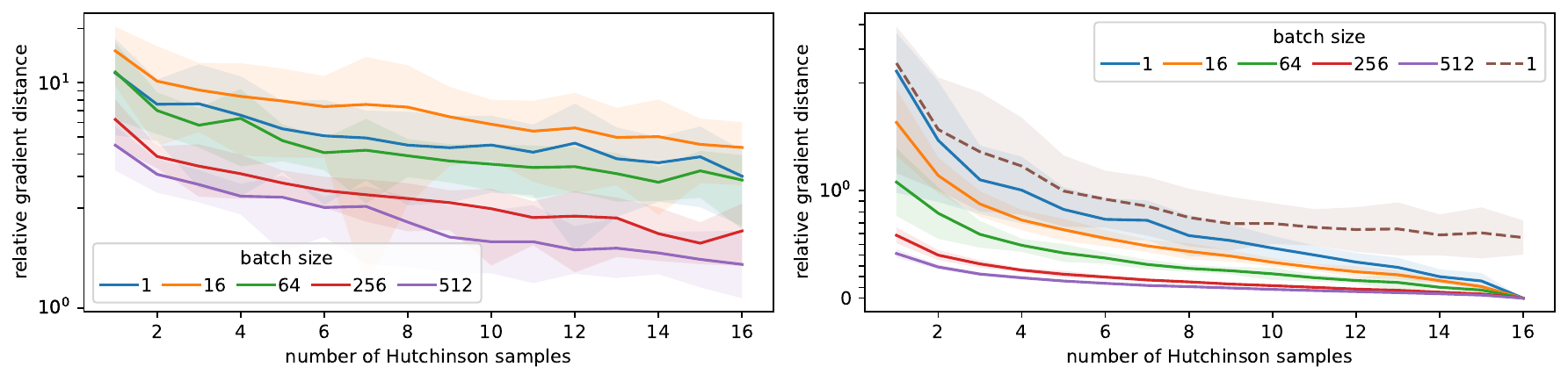}
    \caption{Relative gradient distance to the exact surrogate gradient as a function of Hutchinson samples for varying batch sizes. \textit{(Left)} Trace estimation in data space. \textit{(Right)} Trace estimation in latent space. We estimate the trace using orthogonalized Gaussian noise (see \cref{app:reducing-hutchinson-variance}) (solid lines), but also feature non-orthogonalized samples for batch size 1 in latent space (dashed line). Note the different scales on the y-axes and the use of \texttt{symlog} in the right-hand plot. We evaluate the trace estimation on the surrogate estimator with gradient to encoder as specified in table \ref{tab:log-det-estimators} for a converged model trained on conditional MNIST ($d=16$, $D=784$).}
    \label{fig:hutchinson-comp-data-latent}
\end{figure}

In \cref{fig:hutchinson-comp-data-latent} one can see a clear decrease in gradient distance and its variance when computing the trace in latent space instead of data space. Furthermore, we note that increasing the batch size also contributes to fast and steady convergence, which is a result of sampling an independent noise sample per batch instance.

\subsection{Forward/backward automatic differentiation} \label{app:autodiff-mode}

Two of the basic building blocks of automatic differentiation (autodiff) libraries are the vector-Jacobian product (\texttt{vjp}) and the Jacobian-vector product (\texttt{jvp}). The \texttt{vjp} is implemented by backward-mode autodiff and computes a vector multiplied with a Jacobian matrix from the left, along with the output of the function being used:
\begin{equation}
    f(x),~ \epsilon^\top f'(x) = \texttt{vjp}(f, x, \epsilon)
\end{equation}
In PyTorch, this is implemented by the \texttt{torch.autograd.functional.vjp} function, or by first computing $f(x)$, then using \texttt{torch.autograd.grad}. 

The \texttt{jvp} is implemented by forward-mode autodiff and computes a vector multiplied by a Jacobian matrix from the right, along with the output of the function being used:
\begin{equation}
    f(x),~ f'(x) \epsilon = \texttt{jvp}(f, x, \epsilon)
\end{equation}
In PyTorch, this is implemented by the \texttt{torch.autograd.forward\_ad} package.

Our preferred estimator for the log-determinant is the following (see \cref{sec:well-behaved-loss}):
\begin{equation}
    -\frac{1}{K} \sum_{k=1}^K \epsilon_k^\top f'(x) \texttt{stop\_gradient} \left( g'(z) \epsilon_k \right)
\end{equation}
Therefore we need to compute terms of the form $\epsilon^\top f'(x) g'(z) \epsilon$, with a \texttt{stop\_gradient} operation. The \texttt{stop\_gradient} operation is implemented by applying the \texttt{.detach()} method to a tensor in PyTorch. We have the following options. Note that we can use a product obtained from \texttt{vjp} or \texttt{jvp} as a vector input to a subsequent product.

\paragraph{Backward mode}
This mode uses only vector-Jacobian products, requiring backward-mode autodiff.
\begin{enumerate}
    \item $v_1^\top = \epsilon^\top f'(x)$ \hfill $z,~ v_1^\top = \texttt{vjp}(f, x, \epsilon)$
    \item $v_2^\top = v_1^\top g'(z)$ \hfill $\hat x,~ v_2^\top = \texttt{vjp}(g, z, v_1)$
    \item $\epsilon^\top f'(x) g'(z) \epsilon = v_2^\top \epsilon$
\end{enumerate}

\paragraph{Forward mode}
This mode uses only Jacobian-vector products, requiring forward-mode autodiff.
\begin{enumerate}
    \item $v_1 = g'(z) \epsilon$ \hfill $\hat x,~ v_1 = \texttt{jvp}(g, z, \epsilon)$
    \item $v_2 = f'(x) v_1$ \hfill $z,~ v_2 = \texttt{jvp}(f, x, v_1)$
    \item $\epsilon^\top f'(x) g'(z) \epsilon = \epsilon^\top v_2$
\end{enumerate}

\paragraph{Mixed mode}
This mode uses one vector-Jacobian and one Jacobian-vector product, requiring both forward- and backward-mode autodiff.
\begin{enumerate}
    \item $v_1^\top = \epsilon^\top f'(x)$ \hfill $z,~ v_1 = \texttt{vjp}(f, x, \epsilon)$
    \item $v_2 = g'(z) \epsilon$ \hfill $\hat x,~ v_2 = \texttt{jvp}(g, z, \epsilon)$
    \item $\epsilon^\top f'(x) g'(z) \epsilon = v_1^\top v_2$
\end{enumerate}

We prefer using backward mode autodiff where possible, since we find that it is slightly faster than forward mode in PyTorch. However for our estimator of choice, we use mixed mode, since this is most easily implemented. Using backward mode would require a $\texttt{stop\_gradient}$ operation to be introduced after the second step, but in a way that allows gradient to flow back to $f'(x)$. While we believe this is possible if implemented carefully, we did not pursue this option. In mixed mode, we can easily detach the gradient of $v_2$ without affecting the first step of the calculation.

\subsection{Properties of trace estimator noise} \label{app:trace-estimators}
Hutchinson style trace estimators \citep{hutchinson1989stochastic} have the form $E_\epsilon[\epsilon^\top A \epsilon]$ and equal $\tr(A)$ in expectation. If $A$ is skew-symmetric ($A^\top = -A$), then $(\epsilon^\top A \epsilon)^\top = \epsilon^\top A^\top \epsilon = -\epsilon^\top A \epsilon$ and hence $\epsilon^\top A \epsilon = 0$ with zero variance. Since any matrix can be decomposed into a symmetric and skew-symmetric part, the variance in the estimator comes only from the symmetric part of $A$, namely $A_s = (A + A^\top)/2$. From now on, suppose $A$ is symmetric and if not, substitute $A_s$ for $A$.

\subsubsection{Rademacher noise}
If the entries of $\epsilon$ are sampled independently from a distribution with zero mean and unit variance, then the variance of the estimator is minimized by the Rademacher distribution which samples the values $-1$ and $1$ each with probability half. This estimator achieves the following variance for symmetric $A$ (see proposition 1 in \cite{hutchinson1989stochastic}):
\begin{equation}
    V_\epsilon [ \epsilon^\top A \epsilon ] = 2 \sum_{i \neq j} A_{ij}^2
\end{equation}

\subsubsection{Gaussian noise}
With standard normal noise, the estimator is unbiased, but the variance is higher (see again \cite{hutchinson1989stochastic}):
\begin{equation}
    V_\epsilon [ \epsilon^\top A \epsilon ] = 2 \sum_{i, j} A_{ij}^2 = 2 \lVert A \rVert_F^2
\end{equation}
i.e.~twice the Frobenius norm.

\subsubsection{Scaled Gaussian noise}
By contrast
\begin{equation}
    E_\epsilon \left[ \frac{\epsilon^\top A \epsilon}{\epsilon^\top \epsilon} \right] = \frac{1}{d} \tr(A)
\end{equation}
where $\epsilon$ is a standard normal variable in $\R^d$. The variance of this estimator for symmetric $A$ (see theorem 2.2 in \cite{girard1989fast}) is:
\begin{equation}
    V_\epsilon \left[ \frac{\epsilon^\top A \epsilon}{\epsilon^\top \epsilon} \right] = \frac{2}{d + 2} \sigma^2(\lambda(A))
\end{equation}
where $\sigma^2(\lambda(A))$ denotes the variance of the eigenvalues of $A$.

We can write this estimator in the ``Hutchinson'' form by sampling $\epsilon$ from a standard normal distribution, then normalizing it such that its length is $\sqrt{d}$. Then we have
\begin{equation}
    E_\epsilon [ \epsilon^\top A \epsilon ] = \tr(A)
\end{equation}
and 
\begin{equation}
    V_\epsilon [ \epsilon^\top A \epsilon ] = \frac{2d^2}{d + 2} \sigma^2(\lambda(A))
\end{equation}

\subsubsection{Comparison}
When the dimension of $A$ becomes large, the variance of Rademacher and scaled Gaussian estimators are comparable. Suppose that the eigenvalues of $A$ have zero mean (e.g.~the entries are independent normal samples). Then
\begin{equation}
    d \sigma^2(\lambda(A)) = \sum_i \lambda_i^2 = \tr(A^2) = \lVert A \rVert_F^2
\end{equation}
If we further assume that all entries of $A$ have roughly equal magnitude we have that
\begin{equation}
    \sum_{i \neq j} A_{ij}^2 \approx \lVert A \rVert_F^2
\end{equation}
since the sum in the Frobenius norm is dominated by the $d(d-1)$ off-diagonal terms. Similarly, 
\begin{equation}
    \frac{d^2}{d + 2}\sigma^2(\lambda(A)) \approx d \sigma^2(\lambda(A)) = \lVert A \rVert_F^2
\end{equation}
meaning that the two estimators have approximately the same variance.

If the matrix has special structure, we might choose one estimator over the other. For example, if the standard deviation of the eigenvalues of $A$ is small in comparison to the mean eigenvalue, the scaled Gaussian estimator is preferable and if $A$ is dominated by its diagonal then the Rademacher estimator is preferable. We don't expect either type of special structure in our matrices, so we consider the estimators interchangeable. We decided to use scaled Gaussian noise since it produces noise which points in all possible directions in $\R^d$ whereas Rademacher noise is restricted to a finite $2^d$ points. We assume that there is no reason to prefer this set of $2^d$ directions and therefore sampling from all possible directions is better.

\subsubsection{Reducing variance when sampling more than 1 Hutchinson sample} \label{app:reducing-hutchinson-variance}
When the number of Hutchinson samples $K$ are greater than 1, it is more favorable to sample the noise vectors in a dependent way than independently. Consider the case of a $d \times d$ matrix $A$ with $K = d$. Then we can get an exact estimate of the trace via
\begin{equation}
    \sum_{i=1}^d q_i^\top A q_i = \tr(Q^\top A Q) = \tr(A)
\end{equation}
with orthogonal $Q$ and $q_i$ the $i$-th column of $Q$. If the $q_i$ were sampled independently, we almost certainly wouldn't achieve this exact result. We therefore sample our noise vectors as the first $K$ columns of a randomly sampled $d \times d$ orthogonal matrix and scale each column by $\sqrt{d}$. We show below that this reduces variance compared with sampling independently and make an experimental comparison (see \cref{fig:hutchinson-comp-data-latent}). If the resulting noise vectors are denoted $\epsilon_i$, we estimate $\tr(A)$ by
\begin{equation}
    \widehat{\tr}(A) = \frac{1}{K} \sum_{i=1}^K \epsilon_i^\top A \epsilon_i
\end{equation}

This estimator is unbiased:
\begin{align}
    E_{\epsilon_1, \dots, \epsilon_K} [ \widehat{\tr}(A) ] 
        &= \frac{1}{K} \sum_{i=1}^K E_{\epsilon_i} [ \epsilon_i^\top A \epsilon_i ] \\
        &= \frac{1}{K} \sum_{i=1}^K \tr \left( A E_{\epsilon_i} [ \epsilon_i \epsilon_i^\top ] \right) \\
        &= \frac{1}{K} \sum_{i=1}^K \tr(A) \\
        &= \tr(A)
\end{align}
since $E_{\epsilon_i}[\epsilon_i \epsilon_i^\top] = \mathbb{I}$ for all $\epsilon_i$.

This procedure is equivalent to using scaled Gaussian noise when $K = 1$ and is what we implement in practice for all values of $K$. Note that it is not necessary to use $K > d$ since we already achieve the exact value with $K = d$.

A note on our sampling strategy: in practice we sample by taking the $Q$ matrix of the QR decomposition of a $d \times K$ matrix with entries sampled from a standard normal. Since the QR decomposition performs Gram-Schmidt orthogonalization, the $Q$ matrix is uniformly sampled from the group of orthogonal matrices if $Q$ is square \citep{mezzadri2007how}. The same logic applies to the QR decomposition of non-square matrices, yielding a $d \times K$ matrix made up of the first $K$ columns of a uniformly sampled orthogonal $d \times d$ matrix. Strictly speaking, the QR decomposition is only unique if the $R$ matrix has a positive diagonal, and uniqueness is required for uniform sampling. Let $X = QR$ and define by $D$ the sign of the diagonal of $R$. $D$ is diagonal with $1$ or $-1$ on the diagonal and $D^2 = \mathbb{I}$. Uniqueness can be achieved by multiplying $Q$ by $D$ from the right and multiplying $R$ by $D$ from the left: $X = QDDR = QR$. The resulting uniformly sampled orthogonal matrix is $QD$, meaning the columns of $Q$ are multiplied by either $1$ or $-1$. In our setting, we have terms of the form $\epsilon_i^\top A \epsilon_i$ where the $\epsilon_i$ are the columns of $Q$, so multiplying the columns by $-1$ has no effect on the trace estimate. As a result we opt not to multiply by $D$. Therefore, although we do not sample uniformly from the orthogonal group, the final result is equivalent to sampling uniformly.

\paragraph{Variance derivation}
Using the formula for the variance of a sum of random variables, we have that
\begin{equation}
    V_{\epsilon_1, \dots, \epsilon_K} [ \widehat{\tr}(A) ] = \frac{1}{K^2} \sum_{i,j=1}^K C_{\epsilon_i,\epsilon_j} [ \epsilon_i^\top A \epsilon_i, \epsilon_j^\top A \epsilon_j ]
\end{equation}
where $C$ denotes the covariance between two random variables. Note that since the permutation of the columns of a randomly sampled orthogonal matrix is arbitrary (permuting the columns results in another randomly sampled orthogonal matrix of equal probability), all columns are equivalent and we only have to distinguish between the cases $i = j$ and $i \neq j$. This leads to\footnote{This argument is inspired by \url{https://math.stackexchange.com/questions/1081345/finding-variance-of-the-sample-mean-of-a-random-sample-of-size-n-without-replace}}
\begin{equation}
    V_{\epsilon_1, \dots, \epsilon_K} [ \widehat{\tr}(A) ] = \frac{1}{K^2} (Kv + K(K-1)c) = \frac{1}{K} (v + (K-1)c)
\end{equation}
with $v = V_{\epsilon_i}[\epsilon_i^\top A \epsilon_i]$ and $c = C_{\epsilon_i,\epsilon_j} [ \epsilon_i^\top A \epsilon_i, \epsilon_j^\top A \epsilon_j ]$ when $i \neq j$. Each column viewed individually is a randomly sampled Gaussian vector, scaled to have length $\sqrt{d}$, hence the value of $v$ is equal to the scaled Gaussian noise case above, namely
\begin{equation}
    v = \frac{2d^2}{d + 2} \sigma^2(\lambda(A))
\end{equation}
where $\sigma^2(\lambda(A))$ denotes the variance of the eigenvalues of $A$. We also know that the variance of the estimator reduces to zero when $K = d$ and hence $v + (d - 1)c = 0$ leading to
\begin{equation}
    c = -\frac{v}{d-1}
\end{equation}
Putting it all together leads to
\begin{align}
    V_{\epsilon_1, \dots, \epsilon_K} [ \widehat{\tr}(A) ] 
        &= \frac{1}{K}(v - \frac{K-1}{d-1}v) \\
        &= \frac{1}{K}\frac{d-K}{d-1}v \\
        &= \frac{2d^2(d-K)}{K(d-1)(d+2)} \sigma^2(\lambda(A))
\end{align}
valid for $d > 1$.
The comparable quantity for independently sampled scaled Gaussian noise is
\begin{equation}
    V_{\epsilon_1, \dots, \epsilon_K} [ \widehat{\tr}(A) ] = \frac{2d^2}{K(d+2)} \sigma^2(\lambda(A))
\end{equation}
i.e.~$1/K$ times the $K=1$ result. The orthogonalized noise strategy always has lower variance since the ratio between the variances is
\begin{equation}
    \frac{d-K}{d-1} \leq 1
\end{equation}
which even reduces to zero when $K=d$. If $d$ is large, the difference is not great for small $K$, which aligns with the fact that randomly sampled directions in $\R^d$ are close to orthogonal for large $d$.

\section{Experimental Details} \label{app:experiments}

\subsection{Role of reconstruction weight}
\label{app:reconstruction-weight}

\subsubsection{Toy data} \label{app:experiments-toy}
To analyze the model behaviour depending on the reconstruction weight $\beta$, we train the same architecture on a simple sinusoid data set with $\beta$ varied between 0.01 and 100. 

For the generation of data points, we draw $x$ positions from a 1D standard normal distribution and calculate the respective y positions by $y = \sin(\pi x/2)$. Then, isotropic Gaussian noise with $\sigma = 0.1$ is added. We train an autoencoder architecture built with four residual blocks and a 1D latent space for 50 epochs with learning rate 0.001 until convergence. Each residual block is made up of a feedforward network with one hidden layer of width 256. For each value of $\beta$, 20 models are trained.

To visualize the dimension of the data that is captured by the model, we project samples from the data distribution to the (1D) latent space and color the data points using the respective latent code as color value. \Cref{fig:transition_point_sine} illustrates that low reconstruction weight $\beta$ values result in learning the dimension with the lowest entropy (noise) and higher values are required to learn the manifold that spans the sinusoid. 

Additionally, we repeat the procedure with higher noise levels ($\sigma = 0.2, 0.3$). We observe that the point at which the model transitions from learning the noise to representing the manifold is not fixed, but depends on features of the data set such as the noise (\cref{fig:tp_noise_beta}).

\begin{figure}[t]
    \centering
    \includegraphics[width=0.8\textwidth]{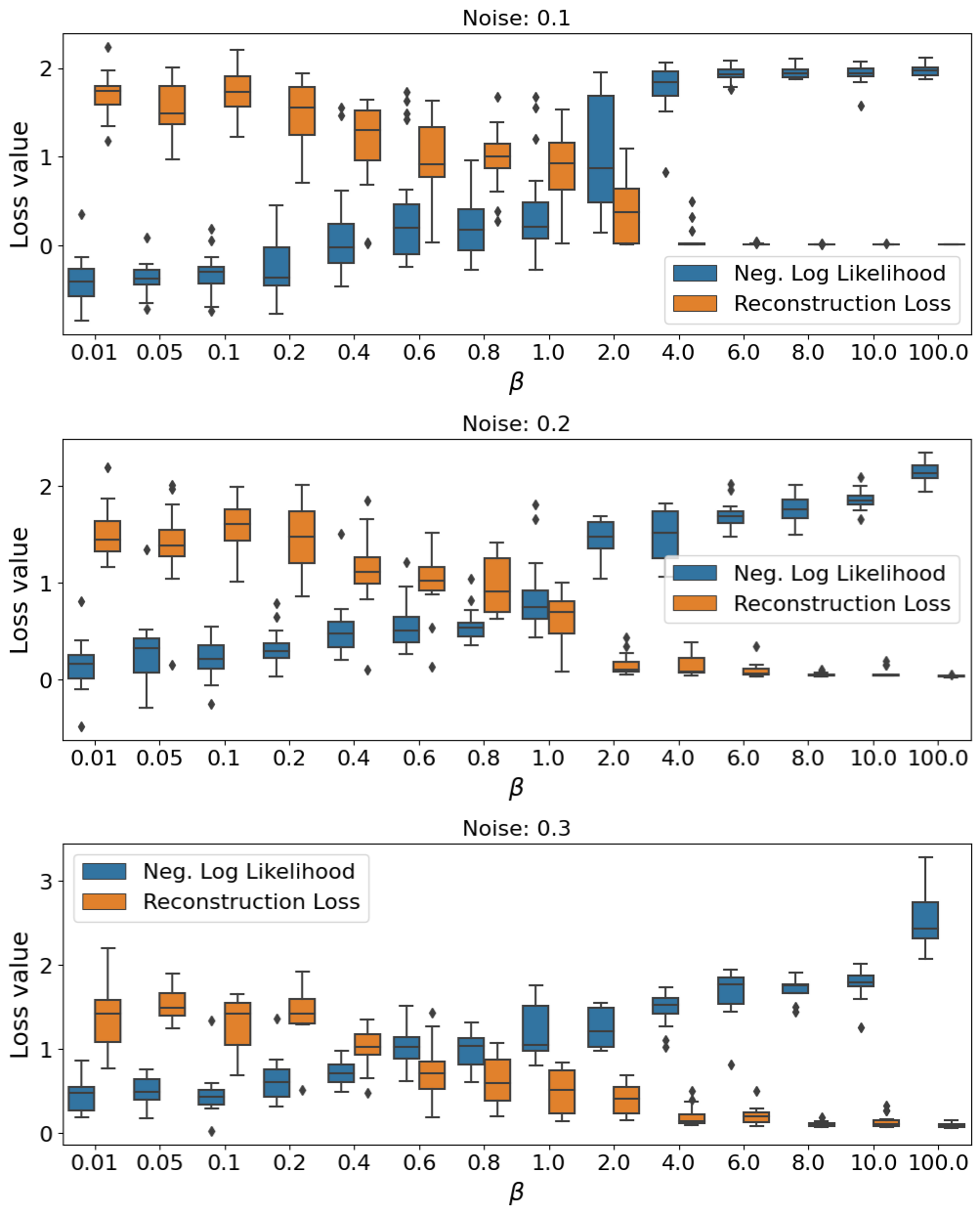}
    \caption{The position of the transition point depends on the data set. The plots show the tradeoff between reconstruction error and NLL with reconstruction weight $\beta$ (box plots summarize 20 runs per condition). The point at which $\beta$ becomes sufficiently large (transition point) shifts to lower values with increased Gaussian noise added to the data points.}
    \label{fig:tp_noise_beta}
\end{figure}

\subsubsection{Conditional MNIST} \label{app:conditional-mnist}
To measure the structure of the conditional MNIST dataset learned by the generating model $g$, 
we compute the full decoder Jacobian matrix $J$ by calculating $d$ Jacobian-vector products (one per column of the Jacobian).
We compute the singular values $\Sigma=\text{diag} (s_1, \dots, s_d)$ of $J$, which -- roughly speaking -- indicate the stretching or shrinking of the latent manifold by $g$. Hence, the number of non-vanishing singular values suggest the dimension of the data manifold and the sum of the log singular values is equal to the change in entropy between the latent space and data space, where a higher entropy indicates that more of the data manifold is spanned by the decoder. We can see this from the formula
\begin{equation}
    H(p_X) = H(p_Z) + E_{p_Z} \left[ \frac{1}{2} \log \det (g'(z)^\top g'(z)) \right]
\end{equation}
with $H$ the differential entropy, and noting that $2 \tr \log \Sigma = \log \det (g'(z)^\top g'(z))$.

We train multiple FIF models on conditional MNIST with reconstruction weights $\beta$ ranging from 0.1 to 100, and evaluate their singular value spectra. The models are trained for 400 epochs, which is sufficient for convergence. The architecture used is the same as in \cref{tab:best-models}, except that it has four times as many channels in each convolutional layer.

\begin{figure}[t]
    \centering
    \includegraphics[width=\textwidth]{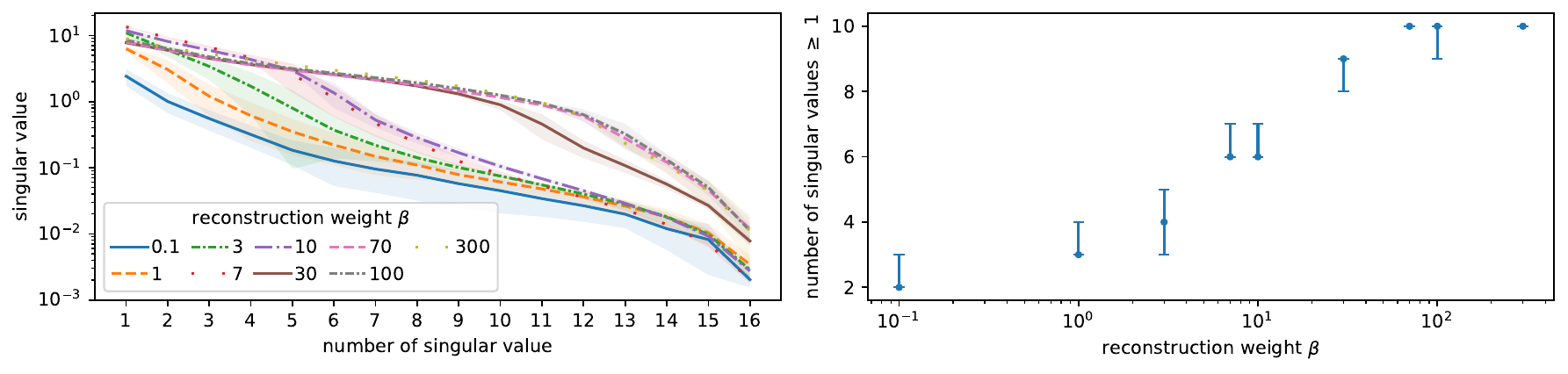}
    \caption{\textit{(Left)} Singular value spectra for varying reconstruction weight. \textit{(Right)} Number of singular values greater or equal to one as a function of reconstruction weight. The error bars show the span of the intersection of the shaded region with the line $y=1$ in the left hand plot, rounded down to the nearest integer. For each trained model we generate 1024 samples per condition, compute the singular value spectra and average over all samples regardless of condition. The mean spectra and their standard deviation are evaluated across five trained models per reconstruction weight.}
    \label{fig:singular-value-spectra}
\end{figure}

\begin{figure}[t]
    \centering
    \includegraphics[width=.2\textwidth]{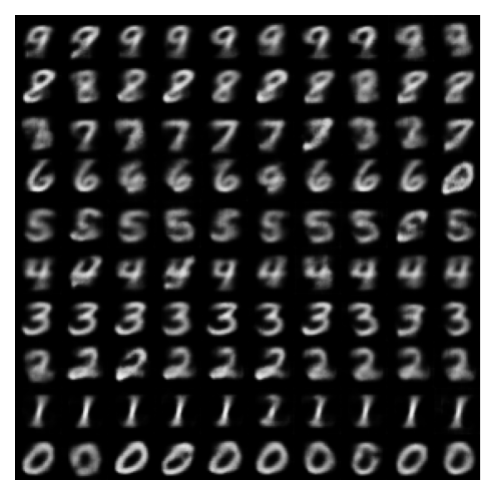}
    \includegraphics[width=.2\textwidth]{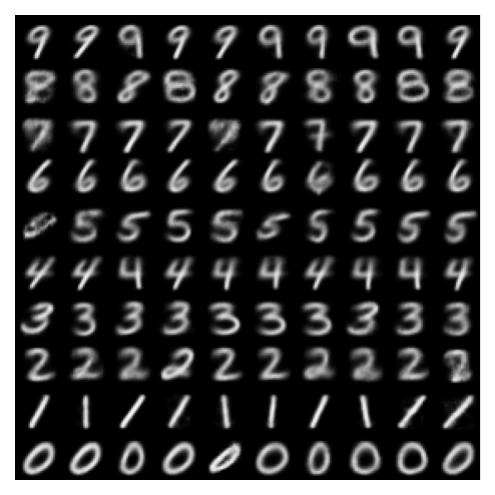}
    \includegraphics[width=.2\textwidth]{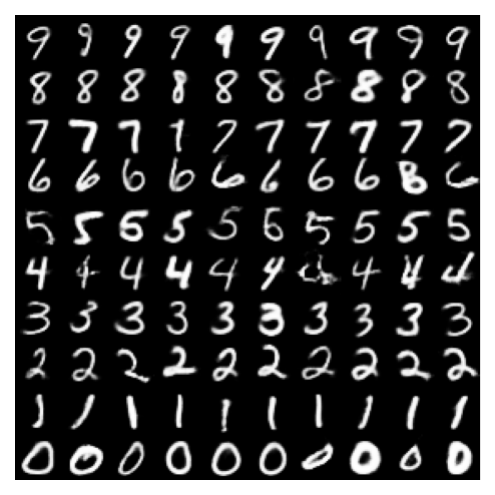}
    \includegraphics[width=.2\textwidth]{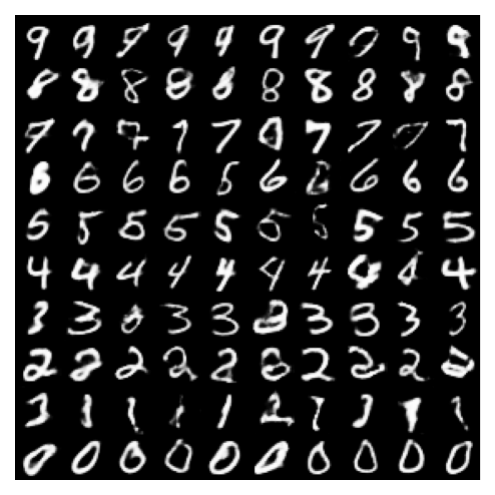}
    \caption{Conditional MNIST samples for varying reconstruction weight $\beta$. For each value of $\beta \in [0.1, 1, 10, 100]$ (from left to right), and each condition (rows) we generate ten samples (columns) at temperature $T=1$.}
    \label{fig:cMNIST-samples}
\end{figure}

In \cref{fig:singular-value-spectra} it is clear that a higher reconstruction weight gives rise to a higher number of non-vanishing singular values. Hence, the reconstruction weight contributes towards learning structure of the true data manifold. This observed additional structure for higher reconstruction weights is reflected in an increasing diversity of samples (see \cref{fig:cMNIST-samples}). Nevertheless, for high reconstruction weights we note the trade-off between sample diversity and properly learned latent distributions, which might result in out of distribution samples.

\subsection{Tabular data} \label{app:experiments-tabular}
We compare to the tabular data experiments in \cite{caterini2021rectangular}, using the same datasets and data splits, as well as the same latent space dimensions. We train models with roughly the same number of parameters. Our main architectural difference is that we use an unconstrained autoencoder rather than an injective flow. Our encoder consists of two parts: i) a feed-forward network with two hidden layers of dimension 256 and ReLU activations (no normalization layers), which maps from the input dimension to the latent dimension ii) a ResNet with two blocks, each with two hidden layers of dimension 256 and ReLU activations. The ResNet has input and output dimension equal to the latent space dimension. The decoder is the inverse: i) an identical ResNet to the encoder (but with separate parameters) followed by ii) a feed-forward network with two hidden layers of dimension 256 mapping from the latent space to the data space dimension. 

We use a batch size of 512, add isotropic Gaussian noise with standard deviation 0.01, use $K=1$ Hutchinson samples and a reconstruction weight $\beta=10$ for all experiments. We use the Adam optimizer with the onecycle LR scheduler with LR of $10^{-4}$ (except for HEPMASS which has LR of $3 \times 10^{-4}$) and weight decay of $10^{-4}$. The number of epochs was chosen such that all experiments had approximately the same number of training iterations. We ran the model 5 times per dataset. The dataset-dependent parameters and average training times are given in \cref{tab:tabular-data-hyperparams}. 

We compare our training times against the published rectangular flow training times for their RNFs-ML ($K=1$) model, as well as rerunning their code on our hardware (a single RTX 2070 card). We find comparable FID-like scores on our rerun (except on GAS where we could not reproduce the score, see \cref{sec:model-comparison}), but our hardware is slower, with runs consistently taking at least 15\% longer and more than twice as long on MINIBOONE. We find that our model runs in half the time or less of the rectangular flow on the same hardware, except for MINIBOONE (about 2/3 the time).

\begin{table}[t]
    \centering
    \caption{Dataset-dependent hyperparameters and average total runtime for tabular data experiments. We compare our runtime against the published rectangular flow \citep{caterini2021rectangular} runtimes (RNFs-ML ($K=1$) model), as well as rerunning their code on our hardware.}
    \begin{tabular}{@{}lllll@{}}
    \toprule
    Hyperparameter & POWER & GAS & HEPMASS & MINIBOONE \\ 
    \midrule
    Latent dimension & 3 & 2 & 10 & 21\\
    Training epochs & 15 & 30 & 85 & 875 \\ 
    \toprule
    Model & \multicolumn{4}{c}{Training time (minutes)} \\
    \midrule
    FIF (\textit{ours}) & 38 & 39 & 41 & 49 \\
    Rectangular flow (published) & 113 & 75 & 138 & 34 \\
    Rectangular flow (our hardware) & 147 & 86 & 249 & 75 \\
    \midrule
    Training time speedup (same hardware) & 3.9 $\times$ & 2.2 $\times$ & 6.1 $\times$ & 1.5 $\times$ \\
    \bottomrule
    \end{tabular}
    \label{tab:tabular-data-hyperparams}
\end{table}

\begin{table}[t]
\caption{\textbf{Ablation study on the effect of each component of our proposed improvement} to rectangular flows (RF). By NLL estimator we denote how the loss in equation \ref{eq:rf-loss} is approximated. For this experiment we used our reimplementation of RF.}
\centering
\resizebox{\linewidth}{!}{
\begin{tabular}{ll|llll}
\toprule
Hyperparameters & NLL estimator \\ \& Model & (on-/off-manifold) & POWER & GAS & HEPMASS & MINIBOONE \\
\midrule
FIF \& free-form net & off manifold (eq.~\ref{eq:well-behaved-nll}) & \textbf{0.041 $\pm$ 0.007} & \textbf{0.281 $\pm$ 0.031} & \textbf{0.541 $\pm$ 0.034} & \textbf{0.598 $\pm$ 0.024} \\
\midrule
FIF \& free-form net & on manifold (eq.~\ref{eq:log-det-grad-encoder})  & 19.54 $\pm$ 20.81 & 7.48 $\pm$ 5.40 & 29.03 $\pm$ 5.42 & 77.23 $\pm$ 16.55 \\
FIF \& coupling flow & off manifold (eq.~\ref{eq:well-behaved-nll}) & 0.11 $\pm$ 0.06 & 0.45 $\pm$ 0.09 & 1.30 $\pm$ 0.14 & 1.55 $\pm$ 0.04 \\
RF \& coupling flow & off manifold (eq.~\ref{eq:well-behaved-nll}) & 0.98 $\pm$ 0.69 & 6.16 $\pm$ 4.20 & 2.02 $\pm$ 0.74 & 1.80 $\pm$ 0.10 \\
FIF \& coupling flow & on manifold (eq.~\ref{eq:log-det-grad-encoder}) & 3.71 $\pm$ 2.19 & 0.40 $\pm$ 0.22 & 0.71 $\pm$ 0.05 & 3.13 $\pm$ 0.42 \\
RF \& coupling flow & on manifold (eq.~\ref{eq:log-det-grad-encoder}) & 0.33 $\pm$ 0.22 & 0.33 $\pm$ 0.17 & 0.82 $\pm$ 0.07 & 1.84 $\pm$ 0.11 \\
\bottomrule
\end{tabular}
}
\label{tab:tabular-ablation}
\end{table}

\begin{table}[t]
\caption{Reconstruction losses of FIF with a free-form architecture on the POWER, GAS HEPMASS and MINIBOONE datasets. The reconstruction error is always much higher for on-manifold training compared to off-manifold, demonstrating the instability caused by on-manifold NLL evaluation in free-form networks. Note: the large standard deviations in on-manifold runs are typically the result of a single large outlier. We remove the largest outlier where applicable (“On Manifold (outliers removed)” row).}
\centering
\resizebox{\linewidth}{!}{
\begin{tabular}{l|llll}
\toprule
& POWER & GAS & HEPMASS & MINIBOONE \\
\midrule
On manifold & 237 $\pm$ 498 & 5835 $\pm$ 13006 & 119 $\pm$ 34 & 300 $\pm$ 160 \\
On manifold (outliers removed) & 14 $\pm$ 27 & 18 $\pm$ 31 & 119 $\pm$ 34 & 229 $\pm$ 23 \\
Off manifold & 0.072 $\pm$ 0.002 & 0.188 $\pm$ 0.012 & 2.569 $\pm$ 0.098 & 1.077 $\pm$ 0.011 \\
\bottomrule
\end{tabular}
}
\label{tab:tabular-ablation-reconstruction-errors}
\end{table}

\subsection{Comparison to existing injective flows}
\label{app:experiments-injective}

We compare against Trumpets \cite{kothari2021trumpets} and Denoising Normalizing Flows (DNF) \cite{horvat2021denoising}, as they are the best-performing injective flows to our knowledge, and report performance on CelebA in \cref{tab:benchmark_results_short}. Note that Trumpets default to $d=192$, DNF to $d=512$, whereas we are able to reduce the bottleneck dimension to $d=64$ (consistent with the Pythae benchmark in \cref{app:benchmark}).

Both models differ in the recommended wall clock time, and we therefore fix the wall clock time available to each model to five hours on a single NVIDIA A40. Trumpets train the manifold and the distribution on it in two sequential steps. To accommodate both steps in the reduced training time, we vary the fraction of the five hours spent in training manifold and distribution and report the best FID among the variants tried. We vary number of manifold epochs as $n_\text{manifold} = 2, 5, 10$, with 10 performing best.

Our free-form injective flows (FIF) are not restricted in their architecture, and we choose an off-the-shelf convolutional autoencoder, followed by a total of four fully-connected ResNet blocks, see \cref{tab:best-models}. The fully-connected blocks are important, as can be seen when comparing to the architecture used in the Pythae benchmark (see \cref{app:benchmark}). We note that the Pythae benchmark could benefit from a modified architecture, but leave this modification open for future work.

For Trumpets and DNF, we point to the training details provided by the respective authors. For FIF, we choose these training hyperparameters:
We train with the Adam optimizer with a LR of $10^{-3}$ and a weight decay of $10^{-4}$, linearly increase $\beta$ from $20$ at initialization to $40$ at the end of training, a single Hutchinson sample $K=1$ and a student-t distribution on the latent space. We set the batch size to 256.

We conclude that the Pythae benchmark could benefit from an optimized architecture, as this change probably also improves the other methods. From the data at hand, we further conclude that the full potential of FIF has not yet been exploited, and that easy gains can be made by improving the architecture and other hyperparameters.

\subsection{Pythae benchmark on generative autoencoders} 
\label{app:benchmark}

We compare our method to existing generative autoencoder paradigms using the benchmark from \cite{chadebec2022pythae}. We use the provided open-source pipeline and follow the training setup described by the authors. For MNIST and CIFAR10 this means training for 100 epochs with the Adam optimizer at a starting LR of $10^{-4}$, reserving the last 10k images of the training sets as validation sets. CelebA trains for 50 epochs with a starting LR of $10^{-3}$. All experiments are performed with a batch size of 100 and LR is reduced by half when the loss plateaus for 10 epochs. In accordance with the original benchmark we pick 10 sets of hyper-parameters, compute their validation FID (see \cref{fig:pythae_configurations}) and use the model which achieves the best FID on the validation set as the final model. In \cref{tab:benchmark_results} we report the FID and IS of this model on the test set. To complement the metrics, we show samples from all models in \cref{fig:pythae-samples}, demonstrating convincing quality.
We exclude the VAEGAN from the FID comparison, as the model trains more than double the time required for FIF and goes beyond fitting a transformation of the training data to a standard normal distribution.

As described in \cref{sec:model-comparison}, we use the architectures from \cite{chadebec2022pythae} for the benchmark, which we replicate in \cref{tab:pythae-conv-net,tab:pythae-res-net}.

\subsection{Compute and dependencies}
We used approximately 150 GPU hours for computing the Pythae benchmark, and an additional 800 GPU hours for model exploration and testing. The majority of the experiments were performed on an internal cluster of A100s. The majority of compute time was spent on image datasets.

We build our code upon the following python libraries: PyTorch \citep{paszke2019pytorch}, PyTorch Lightning \citep{falcon2019pytorch}, Tensorflow \citep{abadi2015tensorflow} for FID score evaluation, Numpy \citep{harris2020array}, Matplotlib \citep{hunter2007matplotlib} for plotting and Pandas \citep{mckinney2010data,reback2020pandas} for data evaluation.

\begin{figure}
    \centering
    \includegraphics[width=\textwidth]{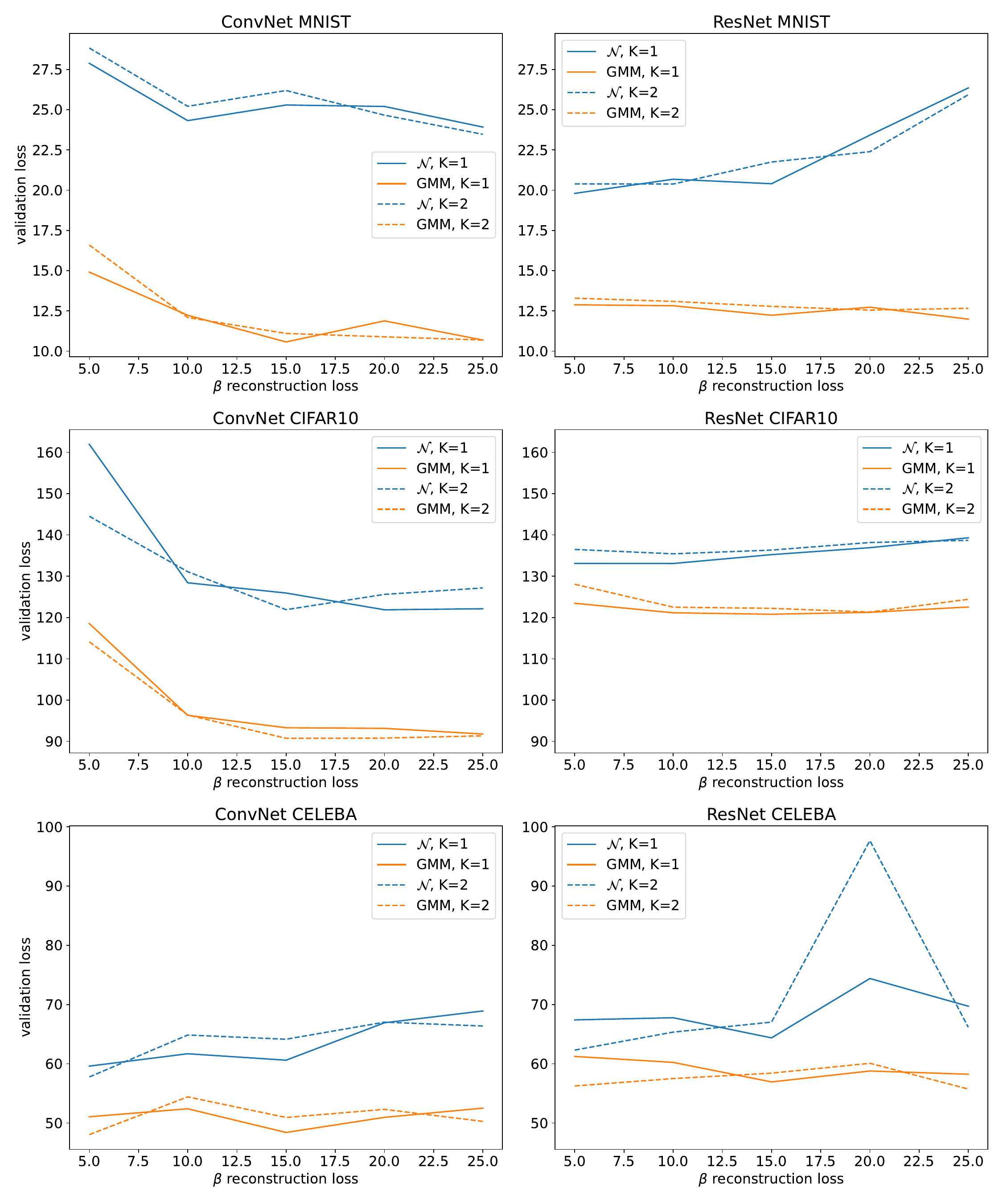}
    \caption{Validation FID of our 10 benchmark model setups on the different datasets and architectures used by Pythae. We report results for reconstruction weights $\beta = 5, 10, 15, 20, 25$ and number of Hutchinson samples $K=1$ in the solid lines and $K=2$ in the dashed lines. We show the performance of standard normal sampling ($\mathcal{N}$) and a Gaussian mixture model (GMM). We see that GMM sampling always improves performance. In contrast, the reconstruction weight and number of Hutchinson samples have no noticeable effect on performance, except that increasing $\beta$ improves performance on ConvNet MNIST and ConvNet CIFAR10, and decreases performance on ResNet MNIST (normal sampling only).}
    \label{fig:pythae_configurations}
\end{figure}

\begin{table}[ht]
    \centering
    \tiny
    \caption{Table taken from \cite{chadebec2022pythae} with our results added at the top. We report Inception Score (IS) and Fréchet Inception Distance (FID) computed with 10k samples on the test set. The best model per dataset and sampler is highlighted in \textbf{bold}, the second best is \underline{underlined}.}
    \label{tab:benchmark_results}
    \resizebox{\textwidth}{!}{
    \begin{tabular}{c|c|cccccc|cccccc}
    \toprule
        \multirow{3}{*}{Model} & \multirow{3}{*}{Sampler} & \multicolumn{6}{c|}{ConvNet}& \multicolumn{6}{c}{ResNet} \\
        &&\multicolumn{2}{c}{MNIST} & \multicolumn{2}{c}{CIFAR10}& \multicolumn{2}{c|}{CELEBA} & \multicolumn{2}{c}{MNIST}& \multicolumn{2}{c}{CIFAR10}& \multicolumn{2}{c}{CELEBA} \\
        & & FID $\downarrow$  & IS $\uparrow$  & FID & IS  & FID  & IS $\uparrow$ & FID $\downarrow$  & IS $\uparrow$  & FID & IS  & FID  & IS  \\
        \midrule

        \multirow{2}{*}{ FIF (ours)}  & $\mathcal{N}$ & 23.8 & 2.2 & 121.0 & 3.0 & 56.9 & 2.1 & 19.5 & 2.1 & 132.6 & 2.9 & \textbf{62.3} & 1.7 \\
                            & \gc GMM  & \gc   11.0  & \gc   2.2  & \gc   90.6  & \gc   4.0  & \gc   \textbf{47.3}  & \gc   1.9  & \gc   11.7  & \gc   2.1  & \gc   119.2  & \gc   3.4  & \gc   \textbf{55.0}  & \gc   1.8   \\

        \midrule
        \multirow{2}{*}{VAE}  & $\mathcal{N}$ & 28.5 & 2.1 & 241.0 & 2.2 & \underline{54.8} & 1.9 & 31.3 & 2.0 & 181.7 & 2.5 & 66.6 & 1.6  \\
                            & \gc GMM  & \gc   26.9  & \gc   2.1  & \gc   235.9  & \gc   2.3  & \gc   52.4  & \gc   1.9  & \gc   32.3  & \gc   2.1  & \gc   179.7  & \gc   2.5  & \gc   63.0  & \gc   1.7   \\

        \midrule
        \multirow{1}{*}{VAMP} & VAMP & 64.2 & 2.0 & 329.0 & 1.5 & 56.0 & 1.9 & 34.5 &  2.1 & 181.9 & 2.5 & 67.2 & 1.6  \\
        \midrule
        \multirow{2}{*}{IWAE}  & $\mathcal{N}$ &29.0 & 2.1 & 245.3 & 2.1 & 55.7 & 1.9 & 32.4 &  2.0 & 191.2 & 2.4 & 67.6 & 1.6  \\
                             & \gc GMM  & \gc   28.4  & \gc   2.1  & \gc   241.2  & \gc   2.1  & \gc   52.7  & \gc   1.9  & \gc   34.4  & \gc   2.1  & \gc   188.8  & \gc   2.4  & \gc   64.1  & \gc   1.7  \\

        \multirow{2}{*}{VAE-lin NF}  & $\mathcal{N}$ &  29.3 & 2.1 & 240.3 & 2.1 & 56.5 & 1.9 & 32.5 &  2.0 & 185.5 & 2.4 & 67.1 & 1.6 \\
                             & \gc GMM  & \gc   28.4  & \gc   2.1  & \gc   237.0  & \gc   2.2  & \gc   53.3  & \gc   1.9  & \gc   33.1  & \gc    2.1  & \gc   183.1  & \gc   2.5  & \gc   62.8  & \gc   1.7  \\

        \multirow{2}{*}{VAE-IAF}  & $\mathcal{N}$ & 27.5 & 2.1 & 236.0 & 2.2 & 55.4 & 1.9 & 30.6 &  2.0 & 183.6 & 2.5 & 66.2 & 1.6  \\
                             & \gc GMM  & \gc   27.0  & \gc   2.1  & \gc   235.4  & \gc   2.2  & \gc   53.6  & \gc   1.9  & \gc   32.2  & \gc    2.1  & \gc   180.8  & \gc   2.5  & \gc   62.7  & \gc   1.7   \\

        \midrule
        \multirow{2}{*}{$\beta$-VAE}  & $\mathcal{N}$ & 21.4 & 2.1 & \textbf{115.4} & 3.6 & 56.1 & 1.9 & \textbf{19.1} &  2.0 & \textbf{124.9} & 3.4 & 65.9 & 1.6  \\
                             & \gc GMM  & \gc   9.2  & \gc   2.2  & \gc   92.2  & \gc   3.9  & \gc   51.7  & \gc   1.9  & \gc   11.4  & \gc   2.1  & \gc   \underline{112.6}  & \gc   3.6  & \gc   59.3  & \gc   1.7  \\

        \midrule
         \multirow{4}{*}{Dis $\beta$-VAE} & $\mathcal{N}(0,1)$ & 96.5 & 2.3 & 219.4 & 3.6 & 130.8 & 1.6 & 109.4 &  2.7 & 209.8 & 3.2 & 110.6 & 1.5 \\
                              & GMM &  192.7 & 2.3 & 300.8 & 1.8 & 94.0 & 1.5 & 98.7 &  2.1 & 202.3 & 2.6 & 83.2 & 1.6 \\
                              & $2$-s sampler & 236.1 & 1.5 & 371.2 & 1.0 & 167.9 & 1.0 & 250.8 &  1.1 & 349.0 & 1.2 & 161.0 & 1.2  \\
                              & MAF sampler & 191.8 & 2.2 & 300.6 & 1.8 & 94.3 & 1.5 & 98.3 &  2.1 & 200.6 & 2.6 & 82.6 & 1.7  \\
         \midrule
        \multirow{2}{*}{$\beta$-TC VAE}  & $\mathcal{N}$ & 21.3 & 2.1 & \underline{116.6} & 2.8 & 55.7 & 1.8 & 20.7 &  2.0 & \underline{125.8} & 3.4 & 65.9 & 1.6  \\
                             & \gc GMM  & \gc   11.6  & \gc   2.2  & \gc   \underline{89.3}  & \gc   4.1  & \gc   51.8  & \gc   1.9  & \gc   13.3  & \gc   2.1  & \gc   \textbf{106.5}  & \gc   3.7  & \gc   59.3  & \gc   1.7   \\

        \multirow{2}{*}{FactorVAE}  & $\mathcal{N}$ & 27.0 & 2.1 & 236.5 & 2.2 & \textbf{53.8} & 1.9 & 31.0 &  2.0 & 185.4 & 2.5 & 66.4 & 1.7 \\
                             & \gc GMM  & \gc   26.9  & \gc   2.1  & \gc   234.0  & \gc   2.2  & \gc   52.4  & \gc   2.0  & \gc   32.7  & \gc   2.1  & \gc   184.4  & \gc   2.5  & \gc   63.3  & \gc   1.7  \\

        \midrule
        \multirow{2}{*}{InfoVAE - RBF}  & $\mathcal{N}$ & 27.5 & 2.1 & 235.2 & 2.1 & 55.5 & 1.9 & 31.1 &  2.0 & 182.8 & 2.5 & 66.5 & 1.6  \\
                             & \gc GMM   & \gc   26.7   & \gc   2.1   & \gc   230.4   & \gc   2.2   & \gc   52.7   & \gc   1.9   & \gc   32.3   & \gc    2.1   & \gc   179.5   & \gc   2.5   & \gc   62.8   & \gc   1.7  \\

        \multirow{2}{*}{InfoVAE - IMQ}  & $\mathcal{N}$ & 28.3 & 2.1 & 233.8 & 2.2 & 56.7 & 1.9 & 31.0 &  2.0 & 182.4 & 2.5 & 66.4 & 1.6  \\
                             & \gc GMM  & \gc   27.7  & \gc   2.1  & \gc   231.9  & \gc   2.2  & \gc   53.7  & \gc   1.9  & \gc   32.8  & \gc   2.1  & \gc   180.7  & \gc   2.6  & \gc   62.3  & \gc   1.7  \\

        \multirow{2}{*}{AAE}  & $\mathcal{N}$ & \textbf{16.8}& 2.2 & 139.9 & 2.6 & 59.9 & 1.8 & \textbf{19.1} &  2.1 & 164.9 & 2.4 & \underline{64.8} & 1.7  \\
                              & \gc GMM   & \gc   9.3   & \gc   2.2   & \gc   92.1   & \gc   3.8   & \gc   53.9   & \gc   2.0   & \gc   11.1   & \gc    2.1   & \gc   118.5   & \gc   3.5   & \gc   58.7   & \gc   1.8  \\

        \midrule
        \multirow{2}{*}{MSSSIM-VAE}  & $\mathcal{N}$ & 26.7 & 2.2 & 279.9 & 1.7 & 124.3 & 1.3 & 28.0 &  2.1 & 254.2 & 1.7 & 119.0 & 1.3 \\
                              & \gc GMM   & \gc   27.2   & \gc   2.2   & \gc   279.7   & \gc   1.7   & \gc   124.3   & \gc   1.3   & \gc   28.8   & \gc    2.1   & \gc   253.1   & \gc   1.7   & \gc   119.2   & \gc   1.3  \\

        VAEGAN  & $\mathcal{N}$ & \textit{8.7} & 2.2 & 199.5 & 2.2 & \textit{39.7} & 1.9 & \textit{12.8} &  2.2 & 198.7 & 2.2 & 122.8 & 2.0  \\
        (not compared) & \gc GMM   & \gc   \textit{6.3}   & \gc   2.2   & \gc   197.5   & \gc   2.1   & \gc   \textit{35.6}   & \gc   1.8   & \gc   \textit{6.5}   & \gc    2.2   & \gc   188.2   & \gc   2.6   & \gc   84.3   & \gc   1.7   \\

        \midrule
        \multirow{2}{*}{AE}   & $\mathcal{N}$ & 26.7 & 2.1 & 201.3 & 2.1 & 327.7 & 1.0 & 221.8 &  1.3 & 210.1 & 2.1 & 275.0 & 2.9  \\
                               & \gc GMM   & \gc   9.3   & \gc   2.2   & \gc   97.3   & \gc   3.6   & \gc   55.4   & \gc   2.0   & \gc   \underline{11.0}   & \gc    2.1   & \gc   120.7   & \gc   3.4   & \gc   \underline{57.4}   & \gc   1.8  \\

        \multirow{2}{*}{WAE - RBF}  & $\mathcal{N}$ & 21.2 & 2.2 & 175.1 & 2.0 & 332.6 & 1.0 & 21.2 &  2.1 & 170.2 & 2.3 & 69.4 & 1.6  \\
                              & \gc GMM   & \gc   9.2   & \gc   2.2   & \gc   97.1   & \gc   3.6   & \gc   55.0   & \gc   2.0   & \gc   11.2   & \gc    2.1   & \gc   120.3   & \gc   3.4   & \gc   58.3   & \gc   1.7  \\

        \multirow{2}{*}{WAE - IMQ}  & $\mathcal{N}$ & \underline{18.9} & 2.2 & 164.4 & 2.2 & 64.6 & 1.7 & 20.3 &  2.1 & 150.7 & 2.5 & 67.1 & 1.6 \\
                              & \gc GMM   & \gc   \textbf{8.6} & \gc   2.2   & \gc   96.5   & \gc   3.6   & \gc   51.7   & \gc   2.0   & \gc   11.2   & \gc    2.1   & \gc   119.0   & \gc   3.5   & \gc   57.7   & \gc   1.8  \\

        \multirow{2}{*}{VQVAE}  & $\mathcal{N}$ & 28.2 & 2.0 & 152.2 & 2.0 & 306.9 & 1.0 & 170.7 &  1.6 & 195.7 & 1.9 & 140.3 & 2.2  \\
          & \gc GMM    & \gc   \underline{9.1}    & \gc   2.2    & \gc   95.2    & \gc   3.7    & \gc   \underline{51.6}    & \gc   2.0    & \gc   \textbf{10.7}    & \gc    2.1    & \gc    120.1    & \gc   3.4    & \gc   57.9    & \gc   1.8   \\

        \multirow{2}{*}{RAE - L2}  & $\mathcal{N}$ & 25.0 & 2.0 & 156.1 & 2.6 & 86.1 & 2.8 & 63.3 &  2.2 & 170.9 & 2.2 & 168.7 & 3.1  \\
                              & \gc GMM   & \gc   \underline{9.1}   & \gc   2.2   & \gc   \textbf{85.3}   & \gc   3.9   & \gc   55.2   & \gc   1.9   & \gc   11.5   & \gc    2.1   & \gc   122.5   & \gc   3.4   & \gc   58.3   & \gc   1.8  \\

        \multirow{2}{*}{RAE - GP}  & $\mathcal{N}$ &  27.1 & 2.1 & 196.8 & 2.1 & 86.1 & 2.4 & 61.5 &  2.2 & 229.1 & 2.0 & 201.9 & 3.1  \\
                              & \gc GMM    & \gc   9.7    & \gc    2.2    & \gc   96.3    & \gc    3.7    & \gc    52.5    & \gc    1.9    & \gc   11.4    & \gc   2.1   & \gc   123.3    & \gc   3.4    & \gc   59.0   & \gc   1.8    \\

    \bottomrule
    \end{tabular}
    }
\end{table}

\begin{table}[ht]
    \centering
    \scriptsize
    \caption{\textbf{ConvNet}, neural network architecture used for the convolutional networks, adapted from \cite{chadebec2022pythae}.}
    \label{tab:pythae-conv-net}
    \begin{tabular}{cccc}
    \toprule
         & MNIST & CIFAR10 & CELEBA \\
         \midrule
         Encoder & (1, 28, 28)& (3, 32, 32) & (3, 64, 64) \\
        \multirow{1}{*}{Layer 1} & Conv(128, 4, 2), BN, ReLU & Conv(128, 4, 2), BN, ReLU & Conv(128, 4, 2), BN, ReLU\\
        \multirow{1}{*}{Layer 2} & Conv(256, 4, 2), BN, ReLU & Conv(256, 4, 2), BN, ReLU & Conv(256, 4, 2), BN, ReLU\\
        \multirow{1}{*}{Layer 3} & Conv(512, 4, 2), BN, ReLU & Conv(512, 4, 2), BN, ReLU & Conv(512, 4, 2), BN, ReLU \\
        \multirow{1}{*}{Layer 4} & Conv(1024, 4, 2), BN, ReLU & Conv(1024, 4, 2), BN, ReLU & Conv(1024, 4, 2), BN, ReLU \\
        \multirow{1}{*}{Layer 5}& Linear(1024, latent\_dim)* & Linear(4096, latent\_dim)* & Linear(16384, latent\_dim)*\\
        \midrule
        Decoder & \\
        Layer 1 & Linear(latent\_dim, 16384) & Linear(latent\_dim, 65536) & Linear(latent\_dim, 65536) \\
        \multirow{1}{*}{Layer 2} & ConvT(512, 3, 2), BN, ReLU & ConvT(512, 4, 2), BN, ReLU & ConvT(512, 5, 2), BN, ReLU \\
        \multirow{1}{*}{Layer 3} & ConvT(256, 3, 2), BN, ReLU & ConvT(256, 4, 2), BN, ReLU & ConvT(256, 5, 2), BN, ReLU \\
        \multirow{1}{*}{Layer 4} & Conv(1, 3, 2), Sigmoid & Conv(3, 4, 1), Sigmoid & ConvT(128, 5, 2), BN, ReLU \\
        \multirow{1}{*}{Layer 5} & - & - & ConvT(3, 5, 1), Sigmoid \\
        \midrule
        \#Parameters & 17.2M & 39.4M & 33.5M \\
        \bottomrule
    \end{tabular}
\end{table}

\begin{table}[ht]
    \centering
    \scriptsize
    \caption{\textbf{ResNet}, neural network architecture used for the residual networks, adapted from \cite{chadebec2022pythae}.}
    \label{tab:pythae-res-net}
    \begin{tabular}{cccc}
    \toprule
         & MNIST & CIFAR10 & CELEBA \\
         \midrule
         Encoder & (1, 28, 28)& (3, 32, 32) & (3, 64, 64) \\
        \multirow{1}{*}{Layer 1} & Conv(64, 4, 2) & Conv(64, 4, 2) & Conv(64, 4, 2) \\
        \multirow{1}{*}{Layer 2} & Conv(128, 4, 2) & Conv(128, 4, 2) & Conv(128, 4, 2)\\
        \multirow{1}{*}{Layer 3} & Conv(128, 3, 2) & Conv(128, 3, 1) & Conv(128, 3, 2)\\
        \multirow{1}{*}{Layer 4} & ResBlock* & ResBlock* & Conv(128, 3, 2)\\
        \multirow{1}{*}{Layer 5} & ResBlock* & ResBlock* & ResBlock* \\
         \multirow{1}{*}{Layer 6} & Linear(2048, latent\_dim)* & Linear(8192, latent\_dim)* & ResBlock* \\
        \multirow{1}{*}{Layer 7}& - & - & Linear(2048, latent\_dim)*\\
        \midrule
        Decoder & \\
        Layer 1 & Linear(latent\_dim, 2048) & Linear(latent\_dim, 8192) & Linear(latent\_dim, 2048) \\
        \multirow{1}{*}{Layer 2} & ConvT(128, 3, 2) & ResBlock* & ConvT(128, 3, 2)\\ 
        \multirow{1}{*}{Layer 3} & ResBlock* & ResBlock* & ResBlock*\\
        \multirow{1}{*}{Layer 4} & ResBlock*, ReLU & ConvT(64, 4, 2) & ResBlock* \\
        \multirow{1}{*}{Layer 5} & ConvT(64, 3, 2), ReLU &  ConvT(3, 4, 2), Sigmoid & ConvT(128, 5, 2), Sigmoid \\
        \multirow{1}{*}{Layer 6} & ConvT(1, 3, 2), Sigmoid & - & ConvT(64, 5, 2), Sigmoid \\
        \multirow{1}{*}{Layer 6} & - & - & ConvT(3, 4, 2), Sigmoid \\
    \midrule
    \#Parameters & 0.73M & 4.8M & 1.6M \\
    \bottomrule
    \multicolumn{4}{l}{*The ResBlocks are composed of one Conv(32, 3, 1) followed by Conv(128, 1, 1) with ReLU.}\\

    \end{tabular}
\end{table}

\begin{table}[ht]
    \centering
    \scriptsize
    \caption{\textbf{FIF-ConvNet}, neural network architecture used for  comparison to Trumpet and Denoising Normalizing Flow.}
    \label{tab:best-models}
    \begin{tabular}{ccc}
    \toprule

                       & MNIST & CELEBA                 \\ \midrule
    Encoder            & (1, 28, 28)                                      & (3, 64, 64)                             \\
    Layer 1            & Conv(32, 4, 2), BN, ReLU                         & Conv(128, 4, 2), BN, ReLU               \\
    Layer 2            & Conv(64, 4, 2), BN, ReLU                         & Conv(256, 4, 2), BN, ReLU               \\
    Layer 3            & Conv(128, 4, 2), BN, ReLU                        & Conv(512, 4, 2), BN, ReLU               \\
    Layer 4            & Conv(256, 4, 2), BN, ReLU                        & Conv(\textit{1024}, 4, 2), BN, ReLU              \\
    Layer 5            & Linear(256, latent\_dim)*                        & Linear(16384, latent\_dim)*             \\
    Layer 6-9 & 4xResBlock(512) & 4xResBlock(256) \\ 
             \midrule

    Decoder            &                                                  &                                         \\
    \textit{Layer 1-4} & \textit{4xResBlock(512)} & \textit{4xResBlock(256)} \\
    Layer 5            & Linear(latent\_dim, \textit{4096})                        & Linear(latent\_dim, 65536)              \\
    Layer 6            & ConvT(256, 3, 2), BN, ReLU                       & ConvT(512, 5, 2), BN, ReLU              \\
    Layer 7            & ConvT(128, 3, 2), BN, ReLU                       & ConvT(256, 5, 2), BN, ReLU              \\
    Layer 8            & ConvT(64, 3, 2), BN, ReLU               & ConvT(128, 5, 2), BN, ReLU              \\
    Layer 9            & Conv(1, 3, 2), Sigmoid                           & ConvT(3, 5, 1), Sigmoid                \\
    \midrule
    \#Parameters & 3.3M & 34.3M \\
    \bottomrule
    \multicolumn{3}{p{8cm}}{*The ResBlocks(inner\_dim) are composed of Linear(latent\_dim, inner\_dim), SiLU, Linear(inner\_dim, inner\_dim), SiLU, Linear(inner\_dim, latent\_dim) with a skip connection.}\\
    \end{tabular}
\end{table}

\begin{figure}[ht]
    \centering
    \captionsetup[subfigure]{position=above, labelformat = empty}
    \adjustbox{minipage=6em,raise=\dimexpr -2.2\height}{\small VAE}
    \subfloat[CELEBA - $\mathcal{N}$]{\includegraphics[width=2.3in]{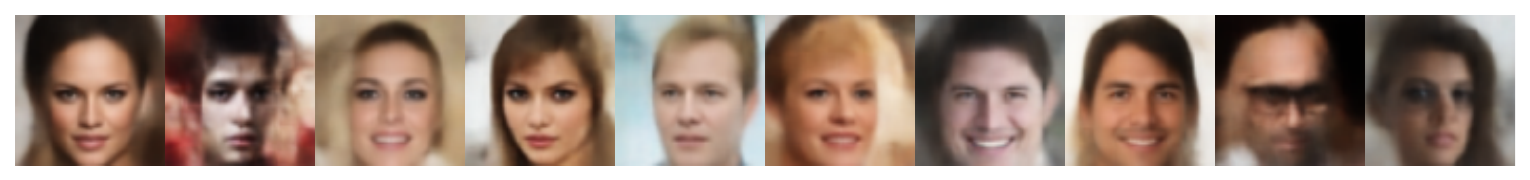}}
    \subfloat[CELEBA - GMM]{\includegraphics[width=2.3in]{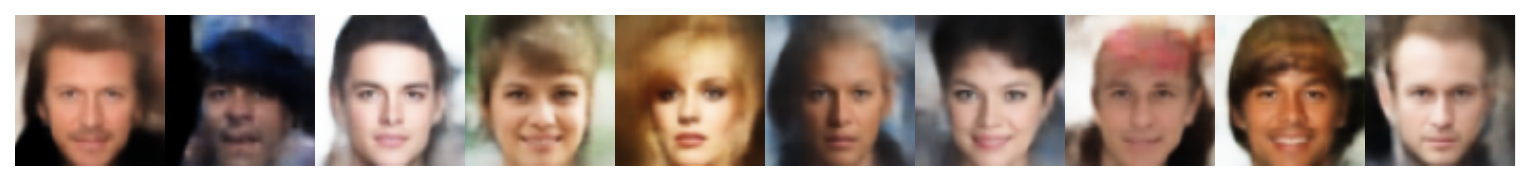}}\\\vspace{-1.3em}
    \adjustbox{minipage=6em,raise=\dimexpr -2.2\height}{\small IWAE}
    \subfloat{\includegraphics[width=2.3in]{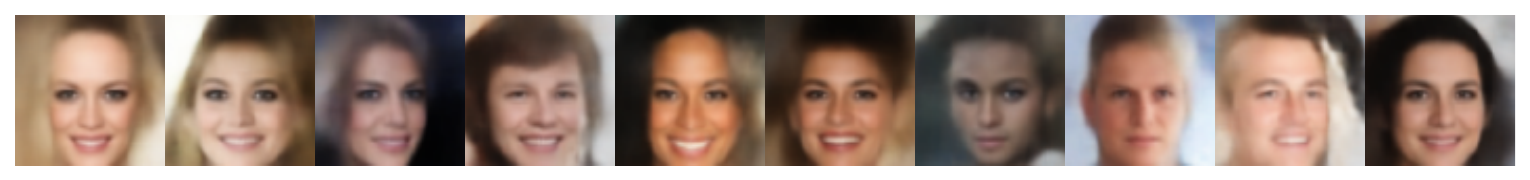}}
    \subfloat{\includegraphics[width=2.3in]{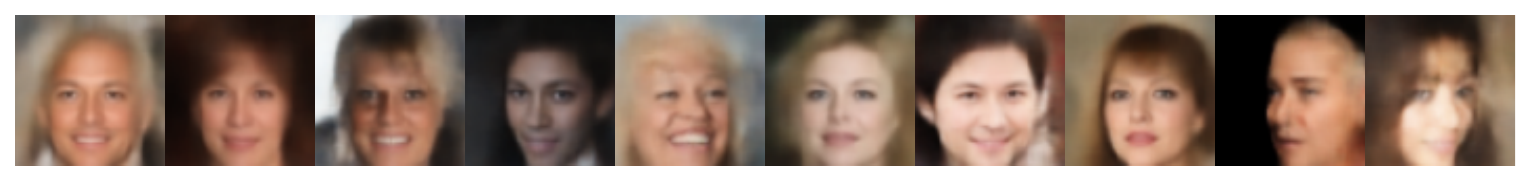}}\\\vspace{-1.3em}
    \adjustbox{minipage=6em,raise=\dimexpr -2.2\height}{\small VAE-lin-NF}
    \subfloat{\includegraphics[width=2.3in]{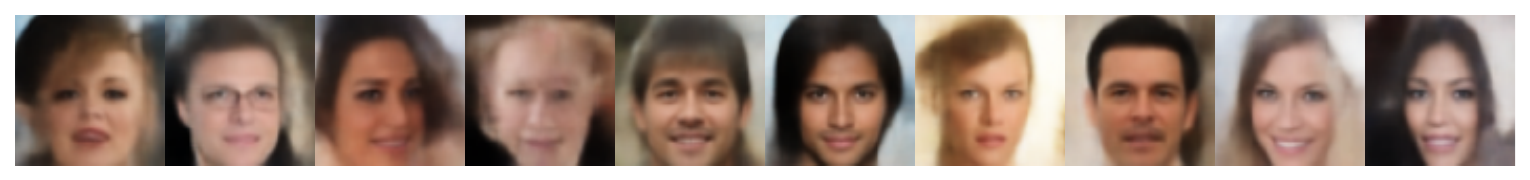}}
    \subfloat{\includegraphics[width=2.3in]{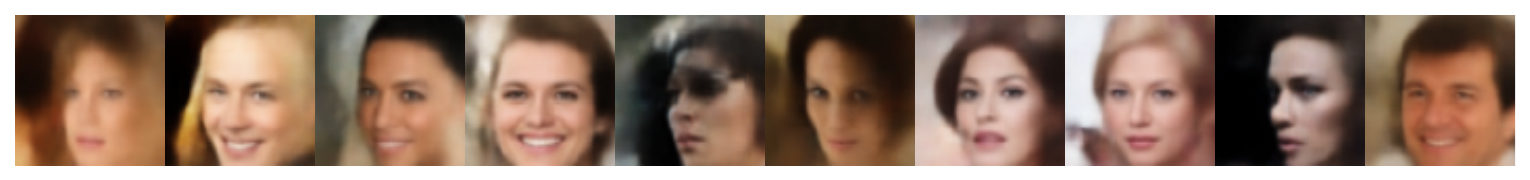}}\\\vspace{-1.3em}
     \adjustbox{minipage=6em,raise=\dimexpr -2.2\height}{\small VAE-IAF}
    \subfloat{\includegraphics[width=2.3in]{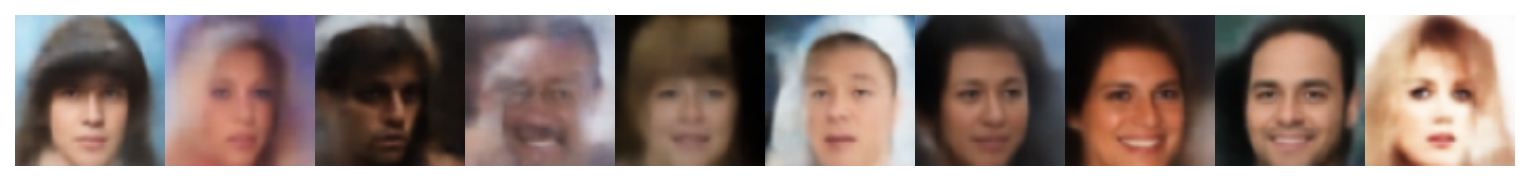}}
    \subfloat{\includegraphics[width=2.3in]{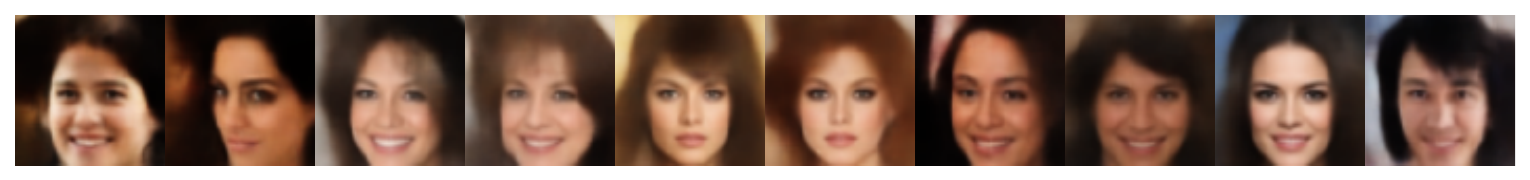}}\\\vspace{-1.3em}
    \adjustbox{minipage=6em,raise=\dimexpr -2.2\height}{\small $\beta$-VAE}
    \subfloat{\includegraphics[width=2.3in]{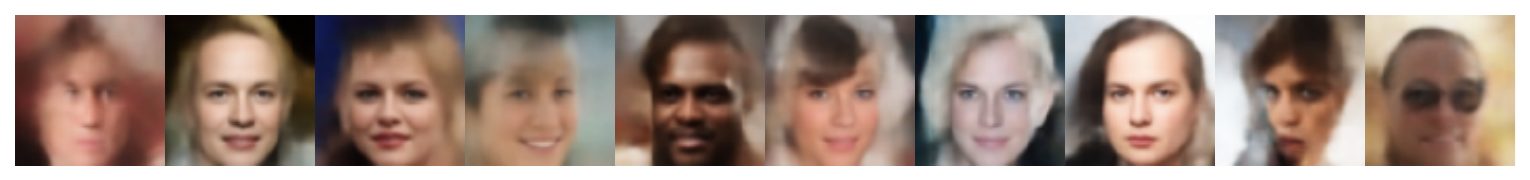}}
    \subfloat{\includegraphics[width=2.3in]{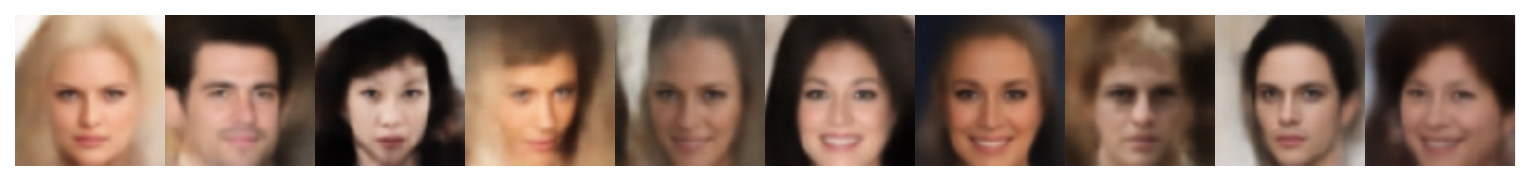}}\\\vspace{-1.3em}
    \adjustbox{minipage=6em,raise=\dimexpr -2.2\height}{\small $\beta$-TC-VAE}
    \subfloat{\includegraphics[width=2.3in]{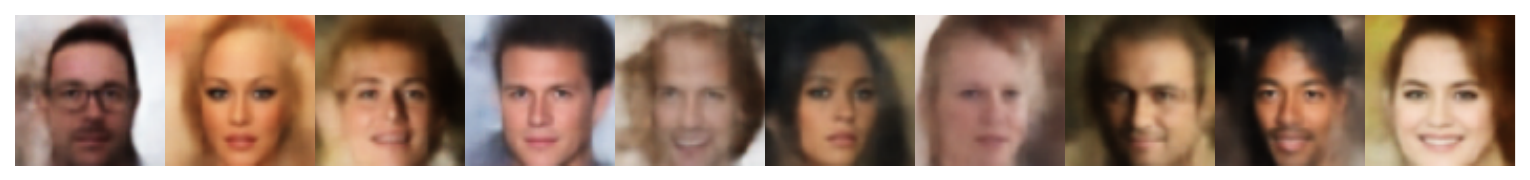}}
    \subfloat{\includegraphics[width=2.3in]{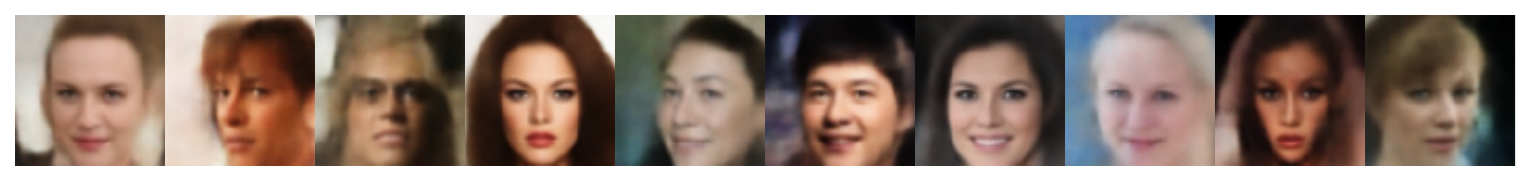}}\\\vspace{-1.3em}
    \adjustbox{minipage=6em,raise=\dimexpr -2.2\height}{\small Factor-VAE}
    \subfloat{\includegraphics[width=2.3in]{plots/generation/celeba/64/normal/factorvae.pdf}}
    \subfloat{\includegraphics[width=2.3in]{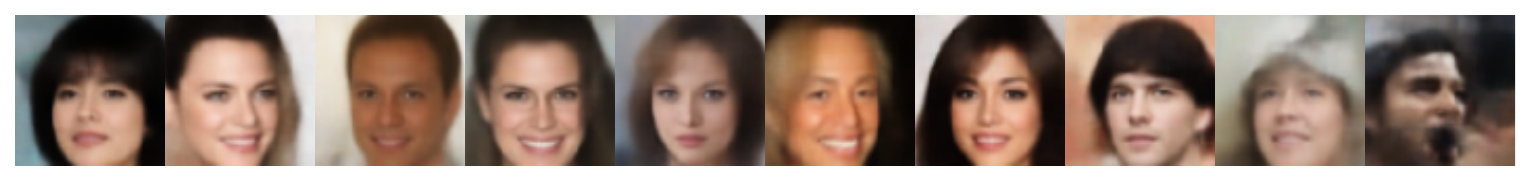}}\\\vspace{-1.3em}
    \adjustbox{minipage=6em,raise=\dimexpr -2.2\height}{\small InfoVAE - IMQ}
    \subfloat{\includegraphics[width=2.3in]{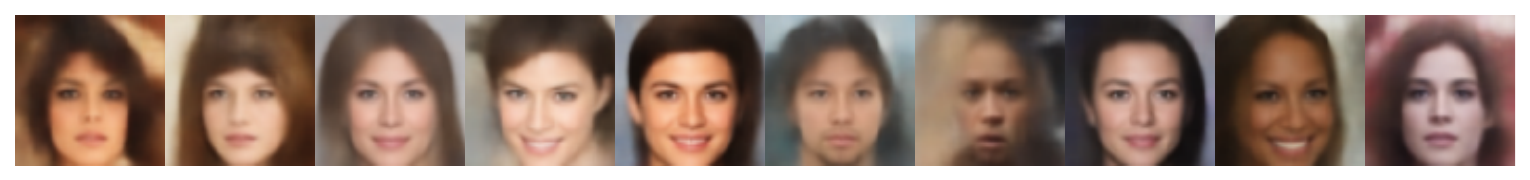}}
    \subfloat{\includegraphics[width=2.3in]{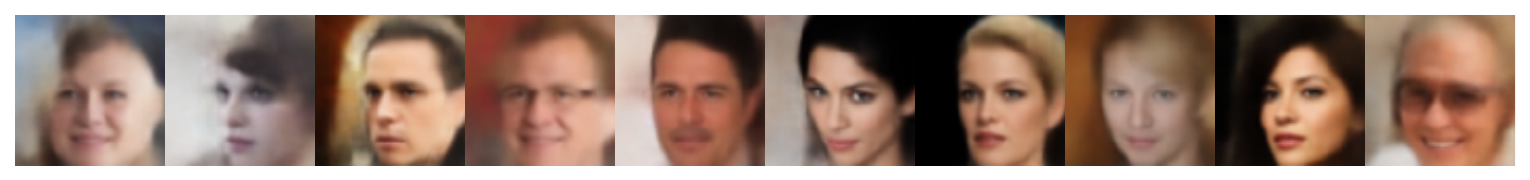}}\\\vspace{-1.3em}
    \adjustbox{minipage=6em,raise=\dimexpr -2.2\height}{\small InfoVAE - RBF}
    \subfloat{\includegraphics[width=2.3in]{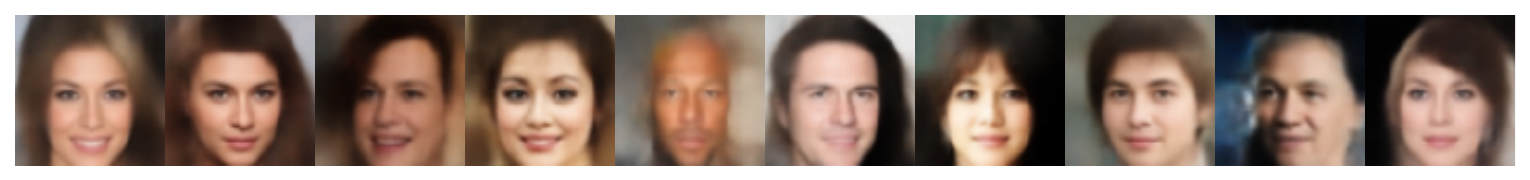}}
    \subfloat{\includegraphics[width=2.3in]{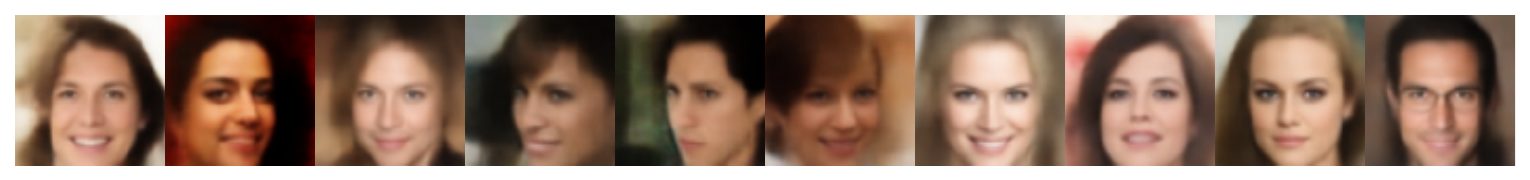}}\\\vspace{-1.3em}
    \adjustbox{minipage=6em,raise=\dimexpr -2.2\height}{\small AAE}
    \subfloat{\includegraphics[width=2.3in]{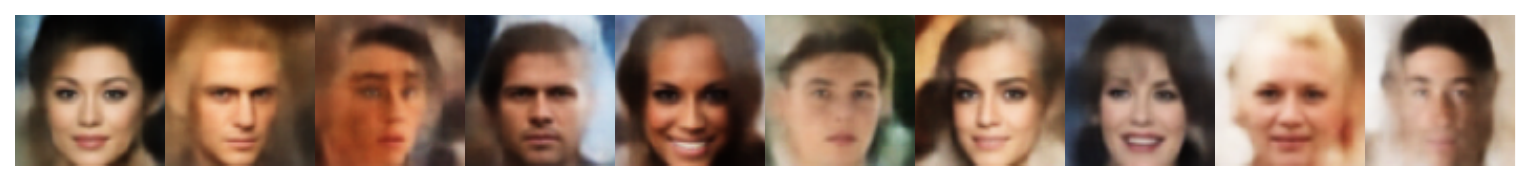}}
    \subfloat{\includegraphics[width=2.3in]{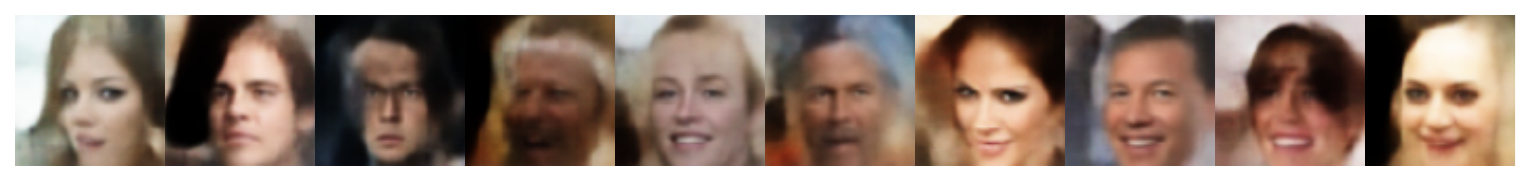}}\\\vspace{-1.3em}
    \adjustbox{minipage=6em,raise=\dimexpr -2.2\height}{\small MSSSIM-VAE}
    \subfloat{\includegraphics[width=2.3in]{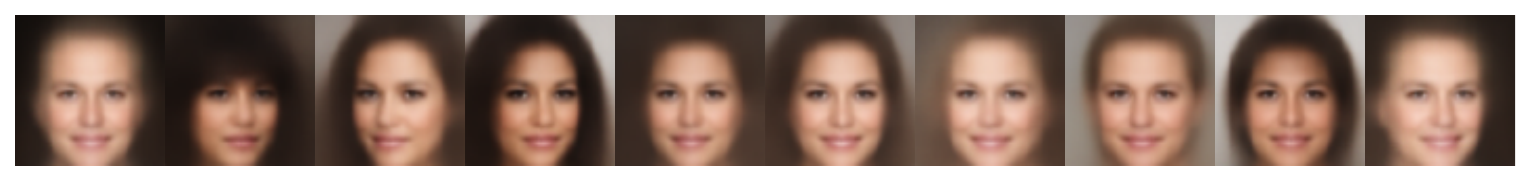}}
    \subfloat{\includegraphics[width=2.3in]{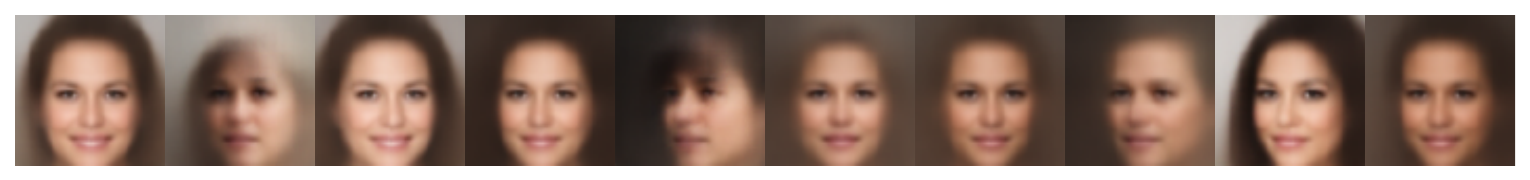}}\\\vspace{-1.3em}
    \adjustbox{minipage=6em,raise=\dimexpr -2.2\height}{\small VAEGAN}
    \subfloat{\includegraphics[width=2.3in]{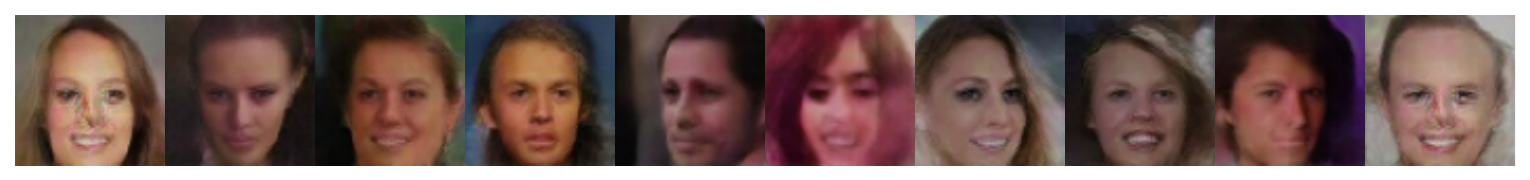}}
    \subfloat{\includegraphics[width=2.3in]{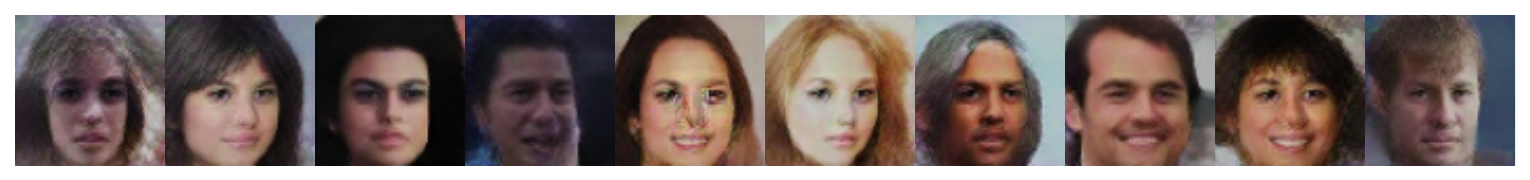}}\\\vspace{-1.3em}
    \adjustbox{minipage=6em,raise=\dimexpr -2.2\height}{\small AE}
    \subfloat{\includegraphics[width=2.3in]{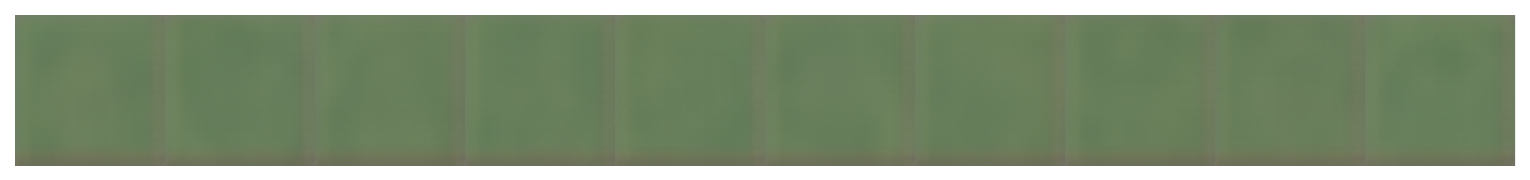}}
    \subfloat{\includegraphics[width=2.3in]{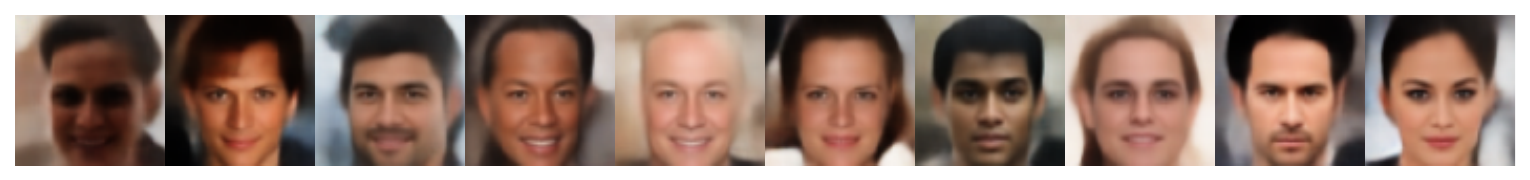}}\\\vspace{-1.3em}
    \adjustbox{minipage=6em,raise=\dimexpr -2.2\height}{\small WAE-IMQ}
    \subfloat{\includegraphics[width=2.3in]{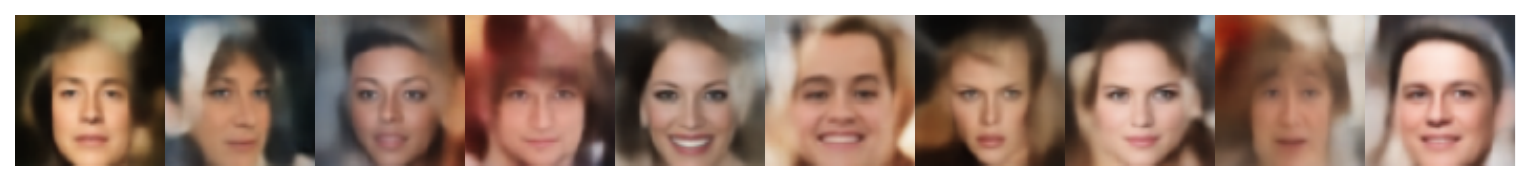}}
    \subfloat{\includegraphics[width=2.3in]{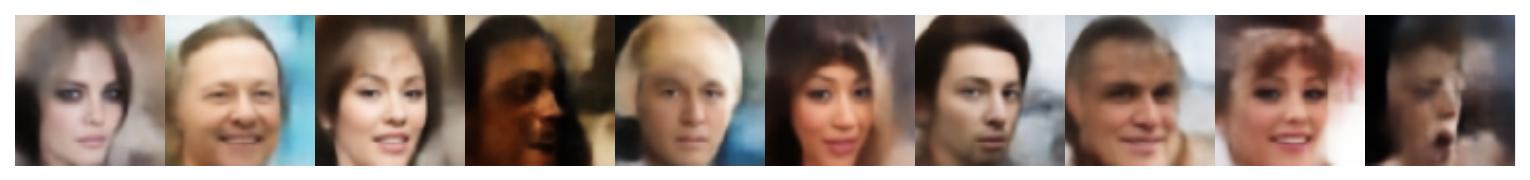}}\\\vspace{-1.3em}
     \adjustbox{minipage=6em,raise=\dimexpr -2.2\height}{\small WAE-RBF}
    \subfloat{\includegraphics[width=2.3in]{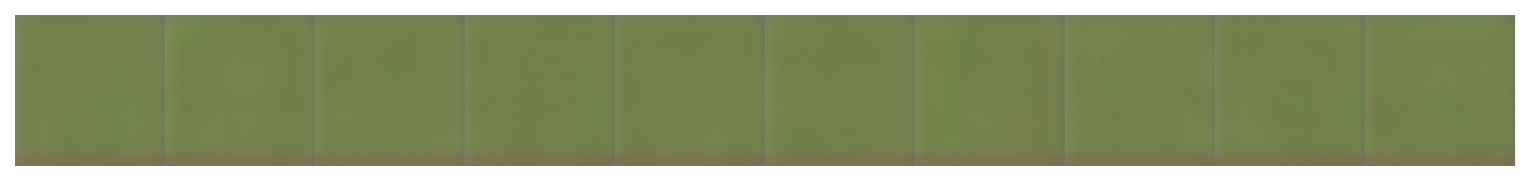}}
    \subfloat{\includegraphics[width=2.3in]{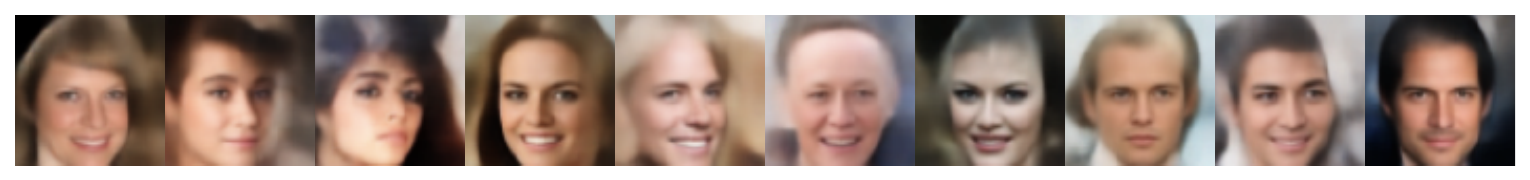}}\\\vspace{-1.3em}
    \adjustbox{minipage=6em,raise=\dimexpr -2.2\height}{\small VQVAE}
    \subfloat{\includegraphics[width=2.3in]{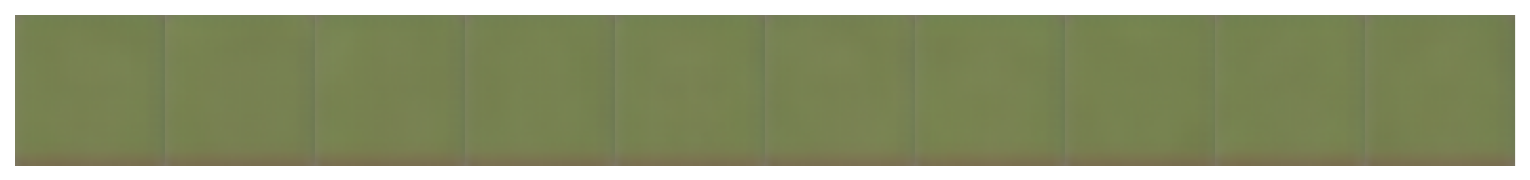}}
    \subfloat{\includegraphics[width=2.3in]{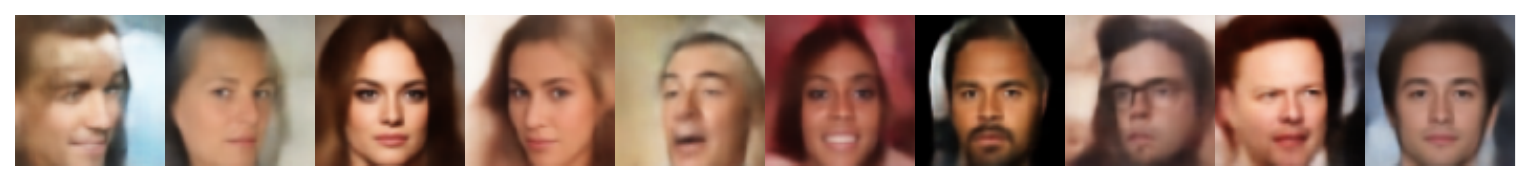}}\\\vspace{-1.3em}
    \adjustbox{minipage=6em,raise=\dimexpr -2.2\height}{\small RAE-L2}
    \subfloat{\includegraphics[width=2.3in]{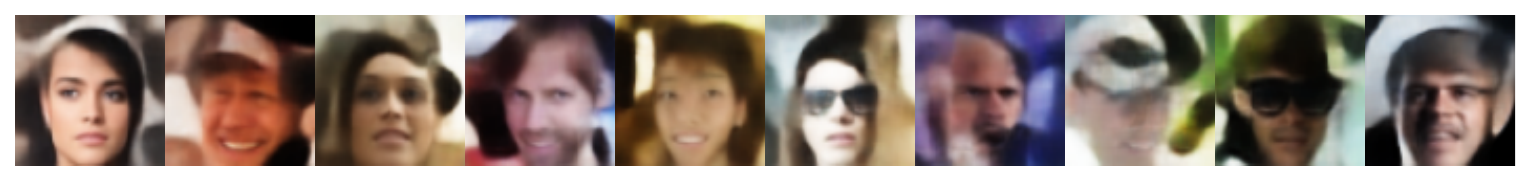}}
    \subfloat{\includegraphics[width=2.3in]{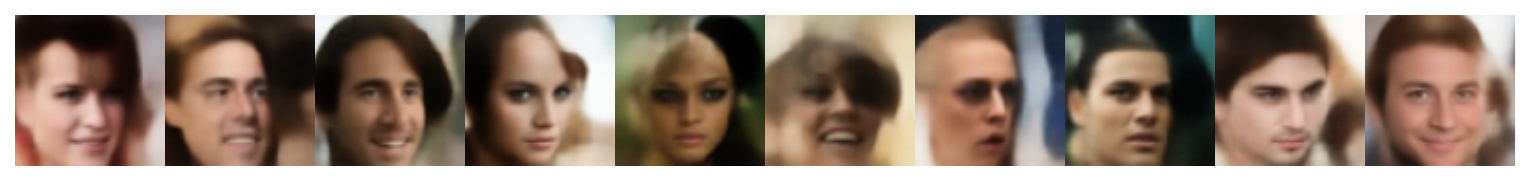}}\\\vspace{-1.3em}    
    \adjustbox{minipage=6em,raise=\dimexpr -2.2\height}{\small RAE-GP}
    \subfloat{\includegraphics[width=2.3in]{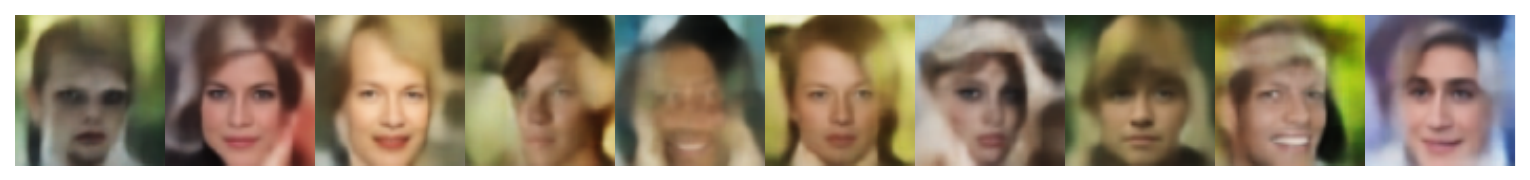}}
    \subfloat{\includegraphics[width=2.3in]{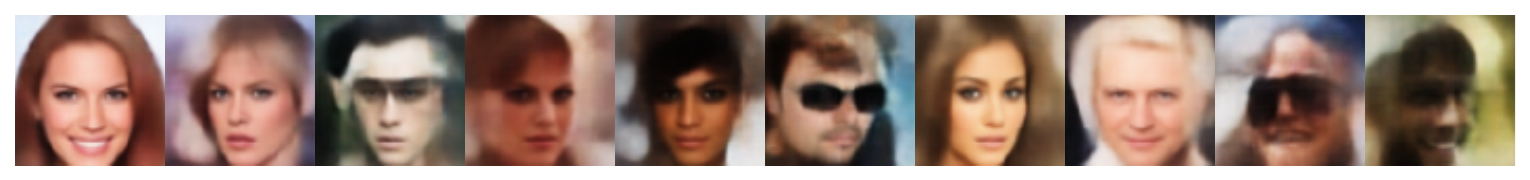}}\\\vspace{-1.2em}
    \adjustbox{minipage=6em,raise=\dimexpr -1.9\height}{\small{FIF (ours)}}
    \subfloat{\includegraphics[width=2.26in]{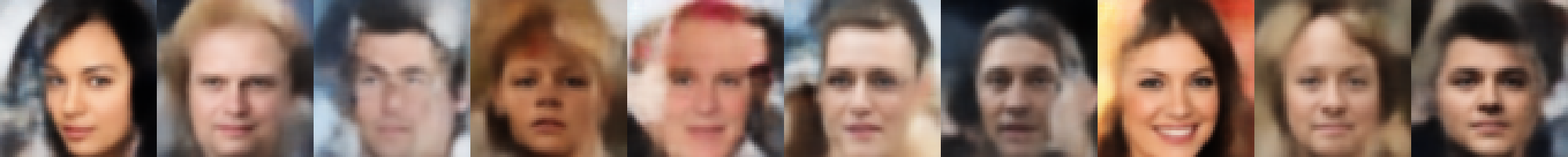}}\hspace{0.01in}
    \subfloat{\includegraphics[width=2.26in]{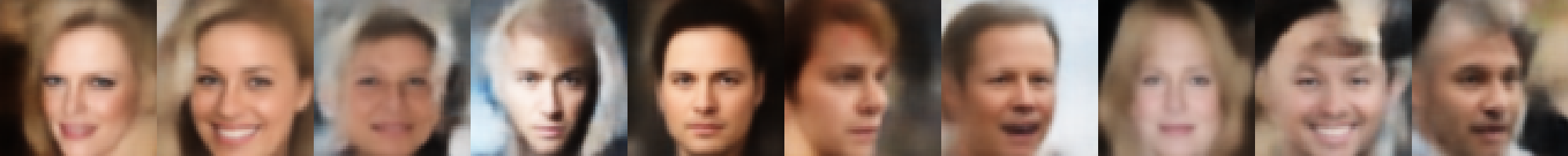}}
    \caption{Uncurated samples from the CelebA ConvNet experiments in the PythAE benchmark. Our model is shown at the bottom, samples from the other models have been taken from \cite{chadebec2022pythae}.}
    \label{fig:pythae-samples}
\end{figure}

\clearpage
\section{Details on pathology induced by curvature}
\label{app:varying-curvature}

As described in \cref{sec:well-behaved-loss}, gradients from the on-manifold loss in \cref{eq:log-det-grad-encoder} cause the learned manifold to increase curvature. This is visualized in the main text in \cref{fig:projection-curved-manifold}, where the left plot shows that this loss leads to ever-increasing curvature. The reason is that the entropy of data projected to a curved manifold is smaller than the entropy of data projected to a flat manifold.

\begin{figure}
    \centering
    \includegraphics[width=\linewidth]{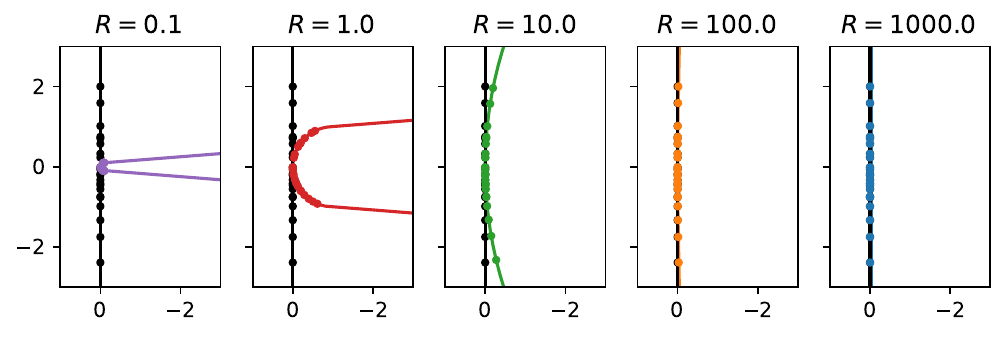}
    \caption{Possible learnt manifolds of varying curvature $1/R$ for data supported on a subspace, where $d=1$ and $D=2$.}
    \label{fig:varying-curvature}
\end{figure}

\begin{figure}
    \centering
    \includegraphics[width=\linewidth]{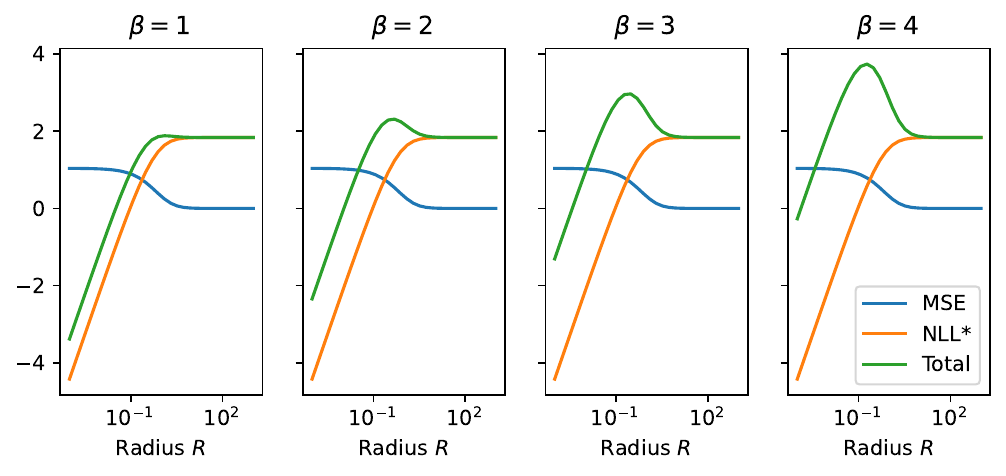}
    \caption{Weighting the reconstruction loss higher does not lead to stable training. The plots show different reconstruction weights $\beta$. In all settings, highly curved manifolds (i.e.~low radius) achieve the lowest loss.}
    \label{fig:varying-curvature-loss}
\end{figure}

Here, we provide intuition for why this happens for synthetic data where $d$ is known and the data could in principle be perfectly reconstructed. \Cref{fig:varying-curvature} depicts projections of the data to possible model manifolds of varying curvature $\kappa$. We parameterize the curvature by varying the radius $R = 1 / \kappa$. One can observe that for with increasing curvature (i.e.~decreasing radius), the data is projected to an increasingly small region. Correspondingly, the entropy $H(\hat p_\text{data}(\hat x))$ of the projected data becomes arbitrarily negative (just like a Gaussian with low standard deviation has arbitrarily negative entropy), lowering the achievable negative log-likelihood.

Adding reconstruction loss alone does not fix this pathology, which we illustrate in \cref{fig:varying-curvature-loss}: The reconstruction loss saturates for small radii, but the best achievable negative log-likelihood (i.e.~the entropy of the data) continues to decrease with the radius. Thus, even when increasing $\beta$, the minimal possible value of the total loss is still achieved by a spuriously curved manifold.

\end{document}